\documentclass[lettersize,journal]{IEEEtran}

\usepackage{tro}
\usepackage{acronym}

\usepackage{subfig}
\usepackage[short]{optidef}
\usepackage{amsmath, amssymb, amsfonts, mathtools}
\usepackage{tcolorbox}
\usepackage{tikz}
\usepackage{colortbl}   
\usepackage{float}
\usepackage{caption} 
\usepackage{cuted}
\usepackage{threeparttable} 

\usepackage{booktabs}   
\usepackage{threeparttable} 

\usetikzlibrary{shapes.geometric,shapes.misc}
\newcommand{\backgroundcolor}{\rowcolor[gray]{0.9}}

\newacro{KKT}{Karush-Kuhn-Tucker}
\newacro{NLP}{nonlinear program}
\newacro{QP}{quadratic program}
\newacro{LP}{linear program}
\newacro{CP}{convex program}
\newacro{NLP}{nonlinear program}
\newacro{IOC}{inverse optimal control}
\newacro{NMPC}{nonlinear model predictive control}
\newacro{SQP}{sequential quadratic programming}
\newacro{OCP}{optimal control problem}
\newacro{BB}{branch-and-bound}
\newacro{MPC}{model predictive control}
\newacro{FONC}{first order necessary condition}
\newacro{MDP}{Markov decision process}
\newacro{IP}{interior point}
\newacro{LOS}{line of sight}
\newacro{PnP}{perspective-n-point}
\newacro{MPCTC}{model predictive contour-tracking control}
\newacro{LA}{look-ahead}
\newacro{FOV}{field-of-view}
\newacro{PUM}{position uncertainty minimization}
\newacro{IMU}{inertial measurement units}
\newacro{FCU}{flight control unit}
\newacro{UAV}{uncrewed aerial vehicles}
\newacro{FPV}{first-person-view}
\newacro{SOTA}{state of the art}
\newacro{TOGT}{time-optimal gate-traversing}
\newacro{TOWP}{time-optimal waypoint-passing}
\newacro{MPCC}{model predictive contouring control}
\newacro{PDOP}{position dilution of precision}
\newacro{CRLB}{Cramer-Rao lower bound}
\newacro{FIM}{fisher information matrix}
\newacro{PDOP}{position dilution of precision}
\newacro{EKF}{extended Kalman filter}
\newacro{RL}{reinforcement learning}
\newacro{MOCAP}{motion capture}

\newtcolorbox{algbox}[2][]{%
	colframe=gray!15, 
	colback=gray!05, 
	coltitle=black, 
	fonttitle=\bfseries,
	colupper=black, 
	title=#2, 
	arc=1mm,
	boxrule=0.0mm, 
	left=0pt,
	right=0pt,
	top=-5pt,
	bottom=0pt,
	#1 
}

\IEEEtitleabstractindextext{
  \begin{center}
    \centering
    \includegraphics[width=\textwidth]{your_figure}
    \captionof{figure}{Your teaser image caption.}
    \label{fig:teaser}
  \end{center}

  \begin{abstract}
    Your abstract text goes here.
  \end{abstract}
}

\begin{document}


\title{Perception-Aware Time-Optimal Planning for Quadrotor Waypoint Flight}


\author{Chao Qin$^{1}$, Jiaxu Xing$^{2}$, Rudolf Reiter$^{2}$, Angel Romero$^2$, Yifan Lin$^{1}$, Hugh H.-T. Liu$^{1}$ and Davide Scaramuzza$^{2}$
\thanks{$^{1}$ 
Authors are with the University of Toronto Institute for Aerospace Studies (UTIAS), Canada}
\thanks{$^{2}$ The Robotics and Perception Group, University of
Zurich, Switzerland (https://rpg.ifi.uzh.ch).}
}

\markboth{IEEE Journal of \LaTeX\ Class Files, February~2026}%
{Shell \MakeLowercase{\textit{et al.}}: A Sample Article Using IEEEtran.cls for IEEE Journals}


\maketitle

\begin{strip}
    \centering
    \vspace{-60pt} 
    \begin{minipage}{\textwidth}
        \centering
        \tikzstyle{every node}=[font=\footnotesize]
        \begin{tikzpicture}[>=stealth]
            \node [inner sep=0pt, outer sep=0pt] (img1) at (0,0) {\includegraphics[width=5.8cm]{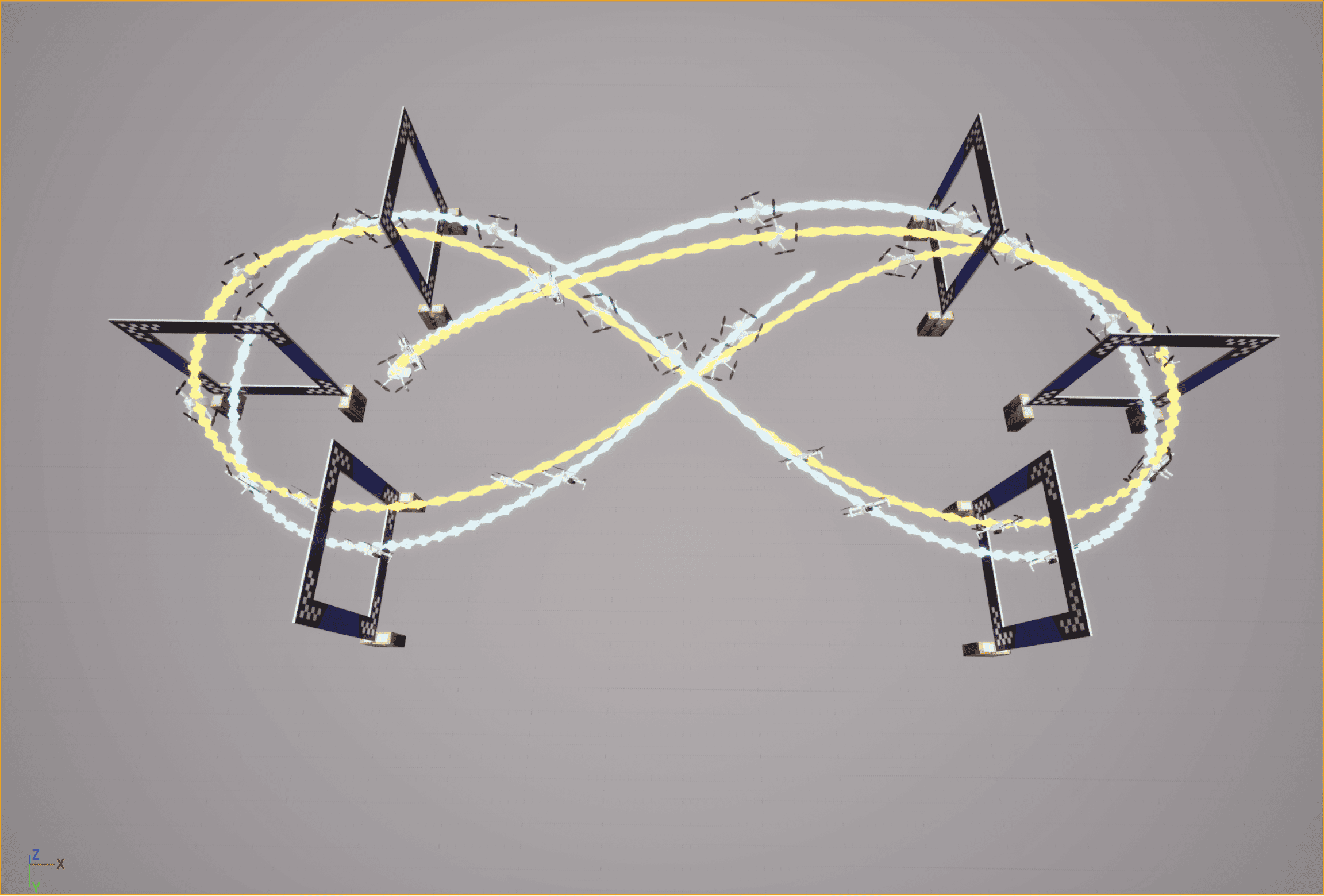}};
            \node [inner sep=0pt, outer sep=0pt, right=1mm of img1] (img2) {\includegraphics[width=5.8cm]{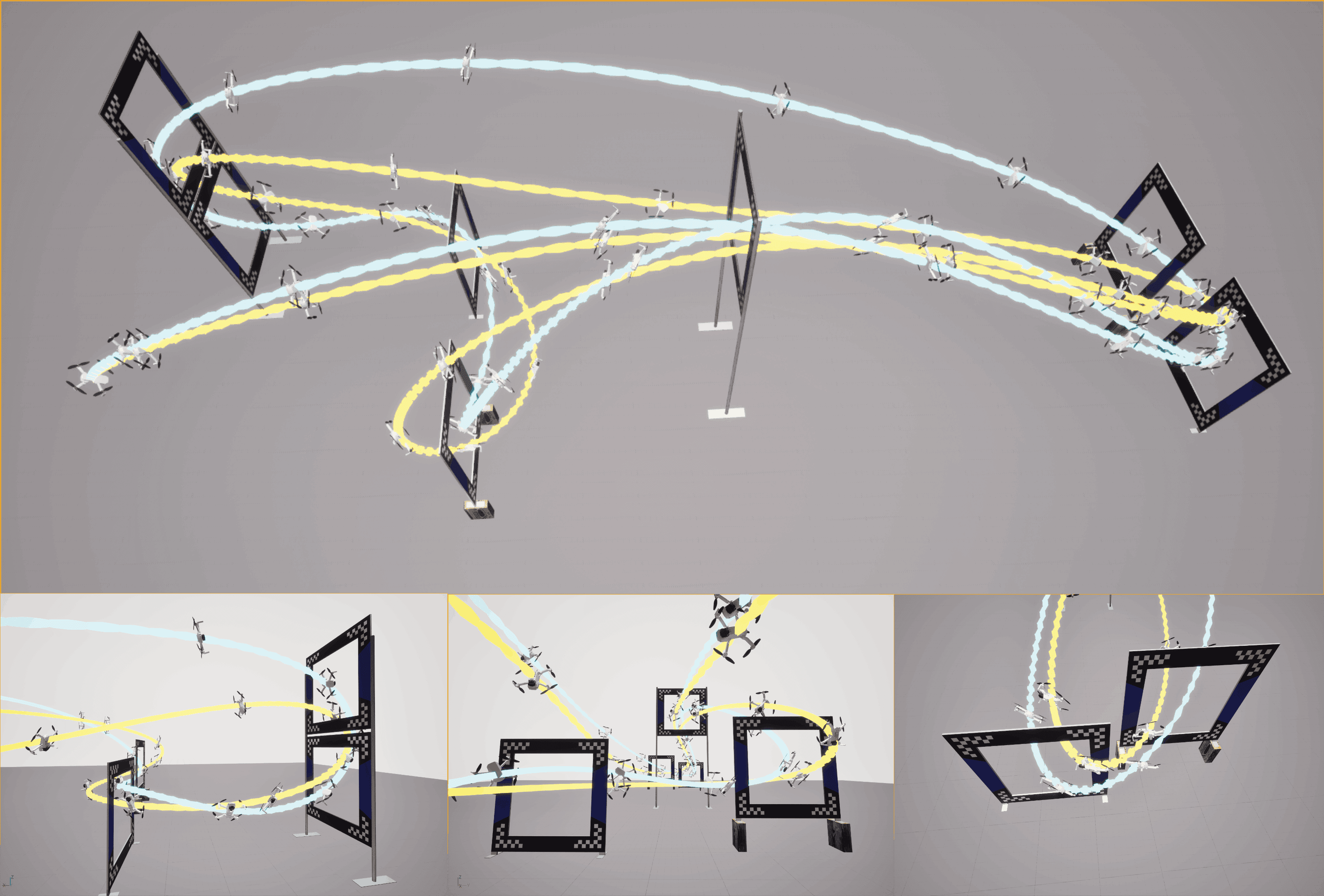}};
            \node [inner sep=0pt, outer sep=0pt, right=1mm of img2] (img3) {\includegraphics[width=5.8cm]{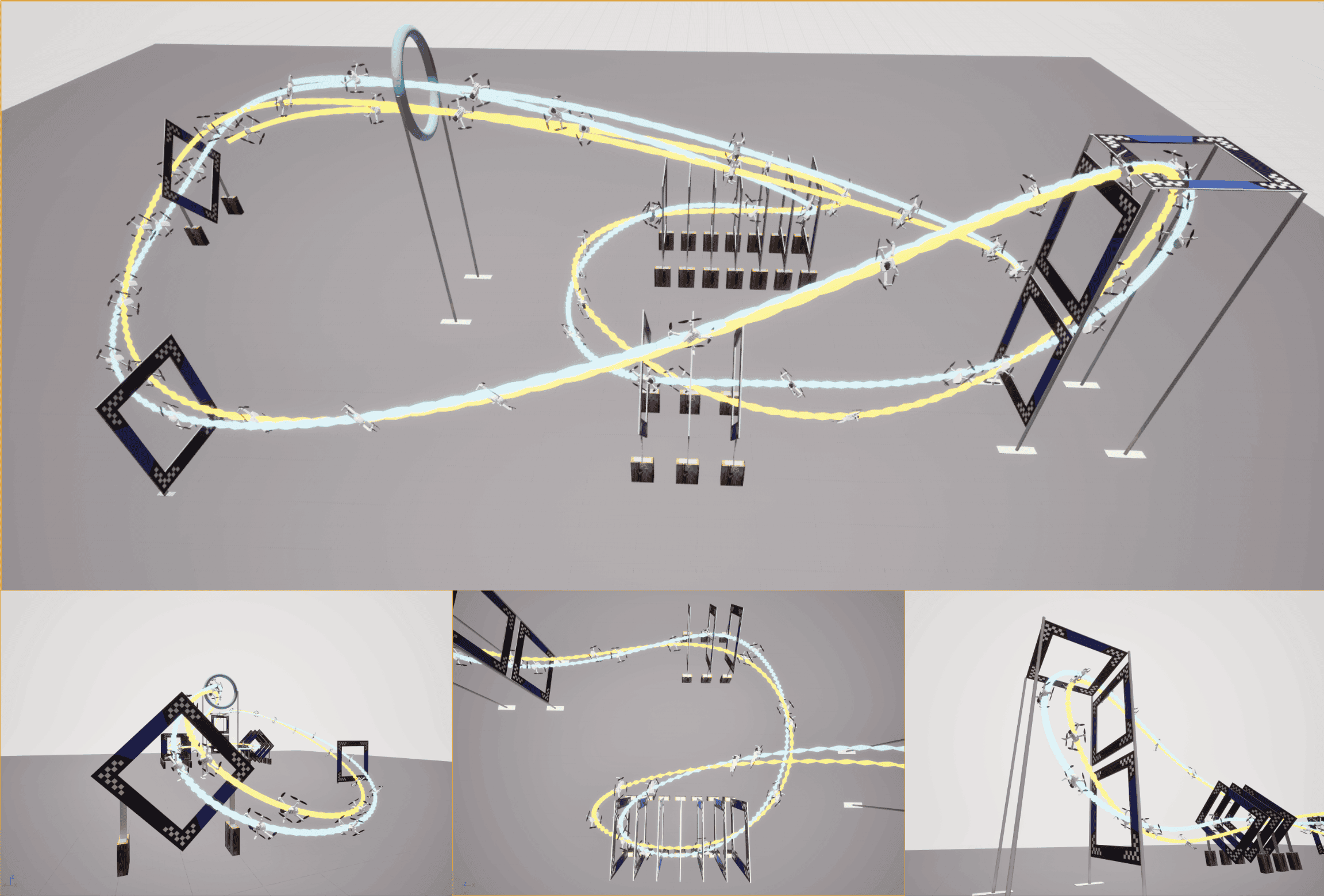}};

            \node[below=1mm of img1](text1){(a) Figure-8 Track \cite{xing2024multi}};
            \node[below=1mm of img2](text2){(b) TII Track \cite{bosello2025your}};
            \node[below=1mm of img3](text3){(c) Dive Track};
        \end{tikzpicture}
        \captionof{figure}{Our framework addresses the perception-aware, time-optimal planning problem for agile quadrotors. It supports both \ac{TOWP} (blue) and \ac{TOGT} (yellow) modes on a wide range of race tracks, and three perception objectives are proposed to ensure quality visual inputs. Tracks (a) and (b) demonstrate that our method can handle tracks defined in previous literature, while track (c) illustrates its superior flexibility in handling more complex scenarios where gates are in diverse shapes, sizes, and orientations.}
        \label{fig:figure8_tii_dive}
    \end{minipage}

\end{strip}



\begin{abstract}

Agile quadrotor flight pushes the limits of control, actuation, and onboard perception. While time-optimal trajectory planning has been extensively studied, existing approaches typically neglect the tight coupling between vehicle dynamics, environmental geometry, and the visual requirements of onboard state estimation. As a result, trajectories that are dynamically feasible may fail in closed-loop execution due to degraded visual quality. This paper introduces a unified time-optimal trajectory optimization framework for vision-based quadrotors that explicitly incorporates perception constraints alongside full nonlinear dynamics, rotor actuation limits, aerodynamic effects, camera field-of-view constraints, and convex geometric gate representations.
The proposed formulation solves minimum-time lap trajectories for arbitrary racetracks with diverse gate shapes and orientations, while remaining numerically robust and computationally efficient. We derive an information-theoretic position uncertainty metric to quantify visual state-estimation quality and integrate it into the planner through three perception objectives: position uncertainty minimization, sequential field-of-view constraints, and look-ahead alignment. This enables systematic exploration of the trade-offs between speed and perceptual reliability.
To accurately track the resulting perception-aware trajectories, we develop a model predictive contouring tracking controller that separates lateral and progress errors. Experiments demonstrate real-world flight speeds up to 9.8 m/s with 0.07 m average tracking error, and closed-loop success rates improved from 55\% to 100\% on a challenging Split-S course. The proposed system provides a scalable benchmark for studying the fundamental limits of perception-aware, time-optimal autonomous flight.
\end{abstract}
\begin{IEEEkeywords}
Time-optimal control, autonomous aerial vehicles, motion planning, quadrotor control.
\end{IEEEkeywords}

%
\section{Introduction}

Quadrotors are among the most agile \ac{UAV}s and have been widely deployed in inspection, delivery, and scene reconstruction and mapping tasks \cite{xing2023autonomous}. 
These applications benefit from the platform’s high maneuverability, precise control authority, and rich onboard sensing capabilities. 
\ac{FPV} Drone racing represents an extreme operating regime of these same capabilities, pushing both actuation and perception systems to their limits under aggressive maneuvers and strict real-time constraints \cite{hanover2024autonomous}. 
Beyond competition, racing provides a principled benchmark for quantifying the maximum achievable performance of autonomous aerial systems within known environments.

Understanding these performance limits requires solving the time-optimal trajectory planning problem. 
Establishing theoretical minimum lap times enables objective comparison across control strategies, whether model-based, learning-based, or optimization-based. 
In real-world vision-based systems, maximum achievable performance is not determined solely by time-optimal control. 
It is also influenced by perceptual constraints such as the minimum visual quality required for reliable localization and by geometric constraints imposed by gates and tunnels \cite{kaufmann2023champion, xing2024bootstrapping}. 
These coupled effects motivate the concept of \textit{perception-aware flight} \cite{li2021pcmpc,falanga2018pampc,song2022learning}. 
Additionally, environmental constraints such as narrow passages and complex gate geometries must be explicitly considered to guarantee collision-free and effort-efficient flight \cite{qin2023time}. 
Characterizing and optimizing these tightly coupled dynamics–perception–geometry interactions remains challenging due to their high dimensionality and nonlinear structure.

To bridge this gap, we present a unified time-optimal trajectory planning framework that explicitly incorporates the critical factors of vision-based autonomous flight. 
Our formulation jointly accounts for full quadrotor dynamics, rotor actuation limits, aerodynamic effects, camera \ac{FOV} constraints, geometric constraints, and perception objectives. 
As illustrated in Fig.~\ref{fig:figure8_tii_dive}, the proposed framework computes theoretical minimum lap trajectories across diverse racetracks, including those used in prior literature \cite{xing2024multi, bosello2025your}, as well as tracks with significantly more flexible gate configurations. 
Importantly, gates can be modeled as convex polygons or polytopes (Fig.~\ref{fig:gate_shapes}), enabling broad applicability. 
The resulting optimization is numerically stable and computationally efficient. 
To the best of our knowledge, this is the first framework capable of jointly handling all the aforementioned factors within a scalable formulation.

%
%
%

A second major contribution of this work is the formal verification and integration of three core perception objectives for vision-based autonomous aerial vehicles (Fig.~\ref{fig:perceptual_objectives}). 
We derive an information-theoretic metric for vision-based position uncertainty and incorporate it into trajectory optimization through:
\begin{itemize}
    \item \textbf{\ac{PUM}:} minimizing the theoretical lower bound on positioning uncertainty,
    \item \textbf{sequential \ac{FOV} constraints (FOV):} enforcing visibility of sequential targets via tunable soft penalties,
    \item \textbf{\ac{LA}:} aligning camera orientation with future trajectory segments, inspired by human piloting behavior \cite{pfeiffer2021human}.
\end{itemize}
These objectives can be applied individually or in combination to systematically balance speed and perceptual robustness. 
Closed-loop experiments demonstrate that incorporating perception objectives increases the success rate from $55\%$ to $100\%$ on the Split-S track.

%
%

\begin{figure*}[!htbp]
\centering
\tikzstyle{every node}=[font=\footnotesize]
\begin{tikzpicture}[>=stealth]

\node [inner sep=0pt, outer sep=0pt] (img1) at (0,0)
{\includegraphics[width=4.0cm]{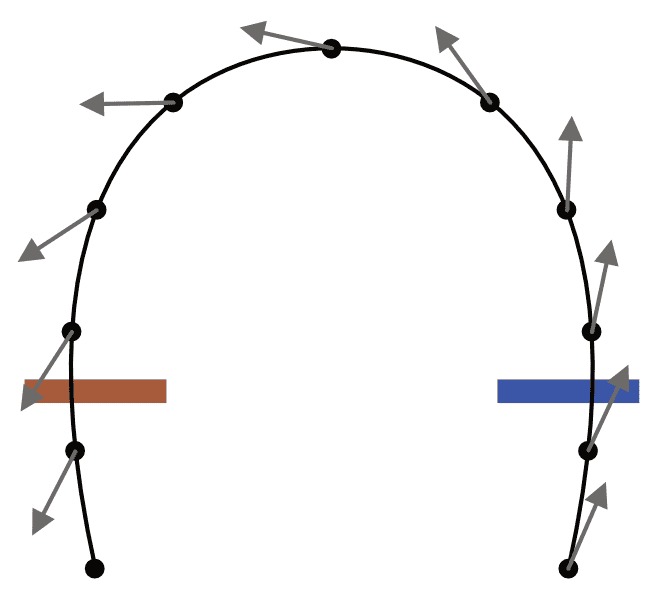}};
\node [inner sep=0pt, outer sep=0pt, right=1mm of img1] (img2) {\includegraphics[width=4.0cm]{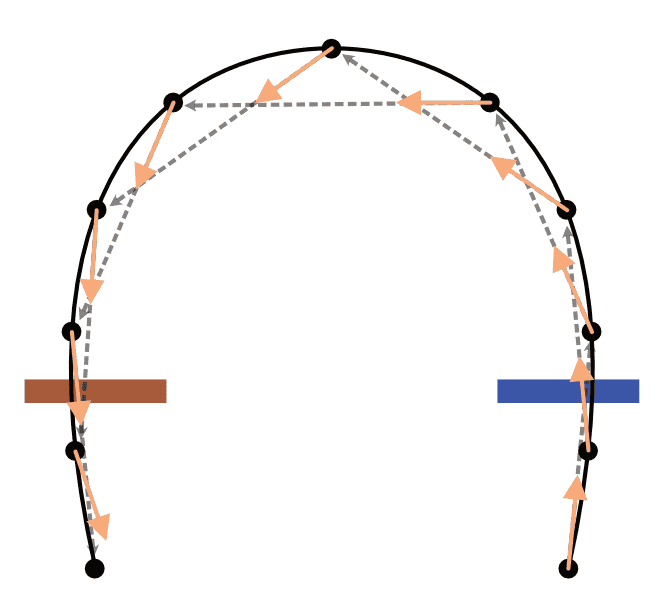}};
\node [inner sep=0pt, outer sep=0pt, right=1mm of img2] (img3) {\includegraphics[width=4.0cm]{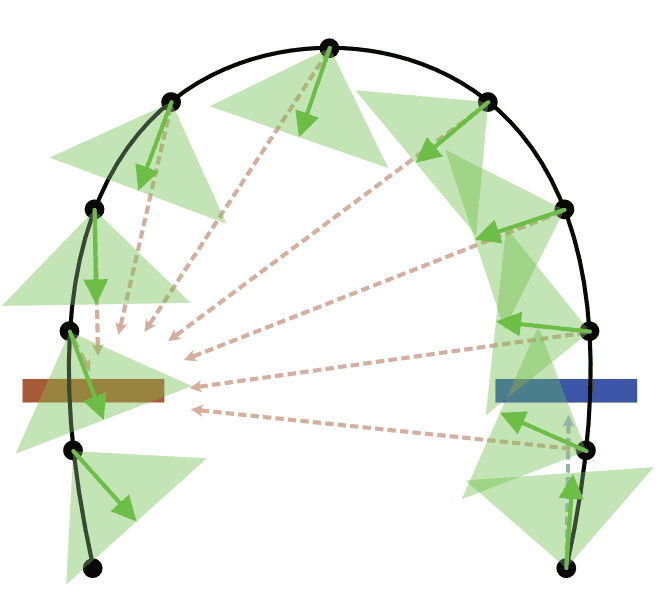}};
\node [inner sep=0pt, outer sep=0pt, right=1mm of img3] (img4) {\includegraphics[width=4.0cm]{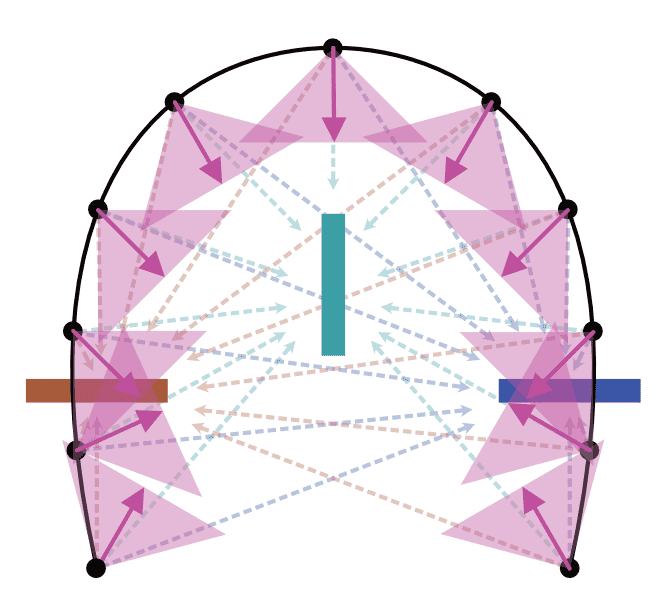}};

\node[below=1mm of img1](text1){(a) None};
\node[below=1mm of img2](text2){(b) LA};
\node[below=1mm of img3](text3){(c) FOV};
\node[below=1mm of img4](text2){(b) PUM};
\end{tikzpicture}
\caption{Illustrations of three perceptual objectives verified in the proposed framework. The
arrows represent the camera optical directions, and the triangles with filled
colors indicate the camera’s \ac{FOV}. (a) In the absence of specific perception objectives, the camera's orientation is purely governed by time-optimal control, which could be detrimental to visual perception; (b) \ac{LA} encourages the \ac{UAV} to look at a future waypoint; (c) \ac{FOV} keeps one or multiple target landmarks (e.g., the next racing gate) visible; (d) \ac{PUM} trades off time optimality and information-theoretic position uncertainty for robust closed-loop execution.}
\label{fig:perceptual_objectives}
\vspace{-1em}
\end{figure*}

To accurately execute the resulting perception-aware trajectories, we further propose a \ac{MPCTC} that decomposes tracking error into longitudinal and lateral components, allowing independent weighting of path adherence and forward progression and thereby overcoming the corner-cutting problem~\cite{song2023reaching} of a standard \ac{MPC}. 
In real-world experiments at speeds up to $9.8\,\text{m/s}$, the controller achieves an average tracking error of $0.07\,\text{m}$ and a peak error of $0.23\,\text{m}$, ensuring reliable satisfaction of perception constraints.

%

The main contributions of this paper are summarized as follows:
\begin{itemize}
    \item A robust and efficient time-optimal quadrotor trajectory planner that jointly integrates nonlinear dynamics, actuation limits, aerodynamic effects, geometric constraints, and perception objectives.
    \item An information-theoretic derivation of vision-based position uncertainty and three perception-aware objectives that directly enhance visual robustness.
    \item A complete planning and control pipeline enabling accurate, perception-aware, time-optimal flight validated in extensive simulation and real-world experiments.
\end{itemize}

%

The remainder of this paper is structured as follows. 
Section~\ref{sec:related_works} reviews related work. 
Section~\ref{sec:problem_formulation} formulates the problem. 
Section~IV derives the position uncertainty metric. 
Section~\ref{sec:numerical_optimization} presents the trajectory optimization framework. 
Section~\ref{sec:mpctc} introduces the tracking controller. 
Section~\ref{sec:results} reports experimental results. 
Finally, Section~\ref{sec:conclusion} discusses limitations and connections to \ac{RL}.


\begin{figure}[b]
\centering
\includegraphics[width=0.35\textwidth]{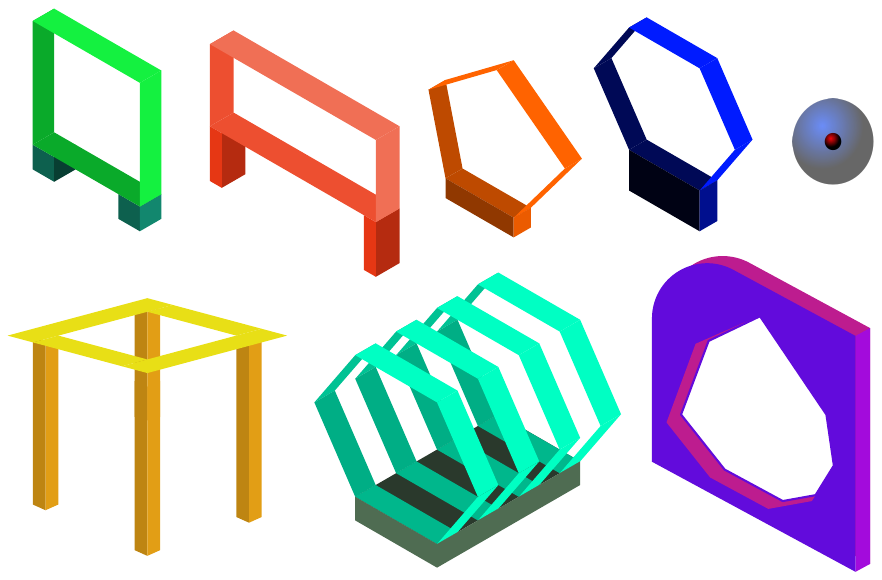}
\caption{The framework accommodates gates representable by convex shapes, which constitute the majority of gates used in drone racing \cite{qin2023time}.}\label{fig:gate_shapes}
\end{figure}

\section{Related Works}\label{sec:related_works}

\subsection{Time-Optimal Trajectory Planning}

There are two mainstream methods to plan time-optimal quadrotor trajectories: direct methods, i.e., discretization combined with \ac{NLP}~\cite{diehl2011numerical}, and polynomial approaches \cite{mellinger2011minimum}, which utilize the differentiable flatness property of quadrotors to formulate a polynomial trajectory.

\textbf{Direct methods.}
The most common formulation of direct methods is via multiple shooting~\cite {bock1984multiple}, which employs discrete-time trajectories to represent the quadrotor's controls and states as decision variables.
Foehn et al. \cite{foehn2021time} present the first time-optimal planner based on a multiple-shooting framework that addresses the quadrotor waypoint-flight problem in drone racing. However, this method is time-consuming, often requiring minutes or even hours to compute results. Zhou et al. \cite{zhou2023efficient} optimize the problem structure by explicitly assigning fixed waypoint constraints to specific shooting nodes, thereby reducing computation time. Shen et al. \cite{shen2024sequential} and the authors in~\cite{dong2026fast} reformulate the original problem into convex relaxations for computational efficiency. 
An alternative approach to enhance numerical tractability involves projecting the system dynamics onto a reference path and maximizing the path progress as a surrogate for time-optimality, as discussed in \cite{spedicato2017minimum, arrizabalaga2022towards, fork2023euclidean}. In addition, by slightly compromising model fidelity, Romero et al. \cite{romero2022time} and Teissing et al. \cite{teissing2024real} achieved real-time performance. The online replanned trajectory can seemingly be incorporated into certain time-optimal controllers \cite{romero2022model, kulic2024mpcc++, ji2020cmpcc}, formulated as \ac{MPCC} problems, thereby enabling dynamic obstacle avoidance. 

\textbf{Polynomial approach.}
The polynomial approach typically offers superior computational efficiency, taking merely seconds or milliseconds of runtime. Mellinger et al. \cite{mellinger2011minimum} introduced the well-known minimum-snap trajectory planner. 
Richter et al. \cite{richter2016polynomial} further eliminated waypoint constraints, significantly reducing computation time.
Meanwhile, Mueller et al. \cite{mueller2013computationally} derived an analytical solution for the coefficients of a polynomial that connects two states. 
Wang et al. \cite{wang2022geometrically} extended this concept to waypoint flight, enabling lightweight computation of dynamically feasible waypoint trajectories for quadrotors. Subsequently, Han et al. \cite{han2021fast}, Qin et al. \cite{qin2023time}, and Wang et al. \cite{wang2023polynomial} extended this framework to tackle various challenges in drone racing, including collision avoidance and the traversal of both static and dynamic gates. However, polynomial approaches often suffer from inherent smoothness, resulting in significantly longer flight times compared to discretization-based methods. To address this issue, Qin et al. \cite{qin2024time} proposed a solution that achieves 1--3\% relative suboptimality to the global optimal time by appropriately increasing the number of polynomial segments. They also provide an analytical study to justify the choice of the segment number.

\textbf{Other methods.}
While direct multiple shooting algorithms use gradient-based optimization techniques, Penicka et al. \cite{penicka2022minimum} introduced a sampling-based framework that accounts for the full quadrotor dynamics. Song et al. \cite{song2021autonomous} presented a deep \ac{RL} based approach to compute near-time-optimal trajectories for a fixed track layout. 
The authors in~\cite{reiter2025unifying} propose a multi-phase MPC with a polynomial model over a second long-term horizon, but consider only tracking problems.

\subsection{Percepion-Aware Flights}
Perception-awareness is crucial for mission success, as perception quality often determines the positioning robustness and precision. Previous literature handles two primary perception objectives. 

\textbf{Target visibility.}
The first objective emphasizes maintaining visibility of specific objects of interest, ensuring that the target remains within the \ac{LOS} \cite{falanga2018pampc}. This is particularly crucial for applications such as target following \cite{ji2022elastic, wang2023svpto, wang2021visibility} and inspection tasks \cite{zhao2024optimized, panigati2025drone, kim2023visibility}. 
For safe navigation in unknown and obstacle-rich environments, the interest is extended to unknown regions that might potentially put the vehicle in danger \cite{yu2022cpa, zhou2021raptor}.
Additionally, the target to keep in sight can be as simple as the next waypoint just to counteract unexpected scenarios \cite{song2023learning}.

\textbf{Uncertainty reduction.}
The other type of perception objective aims to enrich landmark observation from the imaging sensor and thereby reduce localization uncertainty \cite{zhang2020fisher, kaufmann2023champion, lim2023fisher, papachristos2019localization}. 
It was verified that this strategy can effectively mitigate position estimate drift in texture-less areas where effective feature matching is scarce~\cite{bartolomei2020perception}.

\textbf{Perception-aware time-optimal flights.}
When time is of the essence, it is desirable to optimize both trajectory duration and perception quality simultaneously. 
Murali et al. \cite{murali2019perception} proposed an aggressive trajectory planner that optimizes the total trajectory duration while maximizing the co-visibility of multiple known 3D landmarks across consecutive frames. 
However, their objective is not to operate the vehicle at its actuation limit, which results in suboptimal durations. 
Spasojevic et al. \cite{spasojevic2020perception} introduced an efficient time-optimal path parameterization algorithm for quadrotors under limited FOV constraints. 
Their planning is divided into two stages: a path-planning phase in a lower-dimensional space that ensures collision avoidance, followed by time-parameterization that respects the quadrotor's kinodynamic constraints. 
However, this method cannot guarantee that the resulting path is time-optimal in both temporal and spatial profiles, as the latter phase has no access to modify the given path. Additionally, it simplifies the camera's \ac{FOV} constraint as a symmetric conical region, which does not accurately represent real image boundaries. 


%
\section{Problem Formulation}\label{sec:problem_formulation}

This section formulates the time-optimal trajectory optimization problem. We first define the quadrotor's equation of motion and system constraints involving single-rotor thrust limits, geometrical constraints, and FOV constraints of the imaging sensor, followed by introducing three perception objectives.

\subsection{Quadrotor Dynamics}\label{subsec:dynamics}

The quadrotor's state space is described as:
\begin{align}
  \mathbf{x}^{\top}=\left[ {\mathbf{p}^{w}}^{\top},{\mathbf{q}_{wb}}^{\top},{\mathbf{v}^{w}}^{\top},{\boldsymbol{\omega}^{b}}^{\top}, {\mathbf{f}^{b}}^{\top}\right]
\end{align}
where $\mathbf{p}^{w}\in \mathbb{R}^3$ is the position in the world frame, $\mathbf{q}_{wb} \in SO(3)$ is the unit quaternion that describes the orientation of the platform from the body frame to the world frame, $\mathbf{v}^{w}\in \mathbb{R}^3$ is the linear velocity vector, $\boldsymbol{\omega}^{b}\in \mathbb{R}^3$ are the body rates, and $\mathbf{f}^{b}=[f_1, f_2, f_3, f_4]^T\in \mathbb{R}^4$ are the thrusts produced by each rotor. The quadrotor frame configuration is illustrated in Fig. \ref{fig:reference_frames}. We denote the collective thrust as $\mathbf{f}_T$ and the associated body torques as $\boldsymbol{\tau}^b$, which can be obtained by:
\begin{equation}\label{eq:collective_thrust}
\mathbf{f}_{T}\!=\!\left[\begin{array}{c}
  0\\
  0\\
  \sum f_{i}
  \end{array}\right] \text{and } \boldsymbol{\tau}^{b}\!=\!\left[\begin{array}{c}
    l/\sqrt{2}(f_{1}\!+\!f_{2}\!-\!f_{3}\!-\!f_{4})\\
    l/\sqrt{2}(\!-\!f_{1}\!+\!f_{2}\!+\!f_{3}\!-\!f_{4})\\
    c_{\tau}(f_{1}-f_{2}+f_{3}-f_{4})
    \end{array}\right]
\end{equation}
with the quadrotor's arm length $l$ and the rotor's torque constant $c_{\tau}$. For readability, we drop the frame indices (e.g., the superscript $w$ and $b$) as they are consistent throughout the description. In addition, we adopt a linear drag model \cite{faessler2017differential} to emulate the aerodynamic drags. Let $\mathbf{D}$ be the diagonal drag coefficients matrix. The equation of motion can be written as:
\begin{align}\label{eq:quadrotor_dynamics}
\begin{array}{ll}
  \dot{\mathbf{p}}=\mathbf{v} & \dot{\mathbf{q}}=\frac{1}{2}\boldsymbol{\Lambda}(\mathbf{q})\begin{bmatrix}\boldsymbol{\omega}\\
  0
  \end{bmatrix}\\
  \rule{0pt}{4ex}
 \dot{\mathbf{v}}=\frac{1}{m}\mathbf{R}\mathbf{f}_{T}\!+\!\mathbf{g}\!-\!\mathbf{R}\mathbf{D}\mathbf{R}^{\top}\mathbf{v} & \dot{\boldsymbol{\omega}}=\mathbf{J}^{-1}\left(\boldsymbol{\tau}-\boldsymbol{\omega}\times\mathbf{J}\boldsymbol{\omega}\right)
  \end{array}
\end{align}  
where $\boldsymbol{\Lambda}(\mathbf{q})$ returns the quaternion's multiplication matrix, and $\mathbf{R}$ is the corresponding rotation matrix of $\mathbf{q}$. In addition, $\mathbf{g}$ represents the gravity vector, $m$ the quadrotor's mass, and $\mathbf{J}$ its inertia matrix. The input of the system is given as the rates of change of the thrusts produced by each rotor, $\mathbf{r}_T=\dot{\mathbf{f}}^{b}\in \mathbb{R}^4$.

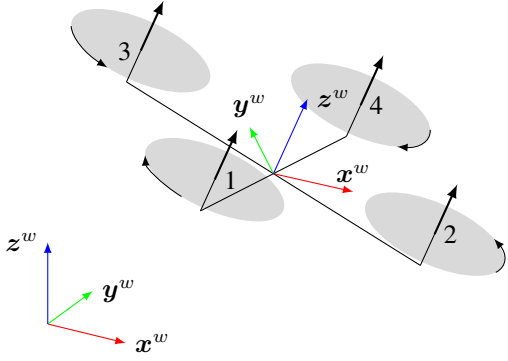
\begin{figure}[t!]
    \centering
    \tdplotsetmaincoords{65}{30}
\begin{tikzpicture}[tdplot_main_coords, >=latex]
\tikzset{RPY/.style={x={(\nxx,\nxy)},y={(\nyx,\nyy)},z={(\nzx,\nzy)}}}
\def\propz{0.5}
\def\propx{2.4}
\def\propy{2.2}
\def\r{1}
\def\l{1.2}
\def\f{0.8}
\def\c{0.8}
\def\x{0}
\rotateRPY{-10}{20}{0}
\begin{scope}[RPY]
\draw[] (-\propx-\x,0,0) -- (\propx-\x,0,0);
\draw[] (-\x,-\propy,0) -- (-\x,\propy,0);
\draw[draw=none, fill=black, opacity=0.15] (0-\x,\propy,\propz) circle (\r);
\draw[draw=none, fill=black, opacity=0.15] (\propx-\x,0,\propz) circle (\r) ;
\draw[draw=none, fill=black, opacity=0.15] (0-\x,-\propy,\propz) circle (\r) ;
\draw[draw=none, fill=black, opacity=0.15] (-\propx-\x,0,\propz) circle (\r) ;
\draw (-\x,\propy,0) --++ (0,0,\propz) node[right] {4};
\draw (\propx -\x,0,0) --++ (0,0,\propz) node[right] {2};
\draw (-\x,-\propy,0) --++ (0,0,\propz) node[right] {1};
\draw (-\propx -\x,0,0) --++ (0,0,\propz) node[left] {3};
\draw[thick,->] (-\x,\propy,\propz) -- ++(0,0,\f);
\draw[thick,->] (\propx-\x,0,\propz) -- ++(0,0,\f);
\draw[thick,->] (-\x,-\propy,\propz) -- ++(0,0,\f);
\draw[thick,->] (-\propx-\x,0,\propz) -- ++(0,0,\f);    

\def\psi{60}
\def\xa{30}
\def\xb{0}
\def\xc{-90}
\def\xd{210}
\path (\r-\x,\propy,\propz) arc (0:\xa:\r) coordinate (a);
\draw [->] (a) arc (\xa:\xa-\psi:\r) [draw=black] ;
\path (\propx-\x,\r,\propz) arc (90:\xb:\r) coordinate (b);
\draw [->] (b) arc (\xb:\xb+\psi:\r) [draw=black] ;
\path (\r-\x,-\propy,\propz) arc (0:\xc:\r) coordinate (c);
\draw [->] (c) arc (\xc:\xc-\psi:\r) [draw=black] ;
\path (-\propx-\x,\r,\propz)  arc (90:\xd:\r) coordinate (d);
\draw [->] (d) arc (\xd:\xd+\psi:\r) [draw=black] ;
\rotateRPY{0}{0}{45}
\begin{scope}[RPY]
\draw[->,color=red,text=black] (-\x,0,0) -- ++ (\l,0,0) node[above] {$\bm x^w$};
\draw[->,color=green,text=black] (-\x,0,0) -- ++ (0,\l,0) node[above] {$\bm y^w$};  
\draw[->,color=blue,text=black] (-\x,0,0) -- ++ (0,0,\l) node[right] {$\bm z^w$};     
\rotateRPY{0}{25}{0}

\end{scope}
\end{scope}

\begin{scope}[xshift=-3cm, yshift=-2cm]
\draw[->,color=red,text=black] (-\x,0,0) -- ++ (\l,0,0) node[right] {$\bm x^w$};
\draw[->,color=green,text=black] (-\x,0,0) -- ++ (0,\l,0) node[right] {$\bm y^w$};  
\draw[->,color=blue,text=black] (-\x,0,0) -- ++ (0,0,\l) node[left] {$\bm z^w$};
\end{scope}

\end{tikzpicture}
    \caption{Diagram of quadrotor model with the world and
body frames convention.}
    \label{fig:reference_frames}
    \vspace{-1.5em}
\end{figure}

\subsection{Constraints}

\subsubsection{Body-Rate Constraints}
The body rates are bounded by the maximum angular velocities of the operational limits of the low-level \ac{FCU} via
$|\boldsymbol{\omega}|\leq\boldsymbol{\omega}_{\text{max}}.$

\subsubsection{Rotor Constraints}
The thrust produced by each rotor is constrained by the maximum and minimum thrust limits, which are determined by the motor's specifications with
$\mathbf{f}_{\text{min}}\leq\mathbf{f}\leq\mathbf{f}_{\text{max}}.$
In addition, the rate of thrust change~$\mathbf{r}_{T}$ is bounded to account for the low-pass characteristic of the motors:
$\mathbf{r}_{T_{\text{min}}} \leq \mathbf{r}_{T} \leq \mathbf{r}_{T_{\text{max}}}.$

\subsubsection{Waypoint Constraints}
Some tasks explicitly specify waypoints to pass in a certain order, which can be modeled as a spherical region with a user-specified radius, $\delta_{i}$, that indicates the tolerance for waypoint constraint violation. For the $i$-th waypoint, the constraint can be expressed as:
\begin{equation}
\mathcal{W}_{i}(\delta_{i}) = \{\mathbf{p} \in \mathbb{R}^{3} \mid \|\mathbf{p} - \mathbf{p}_{w_i}\|_{2} \leq \delta_{i}\},
\end{equation}
Consider a problem with $N_{w_i}$ waypoints in total. The waypoint constraint $h_{\mathcal{G}}(\mathbf{p}) \leq 0$ suggests that the quadrotoy trajectory should fall into a functional space that satisfies:
\begin{equation}
\mathbf{p}(t_{i}) \in \mathcal{W}_{i}(\delta_{i}), \quad i = 1, 2, \ldots, N_{w_i},
\end{equation}
at some time instances $0 < t_{1} < t_{2} < \cdots < t_{N_{w_i}}$. Note that this formulation can also apply to circular gate modeling in drone racing, simply by setting the tolerance to the radius of the gate.

\subsubsection{Gate Constraints}

In many scenarios, the quadrotor is tasked with traversing a specific object or area, instead of a single point. Since the traversable region is significantly larger, this formulation provides greater flexibility for trajectory optimization. We collectively refer to these traversable areas of certain objects as gate constraints. Two general classes of gates are considered: polyhedral and spherical. Polyhedral gates are defined as
\begin{equation}
\mathcal{P}_{i}=\{\mathbf{p}\in\mathbb{R}^{3}|\;\mathbf{A}_{i}\mathbf{p}\leq\mathbf{b}_{i}\},
\end{equation}
where $\mathbf{A}_{i}$ and $\mathbf{b}_{i}$ represent the parameters of the polyhedron.

We denote the full gate constraint as $h_{\mathcal{G}}(\mathbf{p}) \leq 0$, which defines a feasible function space that meets:
\begin{equation}
\mathbf{p}(t_{i})\in\mathcal{P}_{i},\quad i = 1, 2, \ldots, N_{w_i}
\end{equation}
for some time instances $0 < t_{1} < t_{2} < \cdots < t_{N_{w_i}}$.

This formulation is also applicable to non-convex objects and environments that a sequence of convex polytopes can represent. A typical example is a winding tunnel, which can be modeled using a chain of rectangular prisms.

\subsubsection{Target Visibility Constraints}
In missions with inspection needs, keeping the targets within the camera's \ac{FOV} constantly is critical. The target here is treated as a 3D point, which typically represents the center of the object of interest. A straightforward implementation involves imposing \ac{FOV} constraints on the camera, thereby restricting the vehicle's pose. Assume that the target landmark's position in the world frame, $\mathbf{p}_{\ell_{i}}^{w}$, is given. The landmark's position in the camera frame can be computed as:
\begin{equation}
\mathbf{p}_{\ell_{i}}^{c}=\mathbf{R}_{bc}^{\top}(\mathbf{R}^{\top}(\mathbf{q})(\mathbf{p}_{\ell_{i}}^{w}-\mathbf{p})-\mathbf{t}_{cb}),
\end{equation}
where $\{\mathbf{R}_{bc},\mathbf{t}_{cb}\}$ represent the relative pose between the body frame and the camera frame. We parameterize the \ac{FOV} using the azimuth ($\alpha$) and elevation ($\beta$) angles. Let $\mathbf{p}_{\ell_{i}}^{c}=[X_i,Y_i,Z_i]^{\top}$ denote the coordinates of the $i$-th landmark expressed in the camera frame. The angular coordinates of the landmark are given by:
\begin{align}\label{eq:azimuth_and_elevation}
\alpha_i&=\arctan2(X_i,Z_i),\\ 
\beta_i&=\arctan2(Y_i,\sqrt{X_i^{2}+Z_i^{2}}). 
\end{align}
Consequently, the visibility constraint $h_{\mathcal{V}}$ for the $i$-th landmark is formulated as:
\begin{align}
h_{\mathcal{V}}(\mathbf{p},\mathbf{q},\mathbf{p}_{l_{i}}^{w})\leq0\Leftrightarrow\begin{cases}
\begin{array}{c}
|\alpha_{i}|\leq\alpha_{\max}\\
|\beta_{i}|\leq\beta_{\max}\\
Z_{i}>Z_{\min}
\end{array}\end{cases}
\end{align}
where $\alpha_{\max}$ and $\beta_{\max}$ represent the camera's horizontal and vertical half-angles, respectively, and $Z_i > Z_{\min}$ ensures the landmark is in front of the camera with a minimum distance.

\subsection{Perception Objectives}

We introduce three types of perception objectives used in this paper, \ac{PUM}, \ac{FOV}, and \ac{LA}. Fig. \ref{fig:perceptual_objectives} provides a visual demonstration of their motivations.

\subsubsection{Position Uncertainty Minimization}

Position uncertainty can be modeled in a variety of ways depending on the application. In GPS navigation, the standard metric is the \ac{PDOP} \cite{grewal2007global}, which characterizes how the relative geometry of satellites propagates measurement errors into position uncertainty. For general estimation tasks, the \ac{CRLB} \cite{cramer1999mathematical} is a useful tool that provides a theoretical lower bound on the covariance of any unbiased estimator and is defined as the inverse of the \ac{FIM}. Thus, maximizing the \ac{FIM} is equivalent to minimizing the theoretical lower bound of the estimation error. When analytical expressions are intractable, uncertainty can also be computed numerically by perturbing the measurements with noise and analyzing the variance of the resulting estimates \cite{foehn2022alphapilot}. 

However, these metrics cannot be directly utilized as perception objectives in gradient-based trajectory optimization. The camera's limited field of view introduces discontinuities as features abruptly enter or exit the image boundaries, rendering the objective function non-differentiable. Furthermore, the numerical complexity involved in evaluating these metrics at every optimization step is often prohibitive. We will discuss our solution in detail in Section \ref{sec:position_uncertainty_modeling}.

\subsubsection{FOV Limit Violation Penality (\ac{FOV})}

When tracking multiple or switching landmarks across mission phases, imposing hard \ac{FOV} constraints is often detrimental. Such strict constraints can render the optimization problem infeasible or force the vehicle into conservative behaviors. Therefore, we relax the constraint $h_{\mathcal{V}}(\mathbf{p},\mathbf{q},\mathbf{p}_{l_{i}}^{w})$ into a soft constraint by introducing a set of slack variables~$\mathbf{S}=[S_{0},S_{1},S_{2}]^{\top}$ for each state. At each discrete time instance $t_k$ within the specified interval or regions, the constraints of the target landmark~$i$ are relaxed to:
\begin{align}\label{eq:soft_fov_constraints}
\begin{cases}
\begin{array}{c}
|\alpha_{i}(t_k)|\leq\alpha_{\max}+S_{k,i,0}\\
|\beta_{i}(t_k)|\leq\beta_{\max}+S_{k,i,1}\\
Z_{\min}-Z_{i}(t_k)<S_{k,i,2}
\end{array}\end{cases},
\end{align}
We also define a penalty function on the magnitude of the slack variables multiplied by an adjustable weight $w_s$:
\begin{equation}
\mathcal{L}_{\text{\ac{FOV}}}=w_{\text{\ac{FOV}}}\cdot(\mathbf{1}^{\top}\mathbf{S}+\frac{1}{2}\|\mathbf{S}\|_{2}^{2}).
\end{equation}

When the weight $w_{\text{\ac{FOV}}}$ is sufficiently large, the formulation asymptotically approximates a hard constraint, and thus the drone is forced to sacrifice agility for visibility. Conversely, a lower weight tolerates minor violations, providing the necessary flexibility to maintain smooth flight, particularly during landmark transitions.

\subsubsection{Look-Ahead Gaze}

This is an empirical heuristic that constrains the camera's optical axis to align with a future trajectory waypoint at a constant time horizon $t_{\text{\ac{LA}}}$. It mimics the intuitive gaze adjustments of human pilots during freestyle flight and proves highly effective in enabling challenging maneuvers \cite{wang2025unlocking}. Let's consider a state at time $t_k$. The camera's optical direction expressed in the world frame is given by:
\begin{equation}
\mathbf{z}^{w}_{c_{k}} = \mathbf{R}_{k}\mathbf{R}_{bc}\mathbf{e}_{z}.
\end{equation}
The future point that we want the camera to looks towards is $\mathbf{p}(t_k+t_{\text{\ac{LA}}})$. Subsequently, the desired camera's optical direction can be calculated by $\mathbf{b}_{t_k}=\mathcal{N}(\mathbf{p}(t_k+t_{\text{\ac{LA}}})-\mathbf{p}(t_k))$, where $\mathcal{N}(\cdot)$ returns the normalized vector. Now we construct a cost function to align $\mathbf{z}^{w}_{c_{k}}$ and $\mathbf{b}_{t_k}$:
\begin{equation}
\mathcal{L}_{\text{\ac{LA}}}=-w_{\text{\ac{LA}}}\cdot\exp\left(-\lambda_{\text{\ac{LA}}}\cdot\left(\arccos^{4}\!\left\langle \mathbf{b},\mathbf{z}_{c_{k}}^{w}\right\rangle \right)\right),
\end{equation}
where $w_{\text{\ac{LA}}}$ is the weight and $\lambda_{\text{\ac{LA}}}$ is the decay parameter.

\subsection{Motion Regulation}

True time-optimal solutions typically produce approximate bang-bang control trajectories. However, the associated abrupt acceleration changes cause severe camera shake and image blur, which are detrimental to visual perception. Consequently, we augment the optimization problem with a motion regulation term. This term essentially functions as a jerk-minimization penalty to reduce high-frequency acceleration changes.
The jerk can be calculated from the existing state and input variables:
\begin{equation}
\mathbf{j}=(\frac{1}{m}\sum\mathbf{f})\cdot\mathbf{R}[\mathbf{e}_{z}]_{\times}^{\top}\boldsymbol{\omega}\cdot+(\frac{1}{m}\sum\mathbf{r}_{T})\cdot\mathbf{z}_{b}.
\end{equation}
It follows the jerk minimization term:
\begin{equation}
\mathcal{L}_{\text{MR}}:= w_{\text{MR}} \cdot \|\mathbf{j}\|_2^2,
\end{equation}
where $w_{\text{MR}}$ is typically chosen as a very small value. It is worth noting that this term is only used in perception-aware flights.

\subsection{\ac{TOGT} Problem Formulation}

Let us denote the state trajectory as $\mathbf{x}(\cdot)$ and the corresponding control input trajectory as $\mathbf{u}(\cdot)$, both of which are functions defined over the time interval $[0, T]$ where $T$ is the total duration of the trajectory. The objective is to find a dynamically feasible trajectory that minimizes the total flight time to pass through all waypoints/gates at the specified order. The optimization problem can be formulated as follows:
\begin{algbox}{\ac{TOGT} Trajectory Generation Problem}
\begin{mini!}[l] 
    {\substack{\mathbf{x}(\cdot), \mathbf{u}(\cdot), T}} 
    {T} 
    {\label{eq:time_optimal_trajectory_planning}} 
    {} 
    \addConstraint{\dot{\mathbf{x}}(t)}{= f(\mathbf{x}(t), \mathbf{u}(t)),\label{eq:opti_ode}}{\quad 0 \leq t \leq T}
    \addConstraint{\mathbf{x}(0)}{= \mathbf{x}_{\text{init}}}{}
    \addConstraint{\mathbf{x}(T)}{= \mathbf{x}_{\text{term}}}{}
    \addConstraint{\mathbf{x}_{\text{lb}}\leq}{ \mathbf{x}(t) \leq \mathbf{x}_{\text{ub}}}{}
    \addConstraint{\mathbf{u}_{\text{lb}}\leq}{ \mathbf{u}(t) \leq \mathbf{u}_{\text{ub}}}{}
    \addConstraint{h_{\mathcal{G}}}{(\mathbf{p}(t)) \leq 0,\label{eq:opti_ieq}}{}
\end{mini!}
\end{algbox}
where $\mathbf{x}_{\text{init}}$ and $\mathbf{x}_{\text{term}}$ denote the initial and terminal states, respectively, $\mathbf{x}_{\text{lb}}$ and $\mathbf{x}_{\text{ub}}$ represent the lower and upper bounds on the quadrotor's state, and $\mathbf{u}_{\text{lb}}$ and $\mathbf{u}_{\text{ub}}$ represent the lower and upper bounds on the control inputs, as defined in previous sections. The last constraint~\eqref{eq:opti_ieq} enforces the trajectory to pass through all gates at the specified order. 

\textbf{Perception-aware \ac{TOGT} problem.}
We augment Problem~\eqref{eq:time_optimal_trajectory_planning} with the perception objectives defined in the previous sections, yielding the final optimization problem shown below:
\begin{algbox}{Perception-Aware \ac{TOGT} Problem}
\begin{mini!}[l]
    {\substack{\mathbf{x}(\cdot), \mathbf{u}(\cdot), T}} 
    {T + \int_{0}^{T}\mathcal{L}_{\text{PA}}\;dt} 
    {\label{eq:perception_aware_time_optimal_trajectory_planning}} 
    {} 
    \addConstraint{\text{time-opt. constraints (\ref{eq:time_optimal_trajectory_planning}b-g)}}{}{}
    \addConstraint{\text{visibility constraints (\ref{eq:soft_fov_constraints})}}{}{}
\end{mini!}
\end{algbox}

where
\begin{equation}
\mathcal{L_{\text{PA}}}=\mathcal{L}_{\text{\ac{LA}}} + \mathcal{L}_{\text{\ac{FOV}}} + \mathcal{L}_{\text{\ac{PUM}}} + \mathcal{L}_{\text{MR}},
\end{equation}
expresses the overall perceptual requirement of the problem. The remaining question is how to solve these two problems stably and efficiently, which we will answer in Section~\ref{sec:numerical_optimization}.

\begin{figure}[!t]
    \centering
\tikzstyle{every node}=[font=\footnotesize]
\begin{tikzpicture}[>=stealth]
\node [inner sep=0pt, outer sep=0pt, anchor=north west] (img1) at (0,0)
{\includegraphics[width=4.2cm]{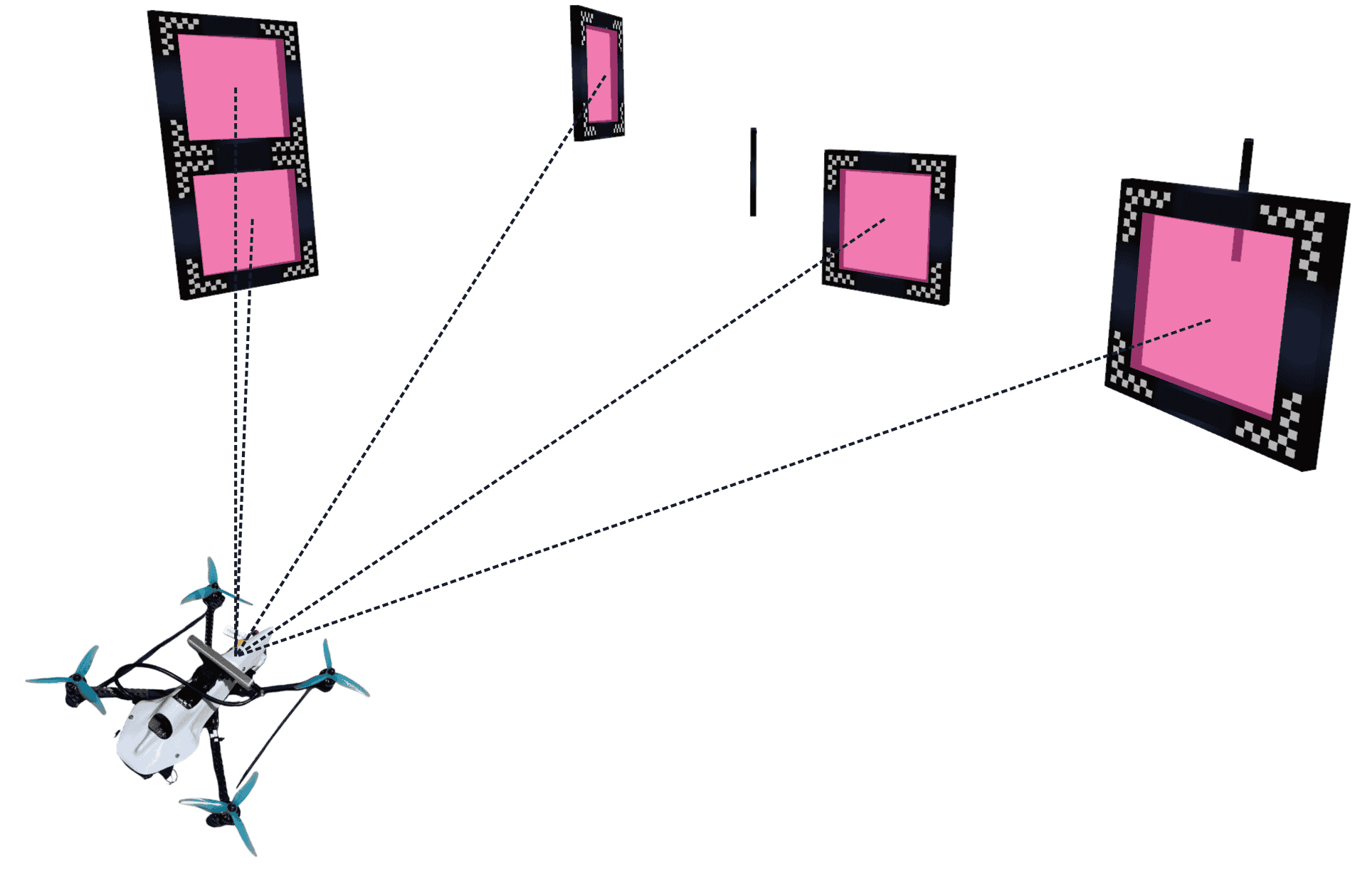}};
\node [inner sep=0pt, outer sep=0pt, right=1mm of img1] (img2)
{\includegraphics[width=4.2cm]{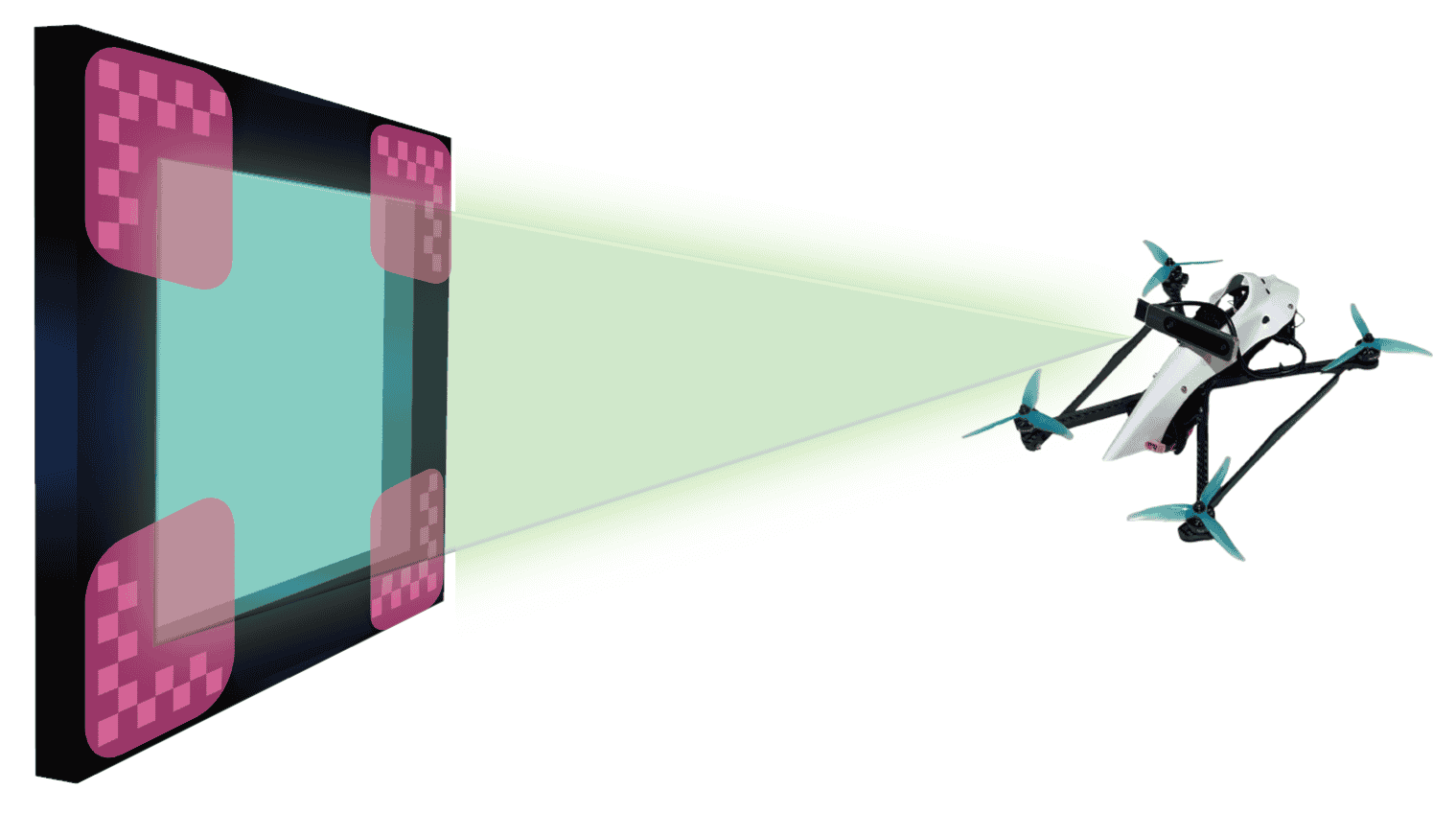}};
\node[below=1mm of img1](text1){(a) Multi-landmark geometry};
\node[below=1mm of img2](text2){(b) Single-landmark geometry};
\end{tikzpicture}
\caption{Two types of geometry information for localization: (a) the relative geometry of multiple landmarks, where each landmark is treated as a single 3D point; or (b) features observable from a single landmark.}
\label{fig:landmark_observation}
\vspace{-0.3cm}
\end{figure}

\section{Modeling of Position Uncertainty}\label{sec:position_uncertainty_modeling}

This section analyzes the theoretical formulation of position uncertainty derived from visual measurements of known landmarks. First, we outline the fisheye camera model and its feature representation. Next, we derive a differentiable approximation of the \ac{FIM}. We conclude by proposing an uncertainty metric that achieves a balance between computational tractability and estimation accuracy

We assume that the 3D geometries and global positions of a set of landmarks are known a priori, and that their feature coordinates (e.g., corners) are observable in the image plane. Given the camera intrinsics and extrinsics, we can estimate the \ac{UAV}’s 6-DoF pose in the world frame via the \ac{PnP} method using at least four pairs of matched features, provided these features are non-collinear. Note that a pose estimate can also be derived from a single landmark if it provides at least four non-collinear feature measurements. To facilitate analysis, we treat these as two distinct sources of visual measurements contributing to the position estimate: the relative geometry among landmarks and the 3D geometry of each individual landmark. Their distinction is illustrated in Fig. \ref{fig:landmark_observation}. This separation is reasonable. When landmarks are remote, the features on any single landmark provide highly correlated information; consequently, positioning accuracy is primarily dictated by the relative geometry among the landmarks. Conversely, when a landmark is proximal, its features are well-distributed across the image plane. This provides sufficient geometric constraints to yield accurate localization results, even without assistance from other visible landmarks.

\subsection{Fisheye Camera Model}

We employ an equidistant projection model \cite{kingslake1989history}, whose basic principle is illustrated in Fig.~\ref{fig:fisheye_model}. Its imaging process can be decomposed into two steps: first, linearly projecting the 3D points in space onto a virtual unit sphere; and then projecting the points on the unit sphere onto the image plane. To map a 3D point in the camera frame $\mathbf{p}^{c}$ to its 2D image coordinate $\mathbf{s}=[u,v]^T$, we first compute the so-called angle of incidence, $\theta$, as follows:
\begin{equation}
\theta = \arctan2(\sqrt{X^2 + Y^2}, Z).
\end{equation}
Then, the image coordinates can be obtained by using:
\begin{align}
u&=f_{x}\theta x_{n}+c_{x},\\
v&=f_{y}\theta y_{n}+c_{y},
\end{align}
with
\begin{equation}
x_{n}=\frac{X}{\sqrt{X^{2}+Y^{2}}}\text{ and } y_{n}=\frac{Y}{\sqrt{X^{2}+Y^{2}}},
\end{equation}
where $f_x$ and $f_y$ denote the focal lengths in the $x$ and $y$ directions, respectively, and $c_x$ and $c_y$ represent the principal point of the camera.

\begin{figure}[!t]
\centering
\vspace{-0.3cm}
\includegraphics[width=0.3\textwidth]{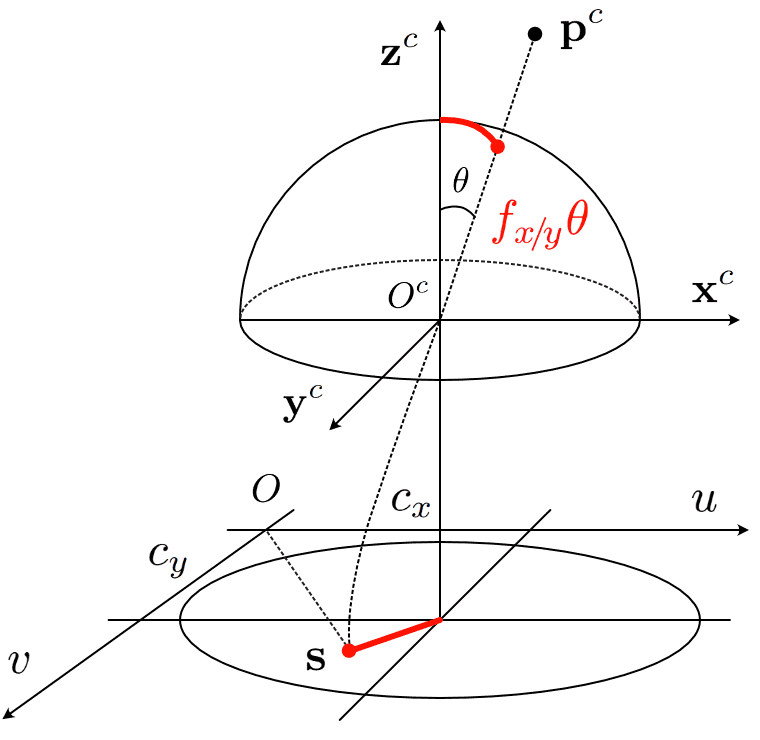}
\caption{Equidist projection model of fisheye cameras.}\label{fig:fisheye_model}
\vspace{-0.3cm}
\end{figure}

\subsection{Bearing Vector Representation}
Using bearing vectors to represent image features can greatly simplify derivation \cite{zhang2020fisher}. To obtain this representation, we calculate $(x_n, y_n)$ from their 2D image coordinates:
\begin{align}
x_n = (u - c_x)/f_x,\\
y_n = (v - c_y)/f_y.
\end{align}
It follows the computation of $\theta = \sqrt{x_n^2 + y_n^2}$. Finally, we obtain the bearing vector representation:
\begin{equation}
\boldsymbol{\rho}^c =
\begin{bmatrix}
\sin(\theta)/\theta \cdot x_n \\
\sin(\theta)/\theta \cdot y_n \\
\cos(\theta)
\end{bmatrix}.
\end{equation}

Now, let's study the noises of the bearing vector, $\boldsymbol{\rho}^c$, which can be propagated from the image noise, $\sigma_{u}$ and $\sigma_{v}$. The result is given below: 
\begin{equation}\label{eq:covariance_bearing_vector}
\boldsymbol{\Sigma}_{\rho} \approx \text{diag}\left(\frac{\sigma_{u}^{2}}{f_{x}^{2}}, \frac{\sigma_{v}^{2}}{f_{y}^{2}}, \left(\frac{\sigma_{u} c_{x}}{f_{x}^{2}} + \frac{\sigma_{v} c_{y}}{f_{y}^{2}}\right)^{2}\right).
\end{equation}
The derivation is detailed in Appendix A. To further simplify its expression, we utilize the fact that $c_x$, $c_y$, $f_x$, and $f_y$ are of the same magnitude: we assume $c_x = c_y$, $f_x = f_y$, and $\sigma_u = \sigma_v$, leading to an isotropic covariance matrix of $\boldsymbol{\rho}^c$ that takes the form of $\boldsymbol{\Sigma}_{\rho} \approx \text{diag}\left(\frac{\sigma_u^2}{f_x^2}, \frac{\sigma_u^2}{f_x^2}, \frac{\sigma_u^2}{f_x^2}\right)$. This simplification is vital for efficient \ac{FIM} computation.

\subsection{Fisher Information Matrix Derivation}\label{subsec:uncertainty_formulation}

A primary obstacle in computing the \ac{FIM} for camera-based observations is the \ac{FOV} constraints. We address this issue by introducing a continuous and differentiable approximation of the visibility function.

Consider $N$ landmarks in the environment, each producing $M$ feature points. Let $v_{i,j}$ denote the visibility indicator of the j-th feature of the i-th landmark, which returns 1 if the landmark is within the camera's \ac{FOV} and 0 otherwise. The \ac{FIM} at state $\mathbf{x}$ can be expressed as:
\begin{equation}\label{eq:fim_expression}
\mathbf{I}_{\text{FIM}}(\mathbf{x})=\sum_{i=1}^{N}\sum_{j=1}^{M}v_{i,j}\cdot\mathbf{J}_{i,j}^{\top}\boldsymbol{\Sigma}_{\rho}\mathbf{J}_{i,j}.
\end{equation}
To obtain a continuous and differentiable expression, we approximate the indicator function using the product of three separate smooth functions:
\begin{equation}\label{eq:visibility_indicator}
\hat{v}(\mathbf{x})=v_{\alpha}(\mathbf{x})\cdot v_{\beta}(\mathbf{x})\cdot v_{Z}(\mathbf{x}),
\end{equation}
where $v_{\alpha}$ represents the azimuth visibility, $v_{\beta}$ the elevation visibility, and $v_{Z}$ the depth posibility, which are defined as:
\begin{align}
v_{\alpha}&=\frac{1}{2}+\frac{1}{2}\tanh\left(\lambda_{v}\cdot(\alpha_{\max}-|\alpha|)\right),\\ 
v_{\beta}&=\frac{1}{2}+\frac{1}{2}\tanh(\lambda_{v}\cdot(\beta_{\max}-|\beta|)),\\ 
v_{Z}&=\frac{1}{2}+\frac{1}{2}\tanh(\lambda_{v}\cdot(Z_{\min}-Z)),
\end{align}  
where $\lambda_{v}$ is the sharpness parameter. The Jacobian of the bearing vector $\mathbf{p}_{i,j}^{c}$ with respect to the current camera position $\mathbf{p}_{c}^{w}$ can be derived by using the chain rule:
\begin{equation}\label{eq:jacobian_bearing_vector}
\mathbf{J}_{i,j}=\frac{\partial\boldsymbol{\rho}_{i,j}^{c}}{\partial\mathbf{p}_{i,j}^{c}}\frac{\partial\mathbf{p}_{i,j}^{c}}{\partial\mathbf{p}_{c}^{w}},
\end{equation}
where $\mathbf{p}_{i,j}^{c}$ is the corresponding feature position in the camera frame, and 
\begin{align}
\frac{\partial\boldsymbol{\rho}_{i,j}^{c}}{\partial\mathbf{p}_{i,j}^{c}}&=\frac{1}{d_{i,j}}(\mathbf{I}_{3}-\boldsymbol{\rho}_{i,j}^{c}(\boldsymbol{\rho}_{i,j}^{c})^{\top}),\\
\frac{\partial\mathbf{p}_{i,j}^{c}}{\partial\mathbf{p}_{c}^{w}}&=-\mathbf{R}_{cw},
\end{align}
with $d_{i,j}=\|\mathbf{p}_{i,j}^{c}\|_{2}$. Since $\boldsymbol{\Sigma}_{\rho}$ is already given in \eqref{eq:covariance_bearing_vector}, the \ac{FIM} can be computed by iterating through all visible landmarks. 

From an optimization perspective, the expression of \ac{FIM} is still too complicated as it involves too many state variables, such as the rotation matrix. The following derivation shows that \ac{FIM} can take a much simpler form that only depends on the vehicle's position.

\begin{proposition}\label{prop:fisher_information_matrix}
If the measurement noise of the bearing vector is isotropic, then the \ac{FIM} in \eqref{eq:fim_expression} can be reduced exactly to:
\begin{equation}\label{eq:fisher_information_matrix}
 \mathbf{I}_{\text{FIM}}(\mathbf{p})=\sum_{i=1}^{N}\sum_{j=1}^{M}v_{i,j}\cdot\mathbf{A}_{i,j}^{\top}\boldsymbol{\Sigma}_{\rho}\mathbf{A}_{i,j},
\end{equation}
where
\begin{align}
\mathbf{A}_{i,j}&=\frac{1}{d_{i,j}}(\mathbf{I}_{3}-\boldsymbol{\rho}_{i,j}^{w}(\boldsymbol{\rho}_{i,j}^{w})^{\top}),\label{eq:jacobian}\\ 
\boldsymbol{\rho}_{i,j}^{w}&=\frac{\mathbf{p}_{\ell_{i,j}}^{w}-\mathbf{p}_{c}^{w}}{\|\mathbf{p}_{\ell_{i,j}}^{w}-\mathbf{p}_{c}^{w}\|_{2}}.\label{eq:global_bearing_vector}
\end{align}
\end{proposition}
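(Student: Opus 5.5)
The plan is to substitute the chain-rule Jacobian \eqref{eq:jacobian_bearing_vector} into each summand of \eqref{eq:fim_expression}. Then we show that the rotation $\mathbf{R}_{cw}$ conjugates the camera-frame projector into the world-frame projector and cancels against the isotropic covariance. Everything reduces to one summand, since the visibility weights $v_{i,j}$ are scalars and simply carry through.

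\textbf{Step 1 (rewrite the Jacobian in world quantities).} Write $\mathbf{p}_{i,j}^{c}=\mathbf{R}_{cw}(\mathbf{p}_{\ell_{i,j}}^{w}-\mathbf{p}_{c}^{w})$. The camera-frame bearing is then $\boldsymbol{\rho}_{i,j}^{c}=\mathbf{p}_{i,j}^{c}/d_{i,j}=\mathbf{R}_{cw}\boldsymbol{\rho}_{i,j}^{w}$, with $\boldsymbol{\rho}_{i,j}^{w}$ as in \eqref{eq:global_bearing_vector}. Because rotations preserve norms, $d_{i,j}=\|\mathbf{p}_{\ell_{i,j}}^{w}-\mathbf{p}_{c}^{w}\|_2$, so $d_{i,j}$ also depends only on position. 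Using $\mathbf{R}_{cw}\mathbf{R}_{cw}^{\top}=\mathbf{I}_3$,
\[
\mathbf{I}_3-\boldsymbol{\rho}^{c}(\boldsymbol{\rho}^{c})^{\top}=\mathbf{R}_{cw}\bigl(\mathbf{I}_3-\boldsymbol{\rho}^{w}(\boldsymbol{\rho}^{w})^{\top}\bigr)\mathbf{R}_{cw}^{\top}.
\]
Multiplying on the right by $-\mathbf{R}_{cw}$ gives $\mathbf{J}_{i,j}=-\mathbf{R}_{cw}\mathbf{A}_{i,j}$, with $\mathbf{A}_{i,j}$ as in \eqref{eq:jacobian}.

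\textbf{Step 2 (cancel the rotation using isotropy).} Under the isotropy hypothesis, $\boldsymbol{\Sigma}_{\rho}=\sigma^2\mathbf{I}_3$ with $\sigma=\sigma_u/f_x$, which is the simplification stated after \eqref{eq:covariance_bearing_vector}. This commutes with any rotation, so
\[
\mathbf{J}_{i,j}^{\top}\boldsymbol{\Sigma}_{\rho}\mathbf{J}_{i,j}=\mathbf{A}_{i,j}^{\top}\mathbf{R}_{cw}^{\top}\boldsymbol{\Sigma}_{\rho}\mathbf{R}_{cw}\mathbf{A}_{i,j}=\mathbf{A}_{i,j}^{\top}\boldsymbol{\Sigma}_{\rho}\mathbf{A}_{i,j}.
\]
The two sign factors cancel. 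Summing with the weights $v_{i,j}$ yields \eqref{eq:fisher_information_matrix}, and each summand now depends only on $\mathbf{p}_c^w$.

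\textbf{Main obstacle.} The only delicate point is the interpretation of the claim that the result depends only on position. The algebraic reduction is exact, but the visibility weights $v_{i,j}$ (or their smooth surrogates $\hat v$ in \eqref{eq:visibility_indicator}) still depend on orientation through $\alpha,\beta,Z$. The proof should say explicitly that the exact reduction concerns the information terms $\mathbf{J}^{\top}\boldsymbol{\Sigma}_\rho\mathbf{J}$, with the $v_{i,j}$ carried as given scalar weights.

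It should also state that if $\boldsymbol{\Sigma}_{\rho}$ enters as an information weight (i.e.\ the paper's convention of writing $\boldsymbol{\Sigma}_\rho$ where $\boldsymbol{\Sigma}_\rho^{-1}$ is meant), the same commutation argument applies verbatim. The reason is that $\boldsymbol{\Sigma}_\rho^{-1}=\sigma^{-2}\mathbf{I}_3$ is also isotropic.
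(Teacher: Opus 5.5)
Your proposal is correct and is essentially the paper's own argument in Appendix B: substitute $\boldsymbol{\rho}^{c}=\mathbf{R}_{cw}\boldsymbol{\rho}^{w}$ into the chain-rule Jacobian to factor out the rotation, then use $\mathbf{R}_{cw}^{\top}\boldsymbol{\Sigma}_{\rho}\mathbf{R}_{cw}=\boldsymbol{\Sigma}_{\rho}$ for isotropic noise. You are slightly more careful than the paper, which writes $\mathbf{J}=\mathbf{R}_{cw}\mathbf{A}$ and drops the sign you keep, and your remarks that the weights $v_{i,j}$ still depend on orientation and that the argument works equally with $\boldsymbol{\Sigma}_{\rho}^{-1}$ are valid refinements the paper omits.
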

\begin{proof}
See Appendix B.
\end{proof}  

Proposition \ref{prop:fisher_information_matrix} eliminates the dependency on the rotation matrix $\mathbf{R}_{cw}$ in $\mathbf{J}_{i,j}$, decoupling $\mathbf{R}_{cw}$ from the position uncertainty computation. This greatly mitigates the computation burden.

Let's recall the relationship between the \ac{FIM} and \ac{CRLB}. The minimum achievable covariance for any unbiased estimator can be expressed as $\boldsymbol{\Sigma}_{\text{lb}}=\mathbf{I}_{\text{FIM}}^{-1}$. Therefore, it is valid to obtain the following scalar measure of the uncertainty: 
\begin{equation}
\mathcal{L}_{\text{\ac{PUM}}}=\log\det(\boldsymbol{\Sigma}_{\text{lb}})=-\log\det(\mathbf{I}_{\text{FIM}}).\label{eq:fim_position_uncertainty}
\end{equation}
A key advantage of this approach is that it avoids the explicit inversion of $\mathbf{I}_{\text{FIM}}$. However, computing $\mathbf{I}_{\text{FIM}}$ itself remains computationally demanding, as it requires iterating through all features in each landmark, which adds up to $NM$ features in total. For instance, if we have 100 landmarks with 10 features each, we must perform 1,000 individual Jacobian updates to assemble the \ac{FIM} at just one sampled camera location, which is unacceptable. In the following section, we present a more efficient way to evaluate the position uncertainty.

\subsection{Fast Position Uncertainty Evaluation}\label{subsec:uncertainty_approximation}

As previously noted, features belonging to the same landmark often yield highly correlated information. To reduce computational overhead, we can simplify the \ac{FIM} by representing each landmark as a single point, typically its geometric centroid, thereby avoiding the processing of redundant features. However, it is essential to explicitly model the position uncertainty of single-landmark observations that contain multiple features. For instance, a proximal gate situated near the optical axis with all corners visible can provide more informative positioning data than the collective contribution of all other visible gates. As a result, we need to model two types of uncertainties that arise from two sources of observation (See Fig. \ref{fig:landmark_observation}), and then combine them properly to yield a comprehensive uncertainty metric. One feasible approach is using the covariance update principle \cite{barfoot2024state} for multiple observations of the same state:
\begin{equation}
\hat{\mathbf{\boldsymbol{\Sigma}}}^{-1}=\sum_{k=1}^{K}\boldsymbol{\mathbf{\Sigma}}_{k}^{-1},
\end{equation}
where $K$ is the total number of observations, $\mathbf{\Sigma}_k$ is the covariance associated with each observation, and $\hat{\mathbf{\Sigma}}$ is the resulting covariance of the fused state estimate.

\begin{figure}[!htbp]
\centering
\includegraphics[width=0.4\textwidth]{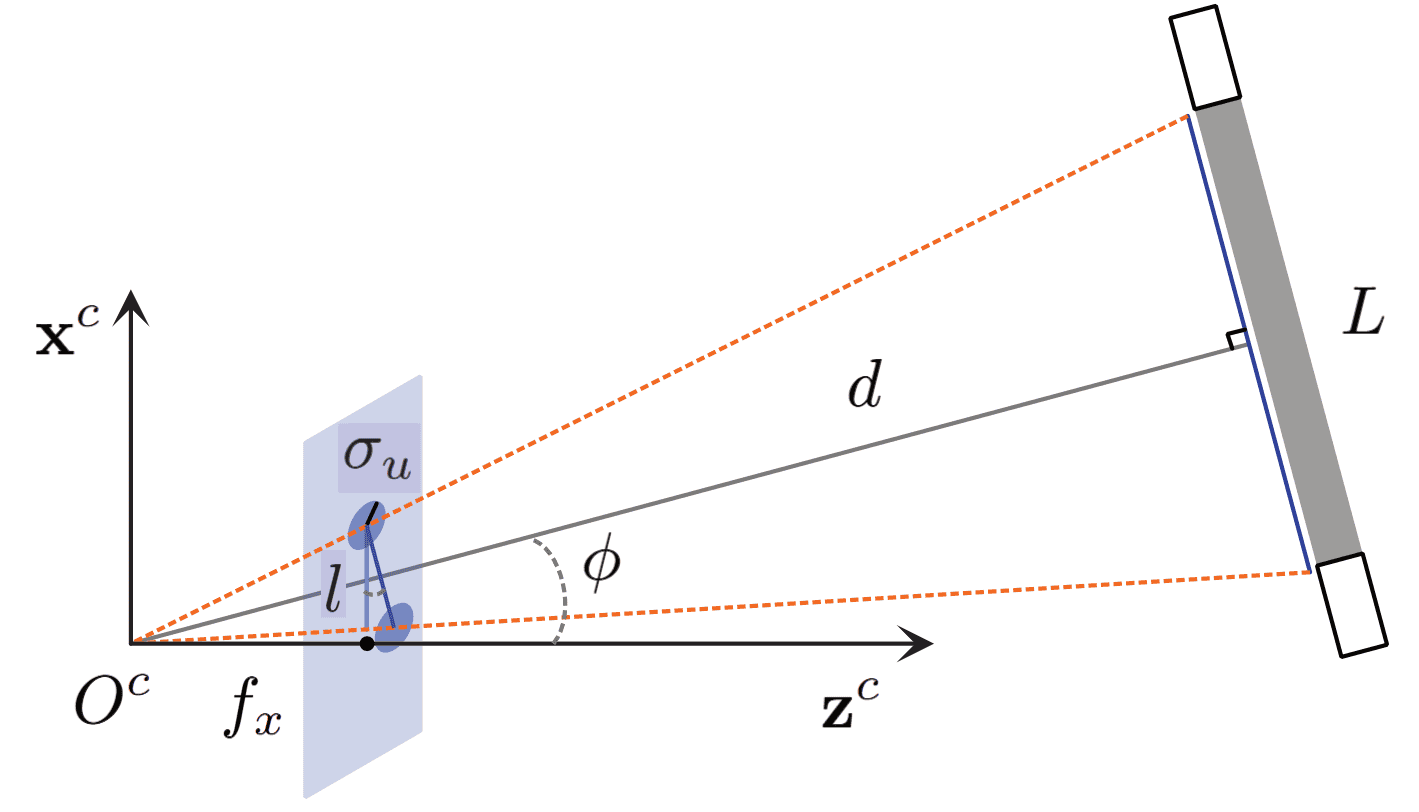}
\caption{The distance to a planar landmark can be approximately estimated by computing the ratio between the actual landmark size~
$L$ and its observed size~$l$ in pixels. Consequently, we can derive the uncertainty of the distance measurement and use it as an estimate of the corresponding position uncertainty.}\label{fig:triangulation}
\vspace{-0.1cm}
\end{figure}

\textbf{Single-landmark geometry:}
We approximate each 3D landmark as a simple 2D line segment. Let $L$ denote its physical characteristic length (actual size) and $l$ denote the length of its projection onto the image plane. Fig. \ref{fig:triangulation} illustrates this modeling process applied to a rectangular gate. Based on these parameters, the distance to the landmark can be estimated as
\begin{equation}
d=f_{x\!/\!y}(L/l)\cos\phi,\label{eq:distance_estiamte}
\end{equation}
where $\phi$ denotes the angle between the object's facing direction $\mathbf{z}_{\ell}$ and the camera's optical axis $\mathbf{z}_{c}$. It is easy to calculate $\phi$ in the world frame:
\begin{equation}
\cos\phi = \frac{\mathbf{z}_{\ell}^{w} \cdot \mathbf{z}_{c}^{w}}{\|\mathbf{z}_{\ell}^{w}\| \cdot \|\mathbf{z}_{c}^{w}\|}.
\end{equation}
There is an analytical expression of the uncertainty of the distance estimate obtained from \eqref{eq:distance_estiamte}.
\begin{proposition}\label{prop:distance_uncertainty}
Given an observable object with a known size $L$, the standard deviation of the noise of the distance estimate can be approximated as:
\begin{equation}
\sigma_{d} = \left|\frac{\sqrt{2} d^{2}}{L \cos(\phi) \cdot f_{x}}\right| \cdot \sigma_{u}.
\end{equation}
\end{proposition}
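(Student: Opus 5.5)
The plan is first-order error propagation through the distance estimate \eqref{eq:distance_estiamte}, treating $L$, $f_x$ and $\phi$ as exact and $l$ as the only noisy quantity.

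\textbf{Step 1: noise of the projected length $l$.} The observed length is the difference of two image coordinates, for instance $l = u_2 - u_1$ for the two endpoints of the segment (or the Euclidean distance between them in general). The pixel noises of the two endpoints are assumed independent with standard deviation $\sigma_u$, and isotropic as in the simplification following \eqref{eq:covariance_bearing_vector}. The Jacobian of $l$ with respect to the stacked endpoint coordinates is a unit vector per endpoint, up to sign, so to first order
\begin{equation*}
\sigma_l^2 = \sigma_u^2 + \sigma_u^2 = 2\sigma_u^2,
\qquad \text{i.e.} \qquad \sigma_l = \sqrt{2}\,\sigma_u .
\end{equation*}
This is where the factor $\sqrt{2}$ comes from.

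\textbf{Step 2: linearize $d$ in $l$.} From $d = f_x L \cos\phi / l$ we get
\begin{equation*}
\frac{\partial d}{\partial l} = -\frac{f_x L \cos\phi}{l^2}.
\end{equation*}
Hence, to first order,
\begin{equation*}
\sigma_d = \left|\frac{f_x L \cos\phi}{l^2}\right| \sigma_l .
\end{equation*}

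\textbf{Step 3: eliminate $l$.} Inverting the estimate gives $l = f_x L \cos\phi / d$. Substituting,
\begin{equation*}
\frac{f_x L\cos\phi}{l^2} = \frac{d^2}{f_x L \cos\phi}.
\end{equation*}
Combining with $\sigma_l = \sqrt{2}\,\sigma_u$ from Step 1 yields $\sigma_d = \left|\frac{\sqrt{2}\,d^2}{L\cos(\phi)\,f_x}\right|\sigma_u$, which is the claimed expression.

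\textbf{Main obstacle.} The delicate point is justifying Step 1 and the geometric model together.
\begin{itemize}
\item With $l$ taken as a Euclidean distance between endpoints, its gradient with respect to each endpoint is a unit vector. Under isotropic per-endpoint noise $\sigma_u^2\mathbf{I}_2$, each endpoint still contributes exactly $\sigma_u^2$, so the $\sqrt{2}$ survives without an axis-aligned assumption.
\item One must also argue that $\phi$ is treated as exact, since it is computed from the known orientation rather than measured, and that the $\cos\phi$ foreshortening from Fig.~\ref{fig:triangulation} is taken as given. This is why the result is stated as an approximation.
\item Non-first-order terms and fisheye distortion are neglected. This is consistent with the small-angle pinhole-like use of $f_x$ in \eqref{eq:distance_estiamte}.
\end{itemize}
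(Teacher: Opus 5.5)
Your proposal is correct and follows the same route as the paper's proof in Appendix C: first-order propagation $\sigma_d = |\partial d/\partial l|\,\sigma_l$ through $d = f_x L\cos\phi/l$, then substitution of $l = f_x L\cos\phi/d$ to write the derivative as $d^2/(L\cos\phi\, f_x)$, and $\sigma_l = \sqrt{2}\,\sigma_u$ from two independently noisy endpoint coordinates. Your remark that the $\sqrt{2}$ also holds when $l$ is a Euclidean distance under isotropic pixel noise is a valid small generalization of the paper's one-dimensional difference $l = |(u_1+n_{u_1})-(u_2+n_{u_2})|$.
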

\begin{proof}
  See Appendix C.
\end{proof}  
Proposition~\ref{prop:distance_uncertainty} tells how accurate the distance measurement can be given a single landmark with a nontrivial dimension. However, it may fail to reflect the true uncertainty when a landmark is sufficiently close to the camera that it is only partially visible. To address this, we introduce a correction term that grows exponentially as the distance decreases to obtain a reasonable uncertainty measure $\sigma_{d^{\prime}}$:
\begin{equation}\label{eq:augmented_depth_uncertainty}
\sigma_{d^{\prime}}^{2} = \sigma_{d}^{2} + \sigma_{u}/(d^{4}+\epsilon),
\end{equation}
where $\epsilon$ is a small value to avoid a singularity. 

We extend this distance uncertainty metric to model the full positional uncertainty. While this approach is an approximation, it yields a conservative upper bound on the error that is easy to compute. The covariance for a single-landmark observation is thus defined as:
\begin{equation}
\boldsymbol{\Sigma}_{\ell}=\text{diag}(\sigma_{d^{\prime}}^{2},\sigma_{d^{\prime}}^{2},\sigma_{d^{\prime}}^{2}).\label{eq:covariance_single_landmark}
\end{equation}
Due to its isotropic structure, $\boldsymbol{\Sigma}_{\ell}$ is invariant under reference frame transformations. Consequently, it serves as a valid uncertainty measure for both $\mathbf{p}^{w}$ and $\mathbf{p}^{c}$, which significantly simplifies the derivation.

\begin{algorithm}[!t]
	\caption{Fast Position Uncertainty Evaluation}
	\label{alg:fast_position_uncertaity_evaluation}
	\begin{algorithmic}[1]
		\STATE \textbf{Inputs:} Quadrotor's state $\mathbf{x}$, landmark centroid positions $\{\mathbf{p}^{w}_{\ell_1},\mathbf{p}^{w}_{\ell_2},...,\mathbf{p}^{w}_{\ell_N}\}$, forward directions $\{\mathbf{c}^{w}_{\ell_1},\mathbf{c}^{w}_{\ell_2},...,\mathbf{c}^{w}_{\ell_N}\}$, and size $L$, camera intrinsics $\{f_x,f_y,c_x,c_y\}$ and extrinsics $\{\mathbf{R}_{bc},\mathbf{t}_{cb}\}$, camera's \ac{FOV} $\{\alpha_{\max},\beta_{\max}, Z_{\min}\}$, image noise $\{\sigma_u, \sigma_v\}$
		\STATE Propagate the measurement noise to the bearing vector covariance $\Sigma_{\rho}$ by \eqref{eq:covariance_bearing_vector}
		\STATE Get the camera's position $\mathbf{p}^{w}_{c}$ from $\mathbf{x}$ and $\{\mathbf{R}_{bc},\mathbf{t}_{cb}\}$
		\STATE \textbf{Repeat at each landmark $\ell_{i}$:}
		\STATE \hspace{1em} \textbf{1. Visibility evaluation:}
		\STATE \hspace{2em} Compute $\boldsymbol{\rho}^{w}_{\ell_i}$,  $\boldsymbol{\rho}^{c}_{\ell_i}$, and $Z_i$ via \eqref{eq:global_bearing_vector}
		\STATE \hspace{2em} Extract $\alpha_i$ and $\beta_i$ from $\boldsymbol{\rho}^{c}_{\ell_i}$ using \eqref{eq:azimuth_and_elevation}
		\STATE \hspace{2em} Get visibility indicators $\hat{v}_{i}$ by \eqref{eq:visibility_indicator}
		\STATE \hspace{1em} \textbf{2. Uncertainty of single-landmark observation:}
		\STATE \hspace{2em} Evaluate augmented depth uncertainty $\sigma_{d^{\prime}}$ with \eqref{eq:augmented_depth_uncertainty}
		\STATE \hspace{2em} Construct inversed covariance  $\Sigma_{\ell_i}^{-1}$ as per \eqref{eq:covariance_single_landmark}  
		\STATE \hspace{1em} \textbf{3. Construct Information Matrix:} 
        \STATE \hspace{2em} Derive $\bar{\mathbf{A}}_{i}$ by using $\boldsymbol{\rho}^{w}_{\ell_i}$ according to \eqref{eq:jacobian}
        
		\STATE \textbf{4. Approximate \ac{FIM} Calculation:}
		\STATE Sum individual information matrices weighted by visibility $\hat{v}_{i}$ as in \eqref{eq:fim_simple}
        
		\STATE \textbf{5. Uncertainty Metric Computation:}
        \STATE Acquire the final metric $\mathcal{L}_{\text{PMU}}^{\prime}$ by following \eqref{eq:approximate_position_uncertainty}
	\end{algorithmic}
\end{algorithm}

\begin{figure*}[!t]
    \centering
\tikzstyle{every node}=[font=\footnotesize]
\begin{tikzpicture}[>=stealth]
\node [inner sep=0pt, outer sep=0pt, anchor=north west] (img1) at (0,0)
  {\includegraphics[width=5.0cm]{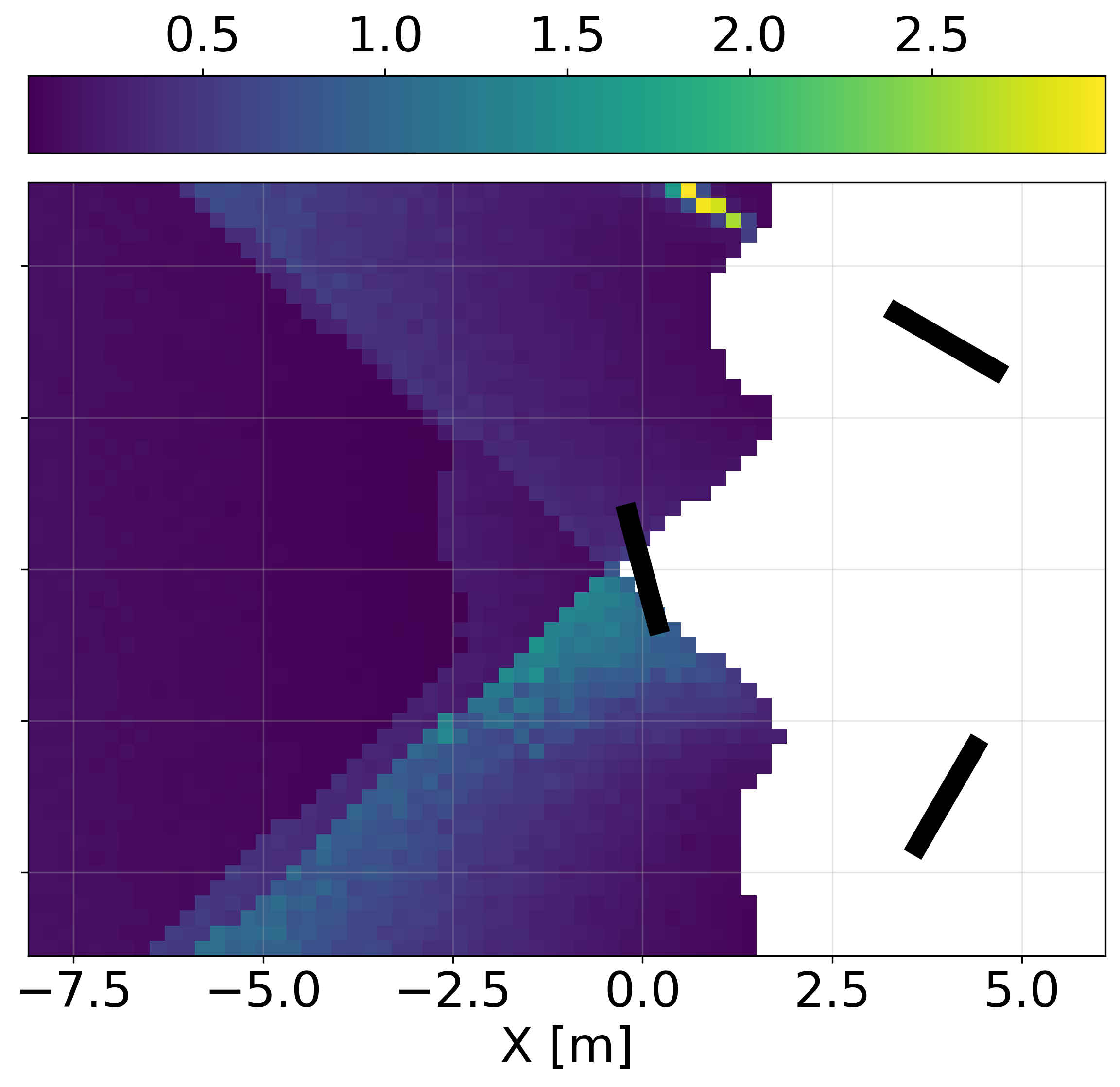}};
\node [inner sep=0pt, outer sep=0pt, right=1mm of img1] (img2)
  {\includegraphics[width=5.0cm]{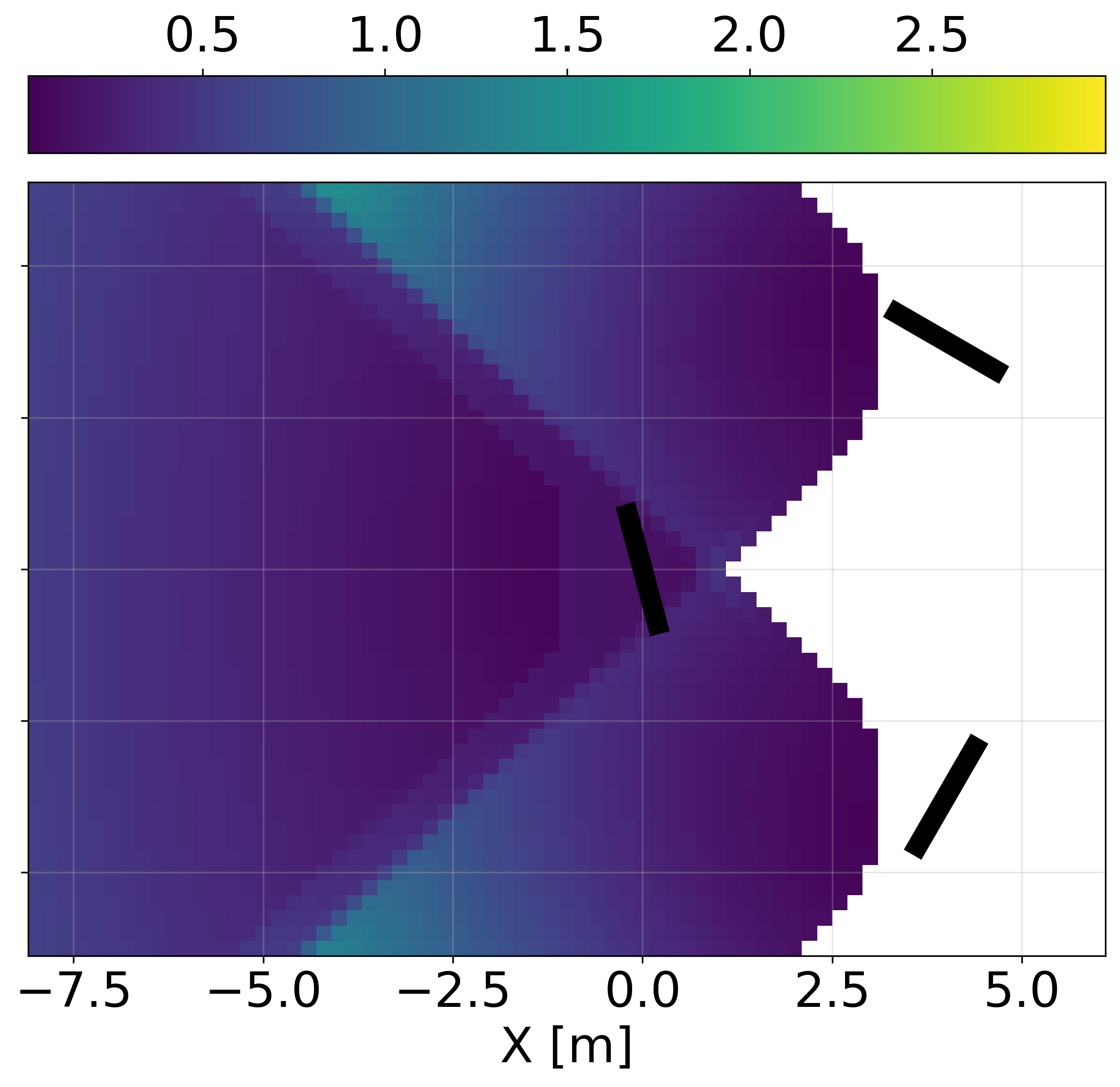}};
\node [inner sep=0pt, outer sep=0pt, right=1mm of img2] (img3)
  {\includegraphics[width=5.0cm]{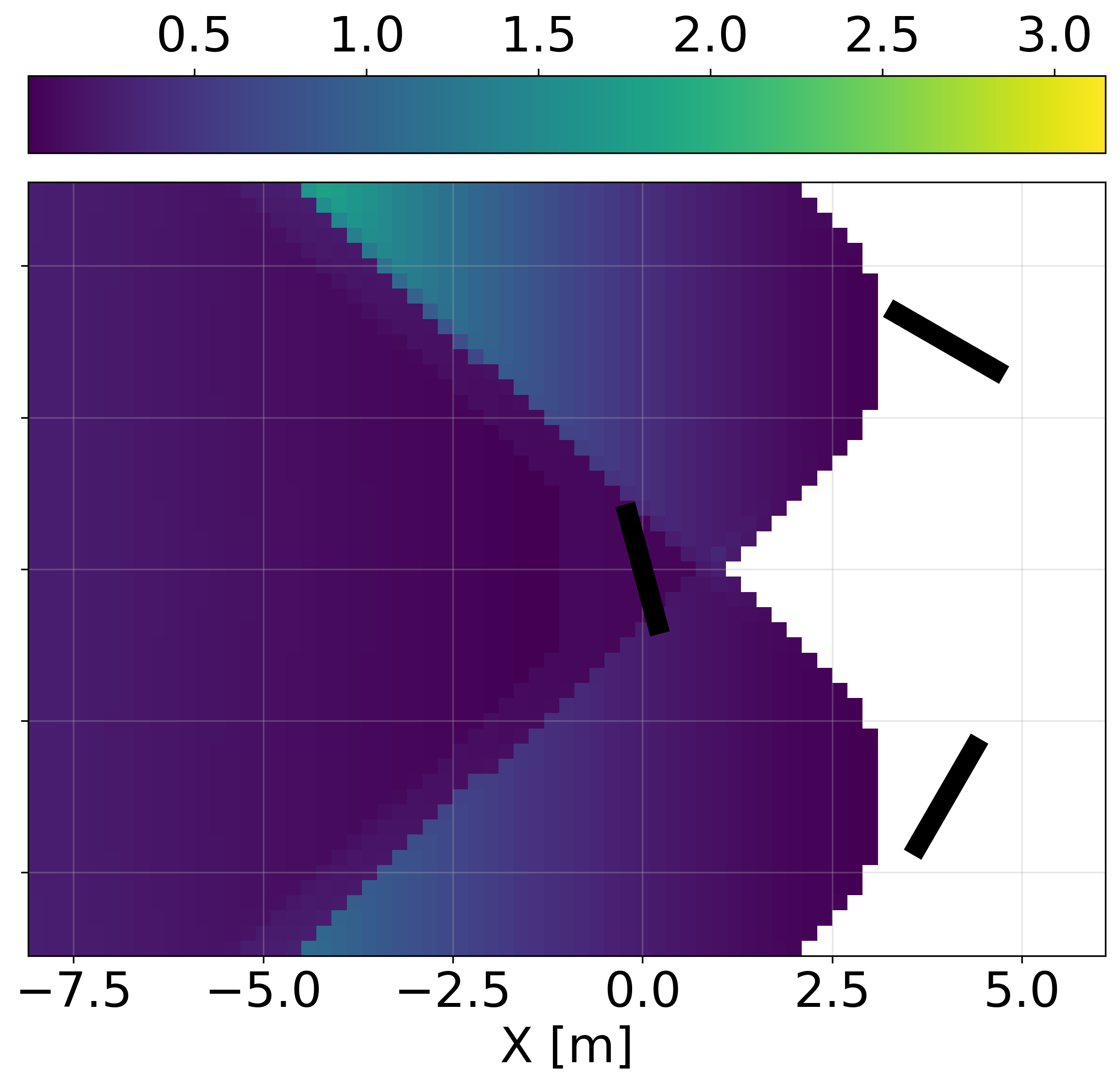}};

\node[below=1mm of img1](text1){(a) Results from the method in \cite{foehn2022alphapilot}};
\node[below=1mm of img2](text2){(b) Results from \eqref{eq:fim_position_uncertainty}};
\node[below=1mm of img3](text3){(c) Results from \eqref{eq:approximate_position_uncertainty}};
  
\end{tikzpicture}
\caption{Comparison of position uncertainty measures in an environment containing three square gates (black), each featuring four corner points. For each cell in the $xy$-plane, the camera's optical axis is assumed to be aligned with the world $x$-axis. Position uncertainty is calculated using three methods: (a) the sampling-based approach introduced in \cite{foehn2022alphapilot}, (b) the original \ac{FIM}-based method \eqref{eq:fim_position_uncertainty}, and (c) our proposed approximation \eqref{eq:approximate_position_uncertainty}. The results demonstrate that our derived measure closely approximates the true uncertainty.}
\label{fig:uncertainty_comp}
\end{figure*}

\textbf{Multi-landmark spatial geometry:} Bahnam et al.~\cite{bahnam2026monorace} confirmed in their experiments that the variation in spatial configuration of gates greatly improves solution accuracy of \ac{PnP}. To model this phenomenon mathematically, we represent visible landmarks by their 3D centroid points and compute the corresponding \ac{FIM} via \eqref{eq:fisher_information_matrix}. This yields:
\begin{equation}\label{eq:fim_simple}  
\mathbf{I}^{\prime}_{\text{FIM}}=\sum_{i=1}^{N}v_{i}\cdot\bar{\mathbf{A}}_{i}^{\top}\boldsymbol{\Sigma}_{\rho}\mathbf{\bar{A}}_{i},
\end{equation}
where $\mathbf{\bar{A}}_{i}$ is constructed from the bearing vector of the centroid of the i-th landmark, $\mathbf{p}_{\ell_{i}}^{w}=\sum_{j=1}^{M}\mathbf{p}_{\ell_{i,j}}^{w}$. 

Combine \eqref{eq:fim_simple} with \eqref{eq:covariance_single_landmark} of each visible landmark, we get the final uncertainty metric:
\begin{equation}
\mathcal{L}_{\text{PMU}}^{\prime}=-\log\det\left(\mathbf{I}^{\prime}_{\text{FIM}}\!+\!\sum_{i=1}^{N}v_{i}\cdot\boldsymbol{\Sigma}_{\ell_{i}}^{-1}\right).\label{eq:approximate_position_uncertainty}
\end{equation}
The rationale behind \eqref{eq:approximate_position_uncertainty} is intuitive: when only a single landmark is visible, or when one is exceptionally proximal and informative, $\mathcal{L}_{\text{PMU}}^{\prime}$ is dominated by that individual observation and its associated covariance \eqref{eq:covariance_single_landmark}. Conversely, when multiple landmarks are visible across a wide geometric distribution, the \ac{FIM} term \eqref{eq:fim_simple} becomes the primary determinant of $\mathcal{L}_{\text{PMU}}^{\prime}$.

Readers can refer to Algorithm \ref{alg:fast_position_uncertaity_evaluation} for the detailed computation pipeline. Additionally, Fig. \ref{fig:uncertainty_comp} validates this metric by comparing our derived estimates against the actual position uncertainty.


%
\section{Optimization Framework}\label{sec:numerical_optimization}

This section details the numerical solver and optimization strategy. Given the highly non-convex problem structure of \eqref{eq:perception_aware_time_optimal_trajectory_planning} and the potentially large number of variables and constraints caused by long flight distances, naively solving \eqref{eq:time_optimal_trajectory_planning} and \eqref{eq:perception_aware_time_optimal_trajectory_planning} in a one-stop manner is computationally intractable. 

The primary challenge stems from the lack of good initial guesses for time allocation and trajectory. The former makes it difficult to determine the optimal discretization resolution: excessive sampling points introduce unnecessary computational overhead and impede convergence, whereas an insufficient number degrades numerical integration accuracy. Meanwhile, poor initial trajectories often lead to entrapment in local minima. To address these challenges, we propose a three-step optimization framework designed to ensure consistent and efficient convergence.

\subsection{Three-Step Optimization Framework}

Fig. \ref{fig:optimization_diagram} illustrates the proposed three-step framework. Specifically, step one utilizes a polynomial-based \ac{TOGT} planner~\cite{qin2023time}. Step two enhances time-optimality by assigning a sufficient number of polynomial segments capable of representing the true time-optimal trajectory. Finally, the third step employs a direct multiple-shooting method similar to~\cite{zhou2023efficient} to refine the trajectory, incorporating the optimal gate-traversal points obtained from the second step and all necessary perception objectives. 

\textbf{Step 1: polynomial-based TOGT planner.}
Polynomial parameterization, combined with differential flatness, has proven extremely efficient for formulating dynamically feasible quadrotor trajectories \cite{mellinger2011minimum}. Furthermore, recent research demonstrates that convex geometrical constraints can be explicitly eliminated \cite{wang2022geometrically}, paving the way for the solution to the \ac{TOGT} problem presented in \cite{qin2023time}. 

Let's introduce a differential-flatness transformation that maps flat outputs $\mathbf{y}$ and its higher-order derivatives, denoted as $\mathbf{y}^{[s]}=[\mathbf{y}^{T},\mathbf{\dot{y}}^{T},\dots,\mathbf{y}^{(s)^{T}}]^{T}$, to actual quadrotor states and inputs:
\begin{equation}\label{eq:flatness_map}
\begin{aligned}
  \mathbf{x} & =\Psi_{\mathbf{x}}\left(\mathbf{y},...,\mathbf{y}^{(s-1)}\right),\\
  \mathbf{u} & =\Psi_{\mathbf{u}}\left(\mathbf{y},...,\mathbf{y}^{(s)}\right).
  \end{aligned}
\end{equation}
This allows optimization of $\mathbf{x}$ and $\mathbf{u}$ to be performed through the optimization of the polynomial-parameterized $\mathbf{y}$. The state and input constraints can also be equivalently expressed as:
\begin{equation}\label{eq:flatness_constraint}
h_{\Psi}(\mathbf{y}):=h(\Psi_{\mathbf{x}}(\mathbf{y}^{[s-1]}),\Psi_{\mathbf{u}}(\mathbf{y}^{[s]}))\leq0.
\end{equation}

\begin{figure}[!t]
\centering
\includegraphics[width=0.48\textwidth]{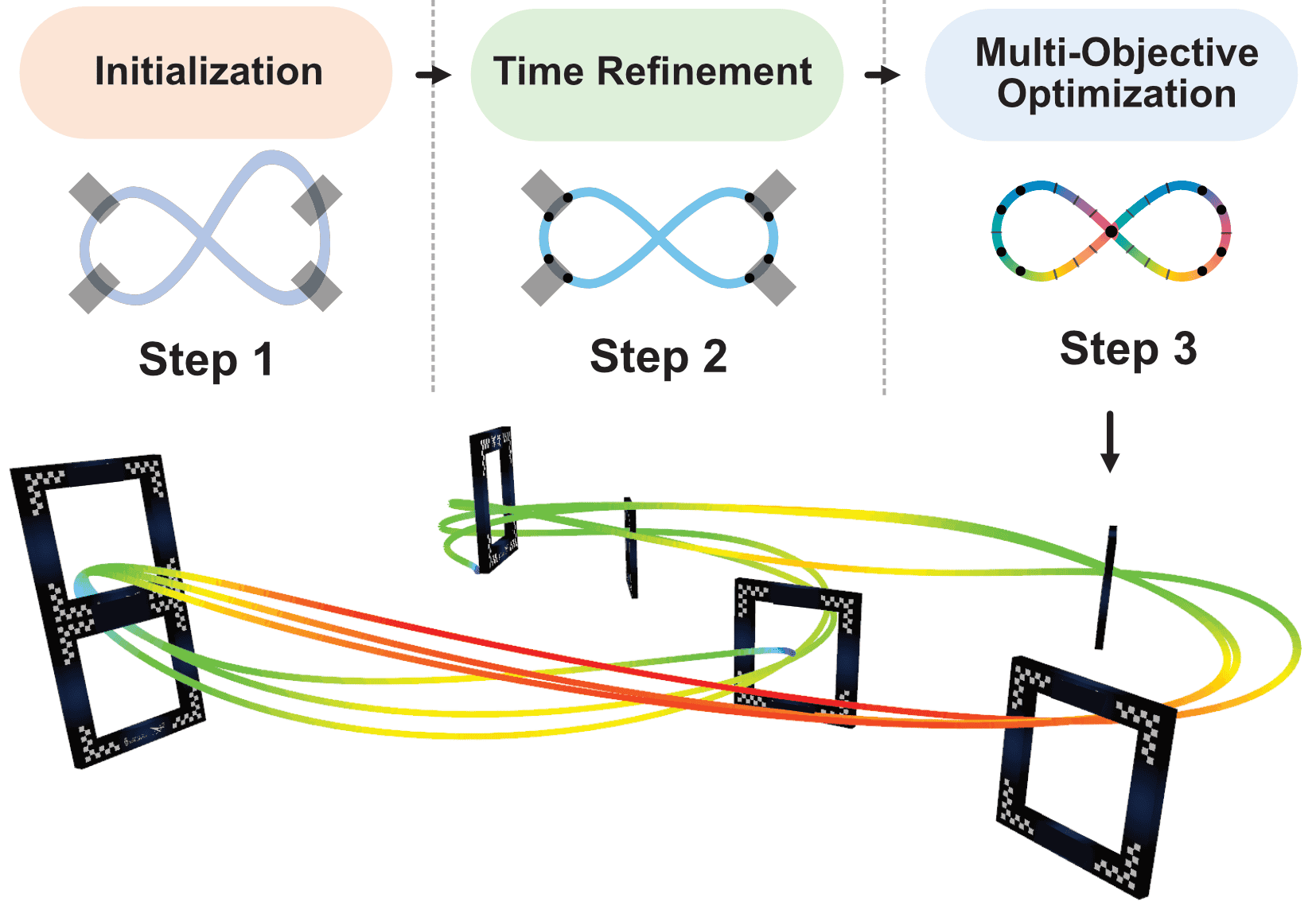}
\caption{The optimization strategy proceeds through three stages. Step 1: A polynomial-based approach rapidly generates a sub-time-optimal trajectory. Step 2: increasing segment density approximates the time-optimal solution to establish initial guesses and optimal traversal points. Step 3: A multi-stage multiple-shooting framework refines the trajectory by constraining these points as waypoints and integrating perception objectives.}\label{fig:optimization_diagram}
\vspace{-0.5em}
\end{figure}

However, since $\mathbf{y}$ is a piecewise polynomial, a significant number of continuity constraints must be formulated between segments to ensure a feasible trajectory. Wang et al. \cite{wang2022geometrically} present an analytical method to calculate the coefficients of the piecewise polynomial $\mathbf{y}$ using a sequence of waypoints, $\mathbf{p}_{w_i}$, and the time allocation for each segment, $T_{i}$. They prove that this solution is unique, given an effort-minimization cost and a specific polynomial order. Consequently, the optimization can be performed over $\mathbf{p}_{w_i}$ and $T_{i}$ instead of the polynomial coefficients, which greatly reduces the dimensionality of the variables. In the context of \ac{TOGT} trajectory planning, it leads the following approximation of (\ref{eq:time_optimal_trajectory_planning}):
\begin{subequations}\label{eq:polynomial_time_optimal_trajectory_planning}
\begin{align}
  \min_{\mathbf{p}_{w_{i}},T_{i}} & \sum_{i=1}^{N_{w}+1}T_{i},\\
  \text{s.t.} \quad & \mathbf{y}_{1}^{[s-1]}(0)=\mathbf{y}_{\text{init}},\quad \mathbf{y}_{N_{w}\!+\!1}^{[s-1]}(T_{N_{w}\!+\!1})=\mathbf{y}_{\text{term}},\\
  & \mathbf{y}_{i}^{[s-1]}(0)=\mathbf{y}_{i-1}^{[s-1]}(T_{i}),\\
  & \text{constraint in (\ref{eq:flatness_constraint})},\\ 
  & h_{\mathcal{G}_{i}}(\mathbf{p}_{w_{i}})\leq0,\\ 
  & -T_{i}<0,
  \end{align}
\end{subequations}
where $\mathbf{y}_{\text{init}}$ and $\mathbf{y}_{\text{term}}$ are the flat output states corresponding to the given initial and terminal quadrotor states, respectively. Finally, solving ~\eqref{eq:polynomial_time_optimal_trajectory_planning} following the procedure in \cite{qin2023time} yields a good initial guess of the \ac{TOGT} trajectory.

\textbf{Step 2: Pushing time optimality.}
We further enhance time optimality by subdividing each trajectory segment into $M_{p}$ polynomial pieces and solving the problem once again based on the previous solution. The rationale for this strategy, discussed in \cite{qin2024time}, is that there exists a minimum viable number of polynomial pieces capable of representing time-optimal trajectories with sufficiently high resolution, and therefore integrating more polynomials can effectively shrink the gap to the optimal trajectory. The enhanced problem formulation is given as
\begin{subequations}\label{eq:spline_time_optimal_trajectory_planning}
\begin{align}
\min_{\mathbf{P},\mathbf{T}}\  & \sum_{i=1}^{N_{w}+1}\sum_{j=1}^{M_{p}}T_{i,j},\\
\text{s.t.}\quad & \mathbf{y}_{1,1}^{[s-1]}(0)=\mathbf{y}_{\text{init}},\;\mathbf{y}_{N_{w}\!+\!1,M_{p}}^{[s-1]}(T_{N_{w}\!+\!1,M_{p}})=\mathbf{y}_{\text{term}},\\
  & \mathbf{y}_{i,j}^{[s\!-\!1]}(0)=\mathbf{y}_{i,j\!-\!1}^{[s\!-\!1]}(T_{i,j}),\\
  & \text{constraint in (\ref{eq:polynomial_time_optimal_trajectory_planning}d-e)},\\
  & T_{i,j}>0.
\end{align}
\end{subequations}
where the vector $\mathbf{P}\in\mathbb{R}^{3\times ((N_{w}+1)M-1)}$ contains all intermediate waypoints (both free and gate-related), and $\mathbf{T}\in \mathbb{R}^{(N_{w}+1)M}$ denotes the time allocation. The optimized trajectory typically deviates by less than 5\% from the theoretical time-optimal duration. Post-optimization, we recover the polynomial coefficients from $\mathbf{P}$ and $\mathbf{T}$, sample the trajectory, and compute the quadrotor states and inputs via (\ref{eq:flatness_map})

\textbf{Step 3: Multi-stage multiple-shooting.}
Based on the discreteized near-time-optimal trajectory, this step formulates a multiple-shooting problem with waypoint constraints. We denote $\hat{T}_{i}$ as the optimized time duration of the $i$-th segment from the second step. The node number of this segment is selected as $N_{i}=\hat{T}_{i}/\Delta \bar{t}$, where $\Delta \bar{t}$ is the desired sampling time. Then, the total node number becomes $N=\sum_{i=1}^{N_{w}+1} N_i$. The control input here is defined as $\mathbf{u}:=\mathbf{r}_{T}$. As a result, we obtain the following multi-stage multiple-shooting problem
\begin{subequations}\label{eq:discrete_time_perception_aware_time_optimal_trajectory_planning}
\begin{align}
\min_{\mathbf{x}_{k},\mathbf{u}_{k},\Delta t_{i}}\  & \sum_{i=1}^{N_{w}+1}T_{i}+\sum_{k=1}^{N}\mathcal{L}(\mathbf{x}_{k}, \mathbf{u}_{k}),\\
\text{s.t.}\quad & \mathbf{x}_{k_{i}+1}=\mathbf{F}(\mathbf{x}_{k_{i}},\mathbf{u}_{k_{i}},\Delta t_{i}),\\
  & \mathbf{x}_{0}=\mathbf{x}_{\text{init}},\quad\mathbf{x}_{N}=\mathbf{x}_{\text{term}},\\
  & \mathbf{x}_{\min}\leq\mathbf{x}_{k}\leq\mathbf{x}_{\max},\\
  & \mathbf{u}_{\min}\leq\mathbf{u}_{k}\leq\mathbf{u}_{\max},\\
  & \|\mathbf{p}_{i}-\mathbf{p}_{w_{i}}\|_{2}^{2}\leq0,\\
  & \Delta t_{i} > 0.\\
  & \text{constraints in (\ref{eq:soft_fov_constraints})},
\end{align}
\end{subequations}
where $T_{i}=N_{i}\Delta t_{i}$ and $\mathbf{x}_{k_{i}+1}=\mathbf{F}(\mathbf{x}_{k_{i}},\mathbf{u}_{k_{i}})$ is the discrete-time dynamics of the quadrotor derived from Section \ref{subsec:dynamics}. The position $\mathbf{p}_{i}$ is extracted from the state of the vehicle passing through the $i$-th gate, while $\mathbf{p}_{w_{i}}$ is obtained from the optimal gate-traversal points obtained in the last stage.

At each stage, the Runge-Kutta 4 (RK4) integrator uses a dynamically adjusted sampling time, $\Delta t_{i}$, based on the decision variable. With a proper initialization, we utilize IPOPT~\cite{wachter2002interior} to solve the optimization problem, employing CasADi~\cite{andersson2019casadi} as the automatic differentiation toolbox. In most cases, the optimized $\Delta t_{i}$ values merely fluctuate around the initial $\Delta  \bar{t}$, as the initial time allocation is already near-optimal thanks to the first two steps; this is the key reason for our remarkable convergence performance in handling free-end-time problems.

\subsection{Summary}

In essense, this pipeline adopts a divide-and-conquer methodology and effectively leverages the strengths of both polynomial and direct methods. Specifically, the polynomial approach in the first two steps quickly generates a near-time-optimal trajectory with an optimality gap of less than 5\%. Subsequently, the direct method utilizes its superior formulation flexibility to incorporate complex objectives and enhance trajectory representability, producing a near bang-bang control policy that is featured in time-optimal flights.

\section{Model Predictive Contouring Tracking Controller}\label{sec:mpctc}

In this section, we introduce a \ac{MPCTC} method for accurate trajectory following, applicable to any dynamically feasible trajectory, including those generated by the aforementioned methods.

\begin{figure}[!htbp]
\centering
\includegraphics[width=0.4\textwidth]{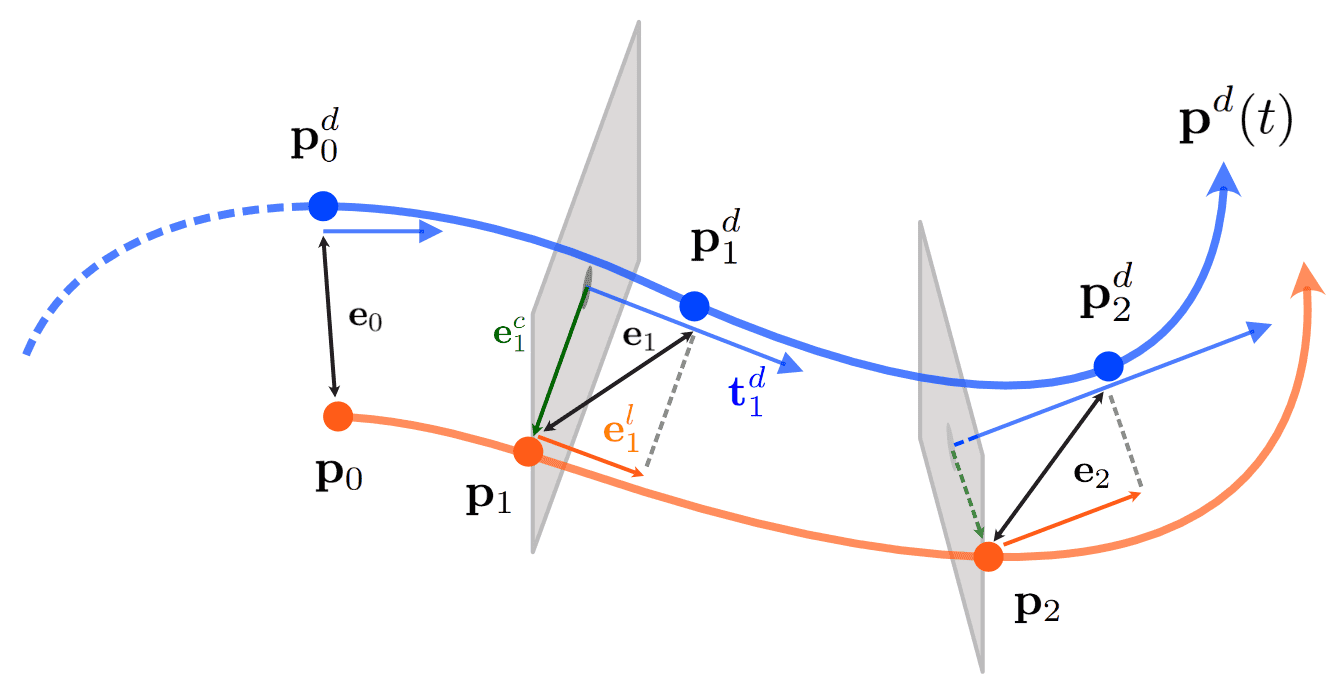}
\caption{Contouring error $\mathbf{e}^c$ and progress error $\mathbf{e}^l$.}\label{fig:mpctc}
\vspace{-0.3cm}
\end{figure}

\subsection{Overview}

Consider a task of tracking a desired time-parameterized trajectory $\mathbf{x}^d$ as shown in Fig. \ref{fig:mpctc}, and the current quadrotor state is $\mathbf{x}_{0}$, which is slightly off the reference path. Let's denote their corresponding positions as $\mathbf{p}^d$ and $\mathbf{p}_{0}$, respectively. If we continue to adhere to the reference time sequence of $\mathbf{x}^d$, the vehicle will inevitably cut corners due to both the initial position error and model mismatch. This occurs because the reference state, which is temporally ahead of the current and predicted states, continuously pulls the system forward, even when the vehicle has already deviated significantly from the desired path; in some senses, this could be a primary factor contributing to the significantly higher failure rate of the \ac{MPC} in racing scenarios \cite{song2023reaching}.

To address this issue, we propose two key enhancements. First, prior to reference state selection, the closest point on the reference path to the current position $\mathbf{p}_{0}$, denoted as $\mathbf{p}_{0}^{d}$, is identified. Second, the position error is decomposed into contouring and progress components, as in the \ac{MPCC} approach \cite{romero2022model}, with higher weighting assigned to the contouring error to enhance path-following accuracy.

\subsection{Determination of the Reference States}

We first search for $\mathbf{p}_{0}^{d}$ on the reference trajectory. The underlying optimization problem is described as:
\begin{equation}
\begin{aligned}
\min_{t} \quad & \|\mathbf{p} - \mathbf{p}^d(t)\|_2^2, \\
\text{s.t.} \quad & t \in [0, T^{d}], 
\end{aligned}
\end{equation}
where $T^{d}$ is the end time of the $\mathbf{p}^d$. We adopt a sampling-based method for simplicity and efficiency. Specifically, a fixed number of points are incrementally sampled along the reference path, beginning from the closest point identified in the previous iteration, and the process terminates when the distance between the sampled point and the current position exceeds that of the preceding point.

Given the initial reference point $\mathbf{p}_{0}^{d}$ and the corresponding reference time $t_0^d$, the first reference state can be defined as $\mathbf{x}^d_0=\mathbf{x}^d(t_0^d)$. The subsequent reference states are then obtained as
\begin{equation}
\begin{aligned}
\mathbf{x}^d_1 &= \mathbf{x}^d(t_0^d + \Delta t_{\text{mpc}}), \\
\mathbf{x}^d_2 &= \mathbf{x}^d(t_0^d + 2\cdot \Delta t_{\text{mpc}}), \\
&\ldots \\
\mathbf{x}^d_{N_{\text{mpc}}} &= \mathbf{x}^d(t_0^d + N_{\text{mpc}}\cdot \Delta t_{\text{mpc}}),
\end{aligned}
\end{equation}
where 
$\Delta t_{\text{mpc}}$ denotes the \ac{MPC} sampling interval and  $N_{\text{mpc}}$ represents the number of prediction nodes. 

\subsection{Contouring and Progress Errors}

Since the reference velocity is available, the tangential direction at each sampled point along the reference path can be directly obtained and is denoted by $\mathbf{t}_{k}^{d}$:
\begin{equation}
\mathbf{t}_{k}^{d}=\mathbf{v}_{k}^{d}/\|\mathbf{v}_{k}^{d}\|_{2}.
\end{equation}
This makes the computation of the contouring and progress errors very convenient. For an arbitrary node $k$, we can readily obtain the position tracking vector as:
\begin{equation}
\mathbf{e}_{k}=\mathbf{p}_{k}-\mathbf{p}_{k}^{d},
\end{equation}  
Provided the tangential vector, the above error can be decomposed into two components: contouring error, $\mathbf{e}_{k}^{c}$, and progress error, $\mathbf{e}_{k}^{l}$:
\begin{align}
\mathbf{e}_{k}^{l}&=(\mathbf{e}_{k}^{T}\mathbf{t}_{k}^{d})\cdot\mathbf{t}_{k}^{d},\\
\mathbf{e}_{k}^{c}&=\mathbf{e}_{k}-\mathbf{e}_{k}^{l}=(\mathbf{I}_{3}-\mathbf{t}_{k}^{d}(\mathbf{t}_{k}^{d})^{T})\mathbf{e}_{k}.
\end{align}
Note that when the magnitude of the reference velocity falls below a certain threshold, the last valid tangential vector is retained, or alternatively, a constant vector is used. The key distinction between \ac{MPCTC} and \ac{MPCC} lies in how the progress term is treated: in \ac{MPCTC}, it is predetermined by the input trajectory rather than treated as an optimization variable.

\subsection{Optimal Control Problem Formulation}
The goal is to, given the current state $\mathbf{x}_0$, find the optimal control strategy~$\pi(\mathbf{x})$ that best tracks the target trajectory. The optimal control problem is constructed as follows:
\begin{algbox}{Model Predictive Contour-Tracking Control}
\begin{argmini!}[l]
    {\mathbf{u}_{k}} 
    { \begin{aligned}[t]
        \sum_{k=0}^{N_{\text{mpc}}} & \|\mathbf{e}_{k}^{l}\|_{\mathbf{Q}_{l}}^{2} + \|\mathbf{e}_{k}^{c}\|_{\mathbf{Q}_{c}}^{2} + \|\mathbf{q}_{k}\ominus\mathbf{q}_{k}^{d}\|_{\mathbf{Q}_{q}}^{2} \\
        & + \|\mathbf{v}_{k}-\mathbf{v}_{k}^{d}\|_{\mathbf{Q}_{v}}^{2} + \|\boldsymbol{\omega}_{k}-\boldsymbol{\omega}_{k}^{d}\|_{\mathbf{Q}_{\omega}}^{2} \\
        & + \|\mathbf{f}_{k}-\mathbf{f}_{k}^{d}\|_{\mathbf{Q}_{f}}^{2} + \|\mathbf{u}_{k}\|_{\mathbf{Q}_{u}}^{2}
      \end{aligned} 
    }
    {\label{eq:mpc_contour_tracking}} 
    {\pi(\mathbf{x}_0) =} 
    \addConstraint{\mathbf{x}_{k+1}}{= F(\mathbf{x}_{k},\mathbf{u}_{k},\bar{dt})}{}
    \addConstraint{\text{constraints in (\ref{eq:discrete_time_perception_aware_time_optimal_trajectory_planning}d-e)}}{}{}
\end{argmini!}
\end{algbox}
where $\mathbf{Q}_{l}$, $\mathbf{Q}_{c}$, $\mathbf{Q}_{q}$, $\mathbf{Q}_{v}$, $\mathbf{Q}_{\omega}$, $\mathbf{Q}_{f}$, and $\mathbf{Q}_{u}$ are weights for tracking errors in progress, contouring, orientation, velocity, angular velocity, thrust, and thrust rates, in the respective dimmensions.

The controller operates in a receding horizon manner. We implement it using \texttt{acados}~\cite{verschueren2021} with the quadratic programming solver HPIPM~\cite{frison2020} to solve the optimal control problem with the real-time iteration scheme described in \cite{diehl2005real}.

\section{Results}\label{sec:results}

In this section, we evaluate the proposed time-optimal trajectory generation framework in terms of solution quality, time optimality, and enhancements to the position estimate of a vision-based localization system. All the quadrotor parameters used in this section are provided in Tab. \ref{tab:quad_config}, where the RPG configuration is for simulation, and the EXP configuration is for real-world drones. The same solver setup given in Tab. \ref{tab:optimization_params} is utilized for all tests.

\begin{table}[htbp!]
\centering
\caption{Quadrotor configurations}\label{tab:quad_config}
\tabcolsep=0.08cm	
\begin{threeparttable}
\begin{tabular}{@{}ccccccc@{}}
    \toprule
    & $m$ (kg) & $l$ (m) & $\mathbf{J}_{\text{diag}}$ (g$\cdot$m$^2$) & $(\underline{f},\overline{f})$ (N) & $c_{\tau}$  & $\overline{\boldsymbol{\omega}}$ (rad$\,$s$^{-\!1}$) \\ \midrule
    RPG & 0.7  & 0.125  & [2.4,1.8,3.7]     & [0.0, 8.5]       & 0.033       & [10, 10, 6]            \\

    
    EXP & 1.0  & 0.125  & [2.5,2.1,4.3]     & [0.0, 8.5]       & 0.013       & [10, 10, 6]            \\

    \bottomrule
\end{tabular}
\begin{tablenotes}[para,flushleft]
\item $\mathbf{J}_{\text{diag}}$ denotes the diagonal elements of the inertia matrix.
\end{tablenotes}
\end{threeparttable}
\end{table}

\subsection{Time-Optimal Trajectory Generation}

\begin{figure}[!htbp]
    \centering
    \tikzstyle{every node}=[font=\footnotesize]
    \begin{tikzpicture}[>=stealth]
        \node [inner sep=0pt, outer sep=0pt] (img1) at (0,0)
        {\includegraphics[width=8.1cm]{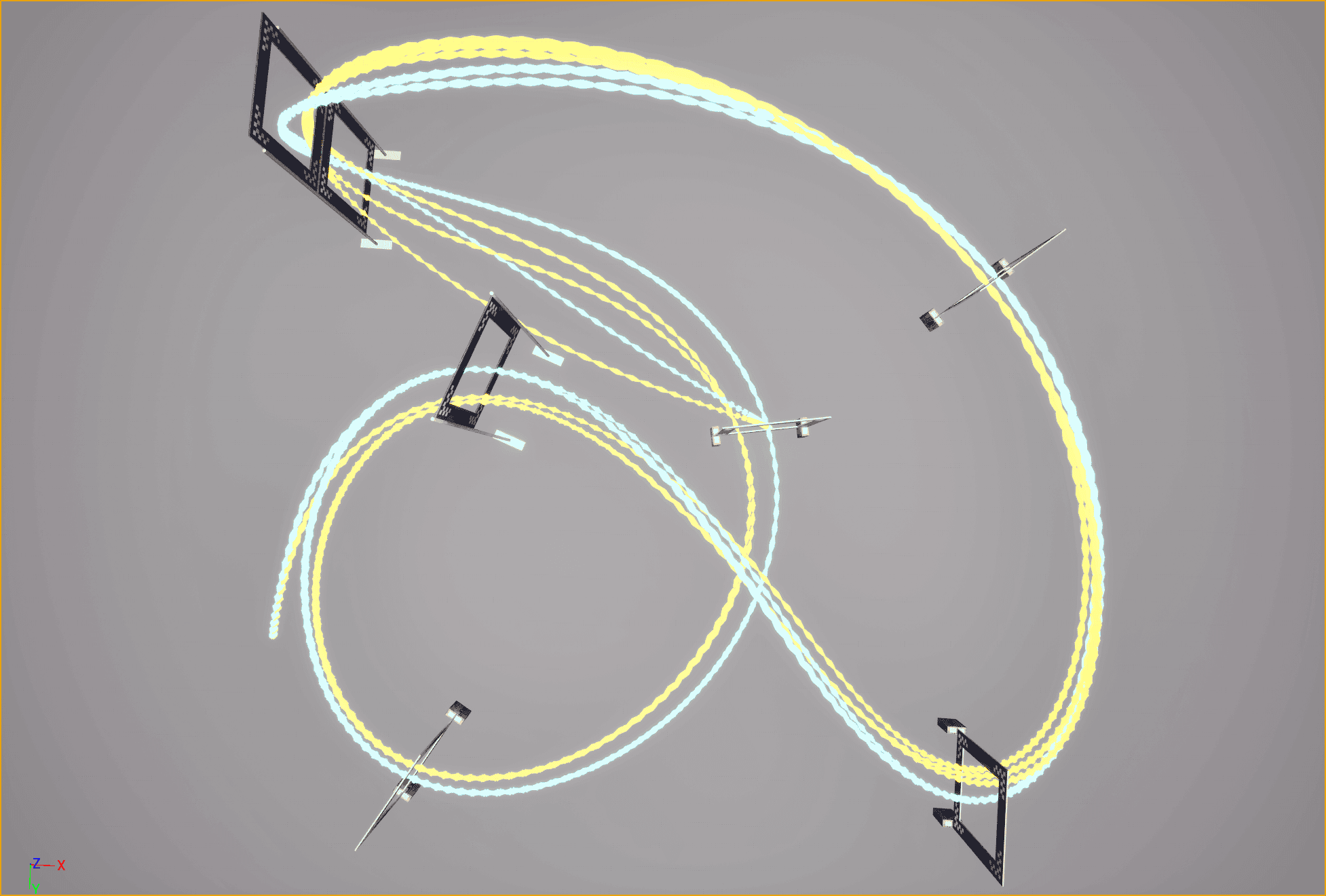}};

        \node [inner sep=0pt, outer sep=0pt, below=0mm of img1.south west, anchor=north west] (img2) 
        {\includegraphics[width=2.7cm]{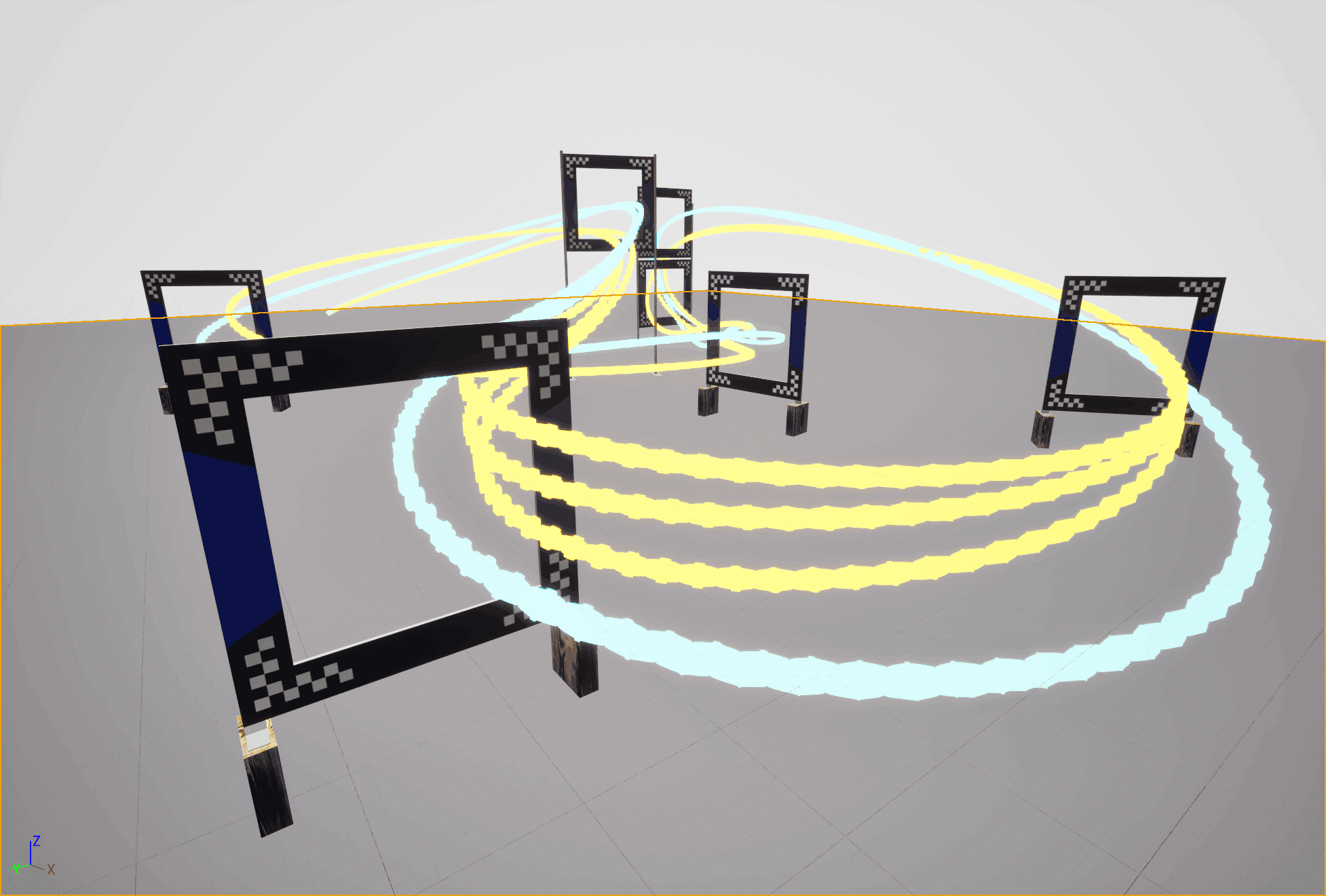}};
        
        \node [inner sep=0pt, outer sep=0pt, right=0mm of img2] (img3) 
        {\includegraphics[width=2.7cm]{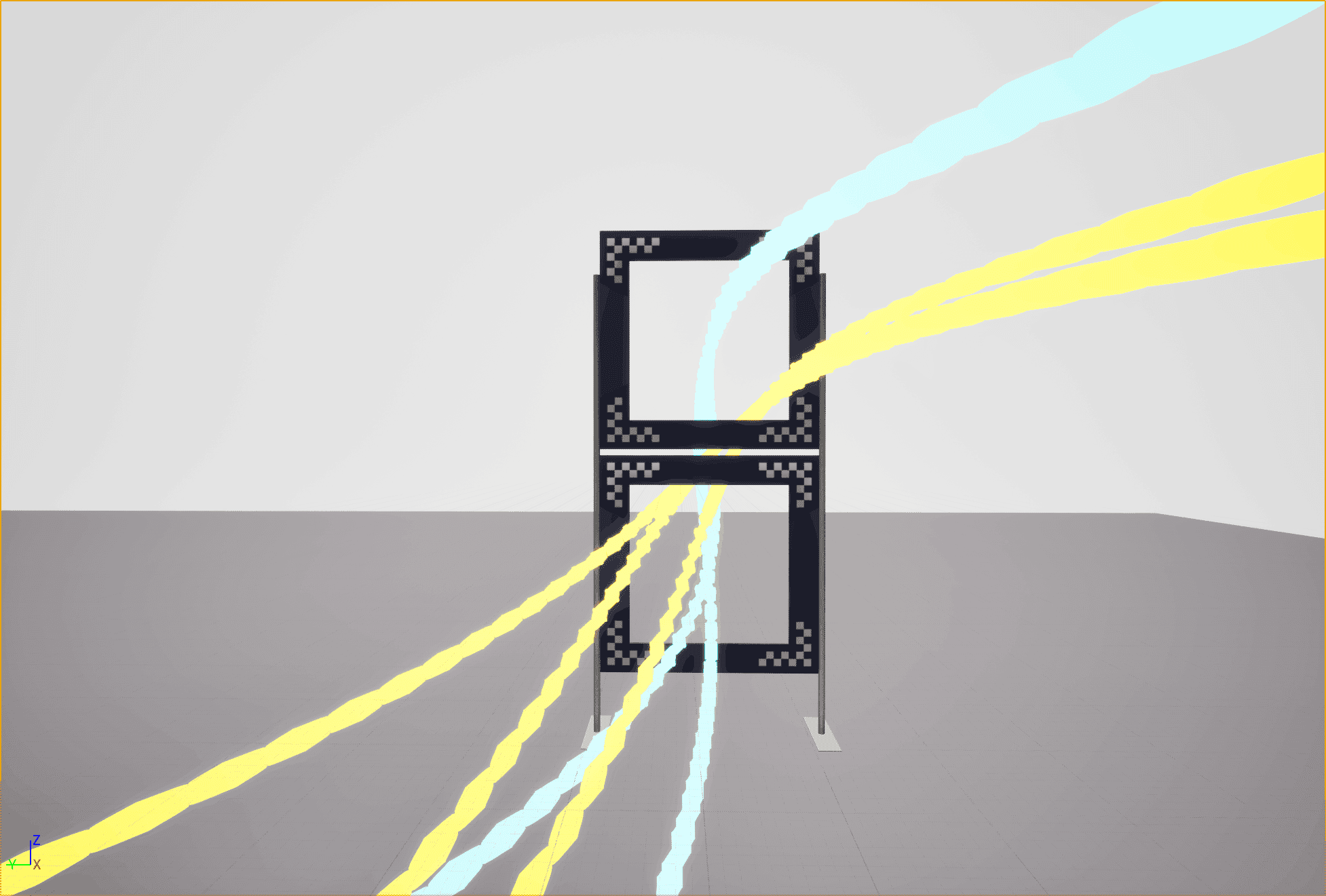}};
        
        \node [inner sep=0pt, outer sep=0pt, right=0mm of img3] (img4) 
        {\includegraphics[width=2.7cm]{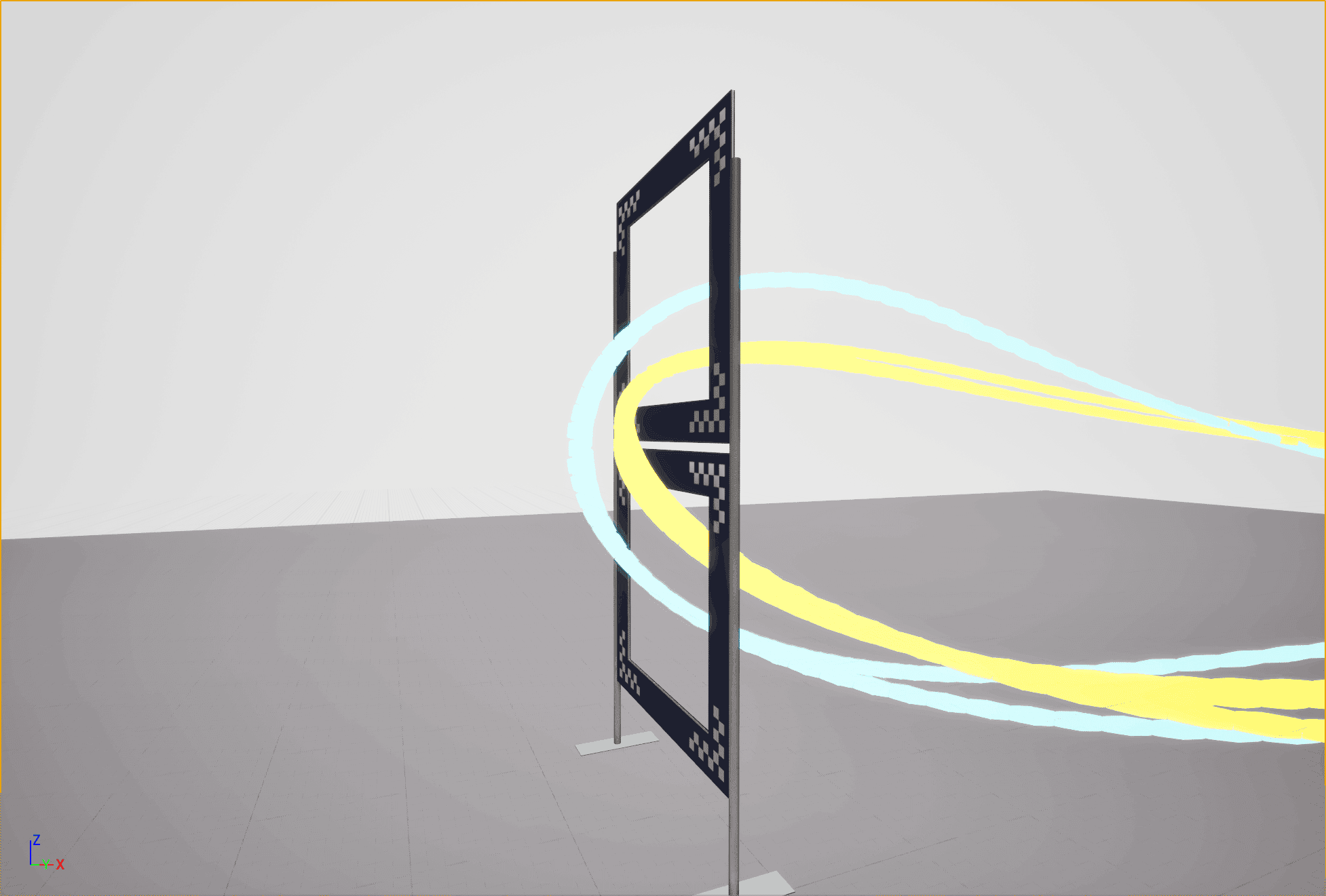}};

        \node [inner sep=0pt, outer sep=0pt, below=2mm of img2.south west, anchor=north west] (img5) 
        {\includegraphics[width=8.2cm]{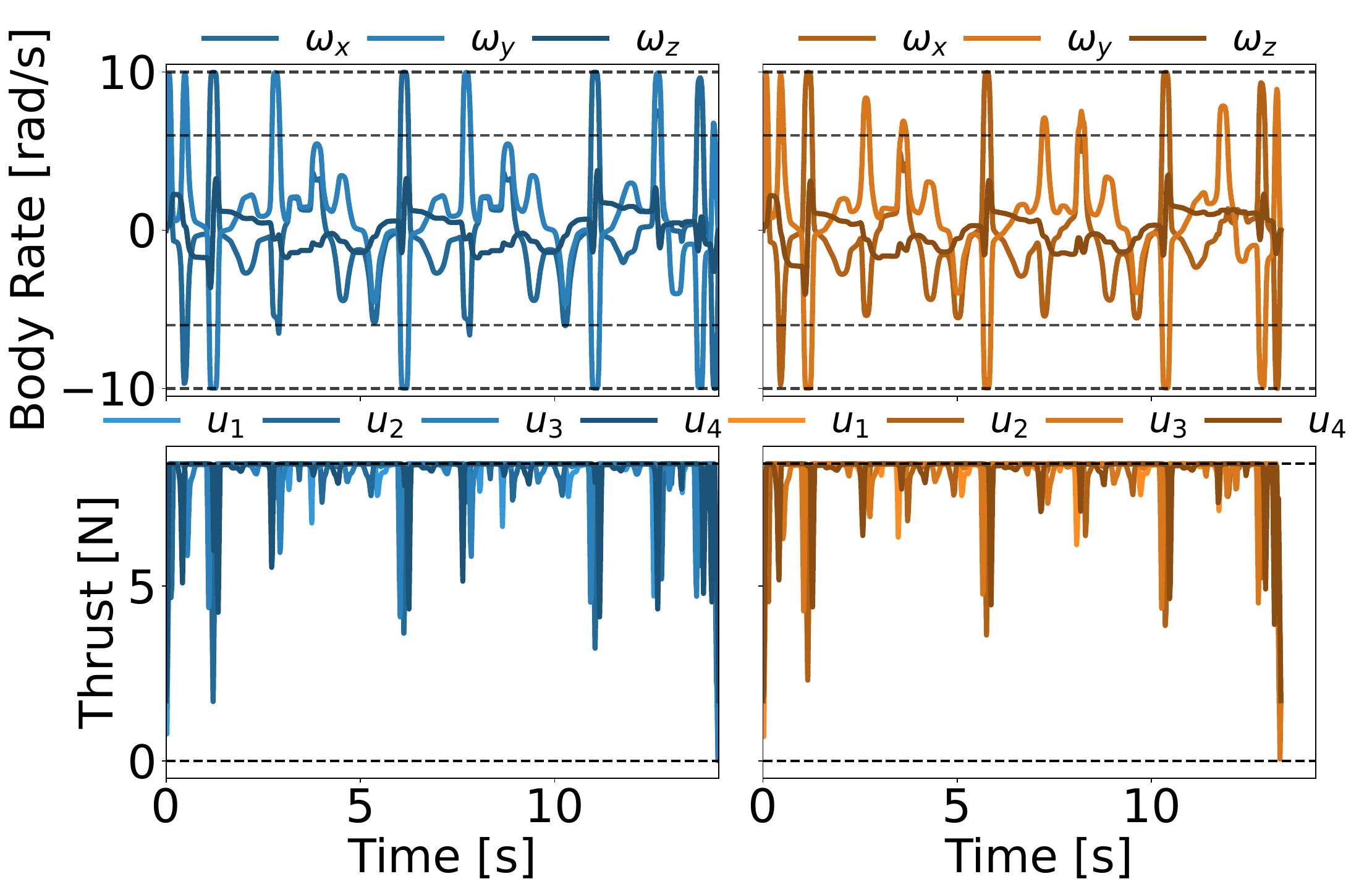}};
    \end{tikzpicture}
    \caption{\ac{TOWP} trajectory (blue) versus \ac{TOGT} trajectory (yellow) on the Split-S track. Note that while both trajectories satisfy the single-rotor thrust and body-rate constraints, the TOGT trajectory has more timing advantage as it makes better use of the navigable space of each gate.}
    \label{fig:togt_vs_towp}
\end{figure}



We first consider a basic racing scenario with a Split-S track shown in ig. \ref{fig:togt_vs_towp} where time is the only meirc and no other objective, such as perception awareness, is incorporated. We benchmark different approaches in terms of their resulting durations finishing two laps. The start and en positions are $(-5.0, 4.75, 1.1)\,\text{m}$ and $(4.5, -0.45, 1.55)\,\text{m}$, respectively, both with stationary flight status. Note that one lap is from the first gate at $(-0.6, -0.86, 3.78)\,\text{m}$ to the last gate at $(-1.95, 6.81, 1.55)\,\text{m}$. All gates are sqaure with the same inner side length of 1.45 m. The quadrotor's collision volumn is modeled as a ball with a diameter of $0.4\,\text{m}$.

\begin{table}[htbp!]
\centering
\caption{Parameters of optimization}\label{tab:optimization_params}
\begin{tabular}{@{}cccccc@{}}
\toprule
                  & Optimization Step         & Paramter Name                         & Symbol                  & Value     &  \\ \midrule
                  & Step II                   & Num. of polynomial pieces & $M_{p}$                 & 5         &  \\
\multirow{3}{*}{} & \multirow{3}{*}{Step III} & Initial sampling time                 & $\Delta \bar{t}$        & 2 ms     &  \\
                  &                           & Convergence tolerance                 & $\epsilon_{\text{tol}}$ & $1e^{-5}$ &  \\
                  &                           & Max. iteration                        & $N_{\text{max}}$               & 5000      &  \\ \cmidrule(l){2-6} 
\end{tabular}
\end{table}

\begin{table}[htbp]
\centering
\caption{Trajectory benchmarking in the Split-S track.}\label{tab:benchmarking}
\resizebox{\linewidth}{!}{
    \begin{tabular}{lccc}
        \toprule
        Methods & Duration (s) & Comp. Time (s)& Length (m) \\
        \midrule
        Pure Time Optimality & & & \\
        \backgroundcolor
        \tikz[baseline=-0.6ex] \node[circle, draw={rgb,1:red,1.0; green,0.647; blue,0.0}, fill={rgb,1:red,1.0; green,0.647; blue,0.0}, line width=0.8pt, inner sep=0pt, minimum size=6pt] {}; CPC\cite{foehn2021time} & 14.25 & 2631.0 & 210.34 \\
        \tikz[baseline=-0.6ex] \node[circle, draw={rgb,1:red,0.365; green,0.733; blue,0.388}, fill={rgb,1:red,0.365; green,0.733; blue,0.388}, line width=0.8pt, inner sep=0pt, minimum size=6pt] {}; Poly \cite{qin2023time} & 16.75 & 6.13 & 221.91 \\
        \backgroundcolor
        \tikz[baseline=-0.6ex] \node[circle, draw={rgb,1:red,0.0; green,0.45; blue,0.0}, fill={rgb,1:red,0.0; green,0.45; blue,0.0}, line width=0.8pt, inner sep=0pt, minimum size=6pt] {}; Fast-Fly\cite{zhou2023efficient} & 14.25 & 256.48 & 210.13 \\
        \tikz[baseline=-0.6ex] \node[circle, draw={rgb,1:red,0.247; green,0.369; blue,0.710}, fill={rgb,1:red,0.247; green,0.369; blue,0.710}, line width=0.8pt, inner sep=0pt, minimum size=6pt] {}; \ac{TOWP}-Ours & 14.22 & 79.37 & 209.27 \\
        \backgroundcolor
        \tikz[baseline=-0.6ex] \node[circle, draw={rgb,1:red,0.867; green,0.09; blue,0.09}, fill={rgb,1:red,0.867; green,0.09; blue,0.09}, line width=0.8pt, inner sep=0pt, minimum size=6pt] {}; \ac{TOGT}-Ours & \textbf{13.33} & \textbf{68.63} & \textbf{192.82} \\
        
        Perception-Aware \ac{TOWP}-Ours & & & \\
        \tikz[baseline=-0.6ex] \node[regular polygon, regular polygon sides=3, rotate=0, draw={rgb,1:red,0.247; green,0.369; blue,0.710}, fill={rgb,1:red,0.247; green,0.369; blue,0.710}, line width=0.8pt, inner sep=0pt, minimum size=6pt] {}; LA & 15.36 & \textbf{144.21} & \textbf{205.59} \\
        \backgroundcolor
        \tikz[baseline=-0.6ex] \node[regular polygon, regular polygon sides=4, rotate=0, draw={rgb,1:red,0.247; green,0.369; blue,0.710}, fill={rgb,1:red,0.247; green,0.369; blue,0.710}, line width=0.8pt, inner sep=0pt, minimum size=6pt] {}; FOV & 15.25 & 210.39 & 216.80 \\
        \tikz[baseline=-0.6ex] \node[regular polygon, regular polygon sides=5, rotate=0, draw={rgb,1:red,0.247; green,0.369; blue,0.710}, fill={rgb,1:red,0.247; green,0.369; blue,0.710}, line width=0.8pt, inner sep=0pt, minimum size=6pt] {}; PUM & \textbf{14.94} & 267.67 & 210.77 \\
        \backgroundcolor
        \tikz[baseline=-0.6ex] \node[regular polygon, regular polygon sides=6, rotate=0, draw={rgb,1:red,0.247; green,0.369; blue,0.710}, fill={rgb,1:red,0.247; green,0.369; blue,0.710}, line width=0.8pt, inner sep=0pt, minimum size=6pt] {}; FOV-PUM & 15.52 & 455.41 & 210.48 \\
        \tikz[baseline=-0.6ex] \node[regular polygon, regular polygon sides=6, rotate=90, draw={rgb,1:red,0.247; green,0.369; blue,0.710}, fill={rgb,1:red,0.247; green,0.369; blue,0.710}, line width=0.8pt, inner sep=0pt, minimum size=6pt] {}; LA-FOV & 15.37 & 245.75 & 211.31 \\
        \backgroundcolor
        \tikz[baseline=-0.6ex] \node[regular polygon, regular polygon sides=8, rotate=0, draw={rgb,1:red,0.247; green,0.369; blue,0.710}, fill={rgb,1:red,0.247; green,0.369; blue,0.710}, line width=0.8pt, inner sep=0pt, minimum size=6pt] {}; LA-PUM & 15.14 & 459.79 & 209.05 \\
        \tikz[baseline=-0.6ex] \node[
            star, 
            star points=5, 
            star point ratio=2.25, 
            rotate=0, 
            draw={rgb,1:red,0.247; green,0.369; blue,0.710}, 
            fill={rgb,1:red,0.247; green,0.369; blue,0.710}, 
            line width=0.8pt, 
            inner sep=0pt, 
            minimum size=6pt
        ] {}; LA-FOV-PUM & 15.62 & 467.74 & 210.70 \\
        
        Perception-Aware \ac{TOGT}-Ours & & & \\

        \tikz[baseline=-0.6ex] \node[regular polygon, regular polygon sides=3, rotate=0, draw={rgb,1:red,0.867; green,0.09; blue,0.09}, fill={rgb,1:red,0.867; green,0.09; blue,0.09}, line width=0.8pt, inner sep=0pt, minimum size=6pt] {}; LA & 14.62 & \textbf{169.29} & \textbf{187.83} \\
        \backgroundcolor
        \tikz[baseline=-0.6ex] \node[regular polygon, regular polygon sides=4, rotate=0, draw={rgb,1:red,0.867; green,0.09; blue,0.09}, fill={rgb,1:red,0.867; green,0.09; blue,0.09}, line width=0.8pt, inner sep=0pt, minimum size=6pt] {}; FOV & 14.41 & 263.51 & 195.55 \\
        \tikz[baseline=-0.6ex] \node[regular polygon, regular polygon sides=5, rotate=0, draw={rgb,1:red,0.867; green,0.09; blue,0.09}, fill={rgb,1:red,0.867; green,0.09; blue,0.09}, line width=0.8pt, inner sep=0pt, minimum size=6pt] {}; PUM & \textbf{14.20} & 331.11 & 196.05 \\
        \backgroundcolor
        \tikz[baseline=-0.6ex] \node[regular polygon, regular polygon sides=6, rotate=0, draw={rgb,1:red,0.867; green,0.09; blue,0.09}, fill={rgb,1:red,0.867; green,0.09; blue,0.09}, line width=0.8pt, inner sep=0pt, minimum size=6pt] {}; FOV-PUM & 14.87 & 351.16 & 199.87 \\
        \tikz[baseline=-0.6ex] \node[regular polygon, regular polygon sides=6, rotate=90, draw={rgb,1:red,0.867; green,0.09; blue,0.09}, fill={rgb,1:red,0.867; green,0.09; blue,0.09}, line width=0.8pt, inner sep=0pt, minimum size=6pt] {}; LA-FOV & 14.56 & 177.64 & 196.78 \\
        \backgroundcolor
        \tikz[baseline=-0.6ex] \node[regular polygon, regular polygon sides=8, rotate=0, draw={rgb,1:red,0.867; green,0.09; blue,0.09}, fill={rgb,1:red,0.867; green,0.09; blue,0.09}, line width=0.8pt, inner sep=0pt, minimum size=6pt] {}; LA-PUM & 14.24 & 328.08 & 194.86 \\
        \tikz[baseline=-0.6ex] \node[star, 
            star points=5, 
            star point ratio=2.25, 
            rotate=0, draw={rgb,1:red,0.867; green,0.09; blue,0.09}, fill={rgb,1:red,0.867; green,0.09; blue,0.09}, line width=0.8pt, inner sep=0pt, minimum size=6pt] {}; LA-FOV-PUM & 15.65 & 459.02 & 196.92 \\ 
        \bottomrule
    \end{tabular}
}
\begin{minipage}{\linewidth}
    \begin{tablenotes}[para,flushleft]
        \small
        \item \textit{Note:} Our approaches are marked in blue or red. For example, \tikz[baseline=-0.6ex] \node[regular polygon, regular polygon sides=5, rotate=0, draw={rgb,1:red,0.867; green,0.09; blue,0.09}, fill={rgb,1:red,0.867; green,0.09; blue,0.09}, line width=0.8pt, inner sep=0pt, minimum size=6pt] {}; represents optimizing a \ac{TOGT} trajectory with \ac{PUM} objective.
    \end{tablenotes}
\end{minipage}
\vspace{-0.2cm}
\end{table}

We qualitatively evaluate the generated flight trajectories shown in Fig.~\ref{fig:togt_vs_towp}. We see that the \ac{TOGT} trajectory (yellow) clearly has an advantage over the \ac{TOWP} trajectory (blue) in terms of timing as it traverses as close as possible to the inner edges of the gates. This is attributed to the explicit modeling of the gate geometry. As a result, while the \ac{TOWP} trajectory produces a 4.14 s of lap time, the \ac{TOGT} can push this limit to 3.84 s, which is 7.2\% shorter. Moreover, the total duration of the \ac{TOGT} trajectory is 6.3\% shorter than that of the \ac{TOWP}, with a 12.4 m reduction in the total required travel distance. It is worth mentioning that the shorter lap time of the \ac{TOGT} trajectory is not due to violating body rate or single-rotor thrust constraints. As displayed in the bottom panels of Fig.~\ref{fig:togt_vs_towp}, both trajectories meet the same actuation constraints. This result verifies that incorporating gate constraints into the planning problem is essential to achieve the true minimum lap time.

\begin{figure}[!htbp]
    \centering
\tikzstyle{every node}=[font=\footnotesize]
\begin{tikzpicture}[>=stealth]
\node [inner sep=0pt, outer sep=0pt] (img1) at (0,0)
{\includegraphics[width=8.0cm]{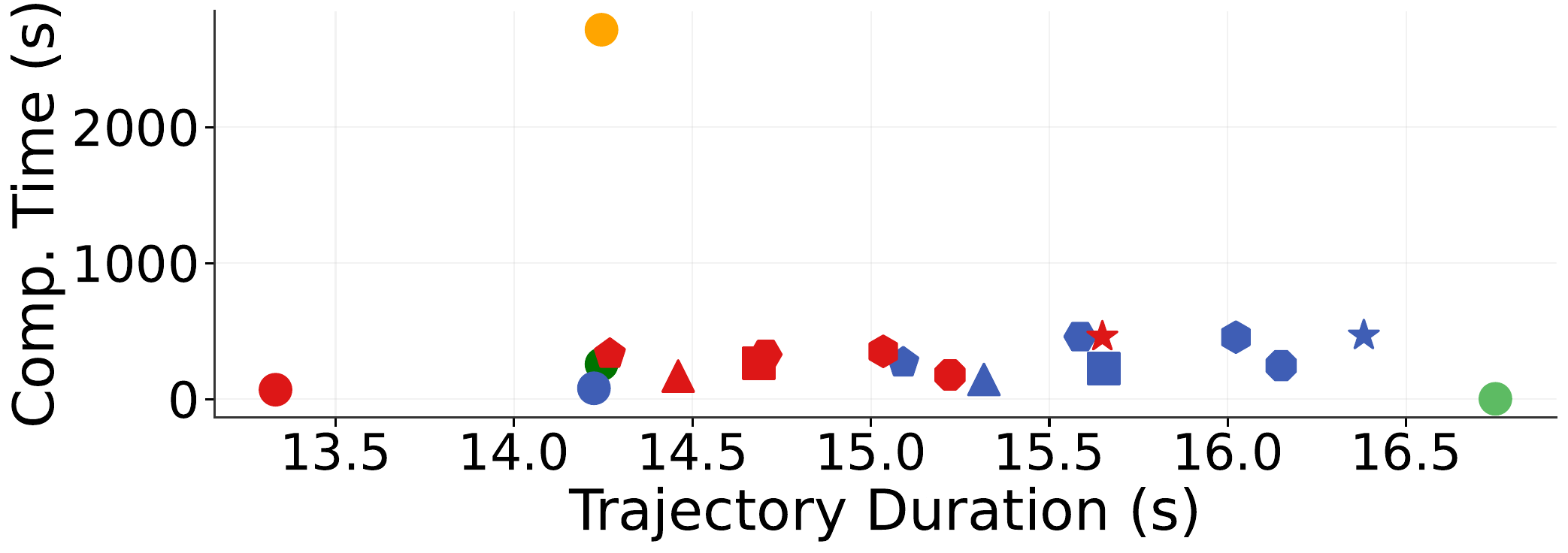}};
\end{tikzpicture}
\caption{Quantitative comparison of computational cost and optimized trajectory duration. Our \ac{TOGT} {\protect\tikz[baseline=-0.6ex] \protect\node[circle, draw={rgb,1:red,0.867; green,0.09; blue,0.09}, fill={rgb,1:red,0.867; green,0.09; blue,0.09}, line width=0.8pt, inner sep=0pt, minimum size=6pt] {};} formulation yields the best performance in both metrics.}
\label{fig:methods_comparison}
\vspace{-0.2cm}
\end{figure}

Then, we compare our methods with state-of-the-art approaches. In Tab.~\ref{tab:benchmarking}, we present trajectory durations, computation times, and total trajectory lengths from different methods. While \cite{zhou2023efficient} and \cite{foehn2021time} converge to trajectories with durations comparable to our waypoint-constrained approach, our method achieves these results with significantly lower computational overhead. It is almost four times faster than Fast-Fly \cite{zhou2023efficient} and more than thirty times faster than CPC \cite{foehn2021time}. Although Reference \cite{qin2023time} is more computationally efficient, the resulting timing is sub-optimal, which is 2.52 s longer than our approach. After switching the racing mode from waypoint passing to gate traversing, our approach triumphs in all metrics. The total duration is 0.89 s shorter than all the waypoint-flight methods. Moreover, even the computation time drops due to the relaxation of the geometric constraints.

The benefits of modeling gates extend beyond shorter lap times; it empowers the planner to handle collisions explicitly. For instance, we found that \ac{TOWP} often results in mission failure by generating trajectories that collide with certain gates. This is especially prevalent in complex layouts, such as the two tracks shown in Fig. \ref{fig:figure8_tii_dive}(b) and Fig. \ref{fig:figure8_tii_dive}(c). By accounting for gate geometry and orientation, \ac{TOGT} overcomes these issues. As illustrated in these figures, \ac{TOGT} trajectories not only identify shortcuts through square, dive, and circle gates but also ensure collision avoidance by specifying entry/exit sides.

\begin{table}[!htbp]
\centering
\caption{\ac{FPV} camera configurations}\label{tab:cam_config}
\begin{threeparttable}
\begin{tabular}{@{}cccccccc@{}}
\toprule
&  & $\alpha_{\text{max}}\;(^{\circ})$ & $\beta_{\text{max}}\;(^{\circ})$ & $Z_{\text{min}}\;(\text{m})$ & $\theta_{\text{tilt}}\;(^{\circ})$ & $\sigma_{u\!/\!v}\;(\text{px})$ &  \\ \midrule
& RPG & 128.1                              & 72.2                              & 0.3                  & 30.0                                                                               & 10.0                        &  \\
& EXP & 128.1                              & 72.2                              & 0.3                   & 0.0                                                                               & 10.0                        &  \\
\bottomrule
\end{tabular}
\begin{tablenotes}[para,flushleft]
Note that $\theta_{\text{tilt}}$ denotes the tilted angle of the camera.
\end{tablenotes}
\end{threeparttable}
\end{table}

\subsection{Perception-Aware Time-Optimal Flights}\label{subsec:patof}

We incorporate perception awareness into the trajectory optimization and observe the differences in the flight behavior. We assume the quadrotor is equipped with a forward-looking camera (See Tab.~\ref{tab:cam_config} for the detailed parameters), and features of landmarks within the FOV can always be extracted and identified. In this experiment, each landmark (i.e., the racing gate) has four features in total at its inner corners. The optimization parameters for each perception objective are offered in Tab.~\ref{tab:perception_params}.

\begin{table}[!htbp]
\centering
\caption{Parameters of perceptual costs}\label{tab:perception_params}
\begin{threeparttable}
\begin{tabular}{@{}ccccc@{}}
\toprule
\multicolumn{1}{l}{} & \multicolumn{1}{l}{Perception Type} & \multicolumn{1}{l}{Symbol} & \multicolumn{1}{l}{Value} & \multicolumn{1}{l}{} \\ \midrule
& Motion Regularization                & $w_{\text{jerk}}$          & $1e^{-7}$               &                      \\
& \multirow{2}{*}{LA}                 & $w_{\text{LA}}$            & 0.005                      &                      \\
&                                     & $\lambda_{\text{LA}}$      & 3.0                       &                      \\
& FOV                                 & $w_{\text{FOV}}$           & 0.05                       &                      \\
& \multirow{2}{*}{PUM}                 & $w_{\text{PUM}}$            & 0.0003                     &                      \\
&                                     & $\lambda_{v}$      & 10                      &                      \\ \bottomrule                    
\end{tabular}
\begin{tablenotes}[para,flushleft]
Note that the motion stabilization is applied to all perception-optimized planning.
\end{tablenotes}
\end{threeparttable}
\end{table}

In terms of the evaluation of the resulting visual quality, we sample the planned trajectory at 0.05 s intervals and compute the position uncertainty using our derived metric. If the uncertainty exceeds 2 m, a threshold beyond which the estimate becomes unreliable due to significant potential bias, we model the error accumulation by incrementing the last valid uncertainty value by 0.1 m for each subsequent sample. Finally, we visualize this uncertainty by plotting a circle whose radius corresponds to the accumulated uncertainty value at that timestamp. The results are given in Fig.~\ref{fig:splits_with_perception_awareness},

We see that when time is the sole optimization variable (see Fig.~\ref{fig:splits_with_perception_awareness}(a)), the quadrotor ignores perception requirements and often orients the camera toward regions where no gates are visible. This behavior leads to a rapid expansion of the uncertainty radius, signaling a significant degradation in localization performance. The \ac{LA} and \ac{FOV} effectively mitigate the explosion of uncertainty in varying degree, but in different manners. As the names imply, the \ac{LA} objective encourages the quadrotor to orient toward the future path. When the upcoming gate happens to be aligned with the future reference direction, the uncertainty can be reduced by the corresponding gate observations. However, if this is not the case, as seen in the rightmost trajectory segment of Fig.~\ref{fig:splits_with_perception_awareness}(b), the uncertainty remains at a high level due to the absence of visual landmarks. In contrast, the \ac{FOV} objective (see Fig.~\ref{fig:splits_with_perception_awareness}(c)) produces a more consistent uncertainty pattern by actively maintaining the gate within the camera's view. Notably, however, \ac{FOV} exhibits a marked increase in uncertainty when the quadrotor just passes the last gate. This occurs because the quadrotor requires a certain transition period to reorient its camera toward the next gate, leading to a temporary loss of visual tracking. Furthermore, we observe that the overall uncertainty area is shrunk dramatically by incorporating the \ac{PUM} objective. As illustrated in Fig.~\ref{fig:splits_with_perception_awareness}(d-f), the quadrotor automatically learns a strategy of orienting toward regions where the highest number of gates are visible. This behavior results in a significant improvement in the richness and diversity of visible landmarks, leading to consistently small uncertainty circles throughout the trajectory.





\begin{figure*}[t!]
    \centering
\tikzstyle{every node}=[font=\footnotesize]
\begin{tikzpicture}[>=stealth]
\node [inner sep=0pt, outer sep=0pt] (img1) at (0,0)
{\includegraphics[width=5.5cm]{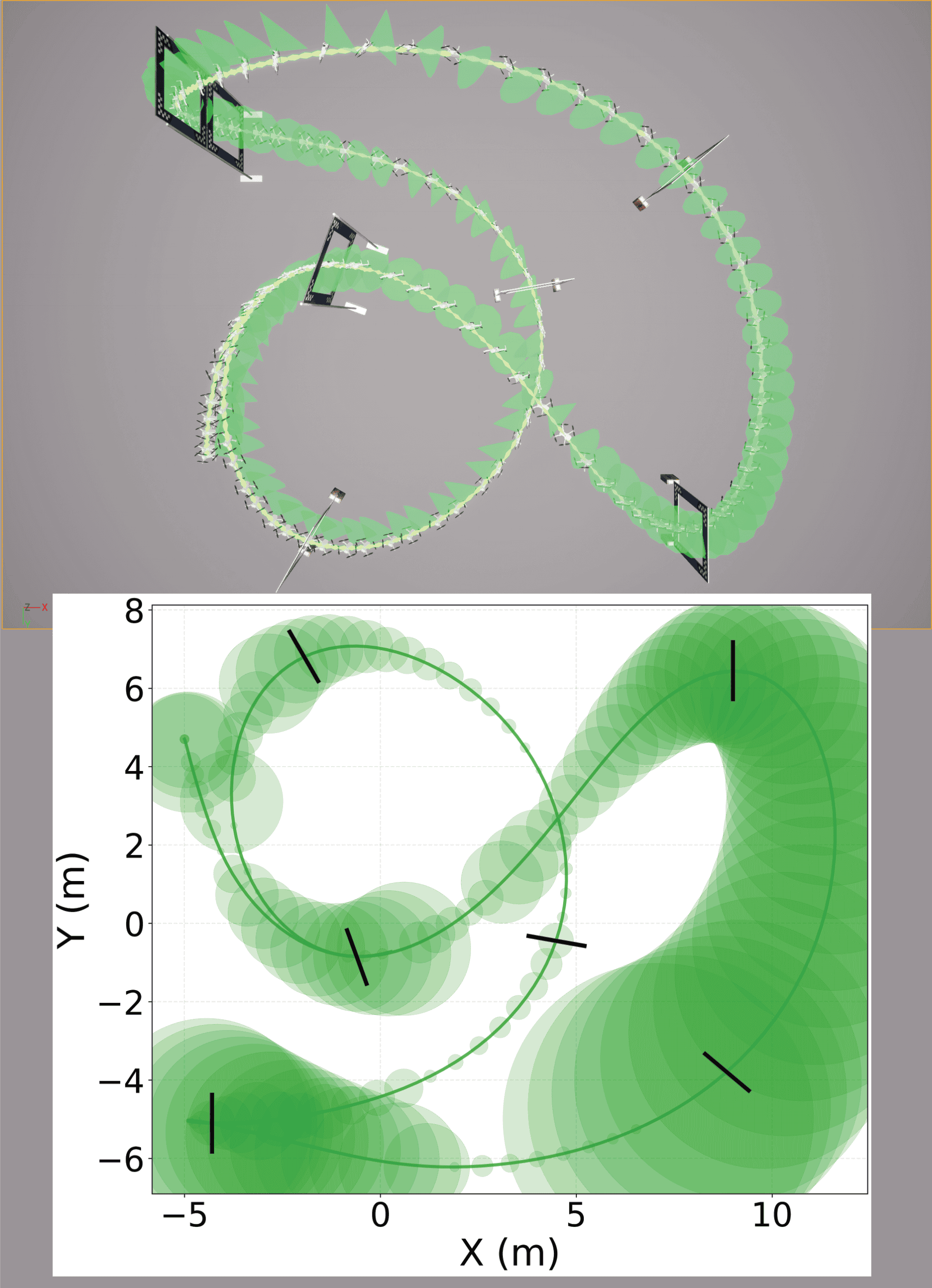}};
\node [inner sep=0pt, outer sep=0pt, right=1mm of img1] (img2) {\includegraphics[width=5.5cm]{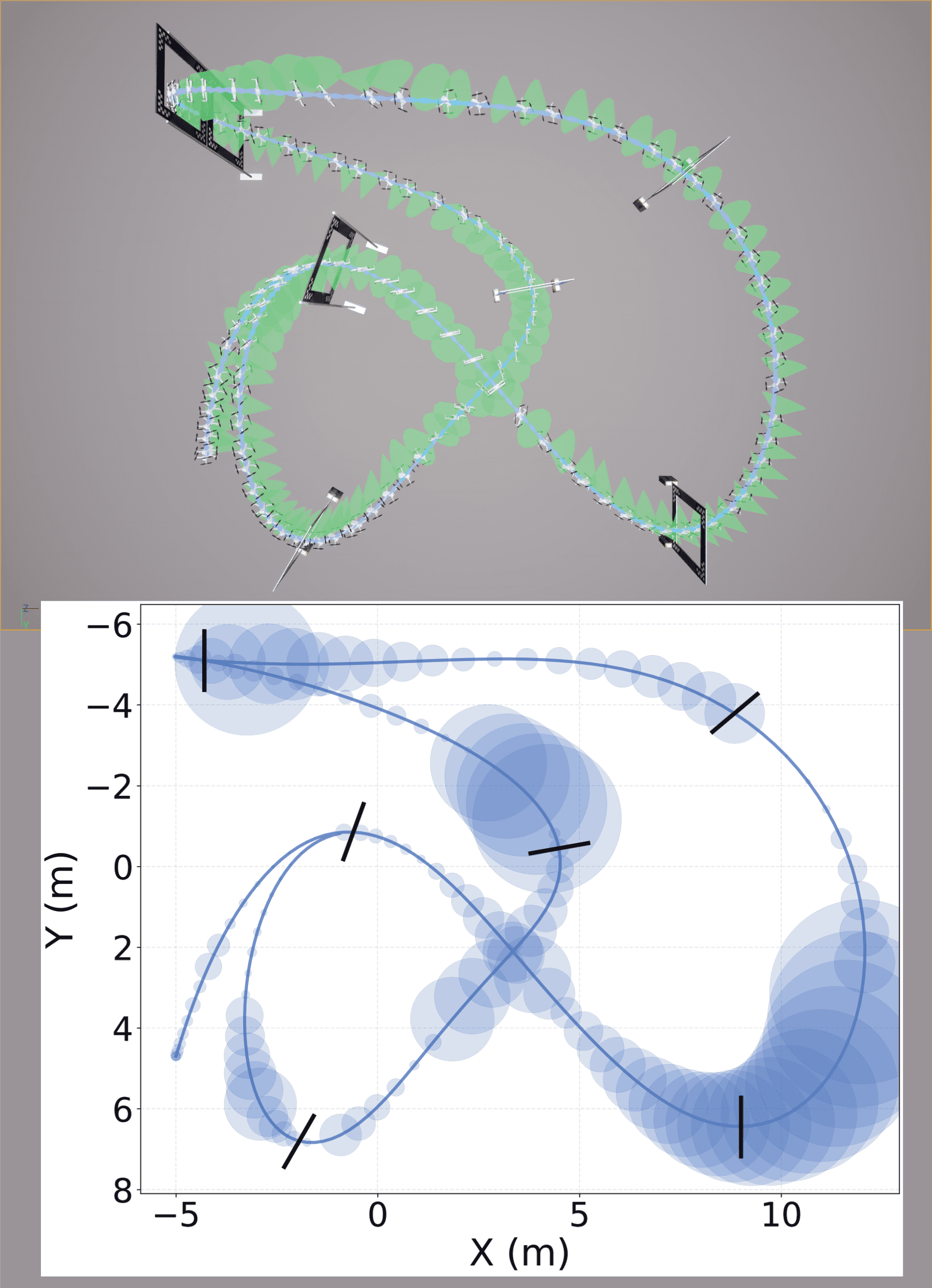}};
\node [inner sep=0pt, outer sep=0pt, right=1mm of img2] (img3) {\includegraphics[width=5.5cm]{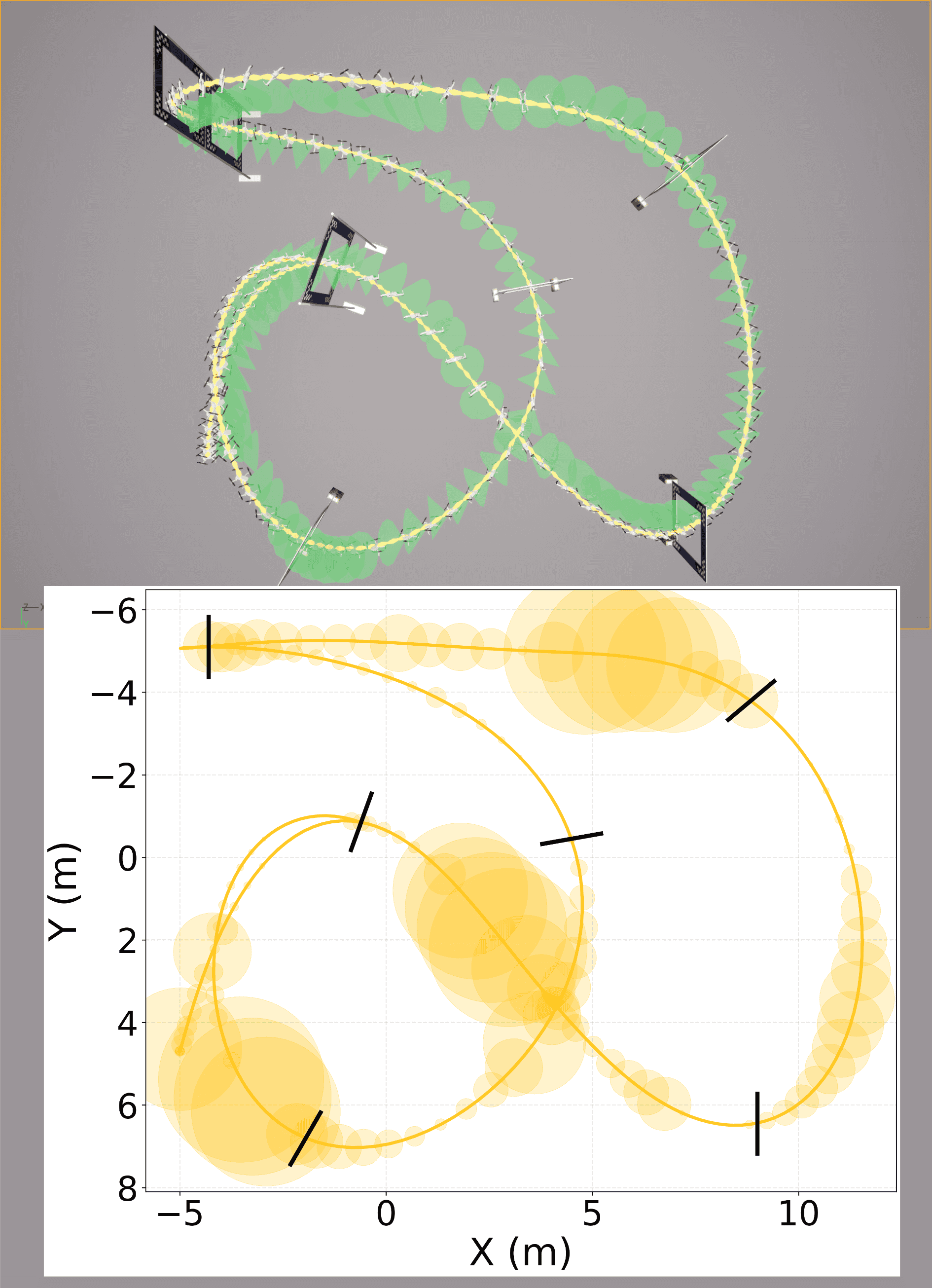}};

\node[below=1mm of img1](text1){(a) None (Pure Time Opt.) \tikz[baseline=-1.5ex] \node[circle, draw={rgb,1:red,0.247; green,0.369; blue,0.710}, fill={rgb,1:red,0.247; green,0.369; blue,0.710}, line width=0.8pt, inner sep=0pt, minimum size=6pt] {};};
\node[below=1mm of img2](text2){(b) \ac{LA} \tikz[baseline=-1.5ex] \node[regular polygon, regular polygon sides=3, rotate=0, draw={rgb,1:red,0.247; green,0.369; blue,0.710}, fill={rgb,1:red,0.247; green,0.369; blue,0.710}, line width=0.8pt, inner sep=0pt, minimum size=6pt] {};};
\node[below=1mm of img3](text3){(c) \ac{FOV} \tikz[baseline=-1.5ex] \node[regular polygon, regular polygon sides=4, rotate=0, draw={rgb,1:red,0.247; green,0.369; blue,0.710}, fill={rgb,1:red,0.247; green,0.369; blue,0.710}, line width=0.8pt, inner sep=0pt, minimum size=6pt] {};};

\node [inner sep=0pt, outer sep=0pt, below=1mm of text1] (img4) {\includegraphics[width=5.5cm]{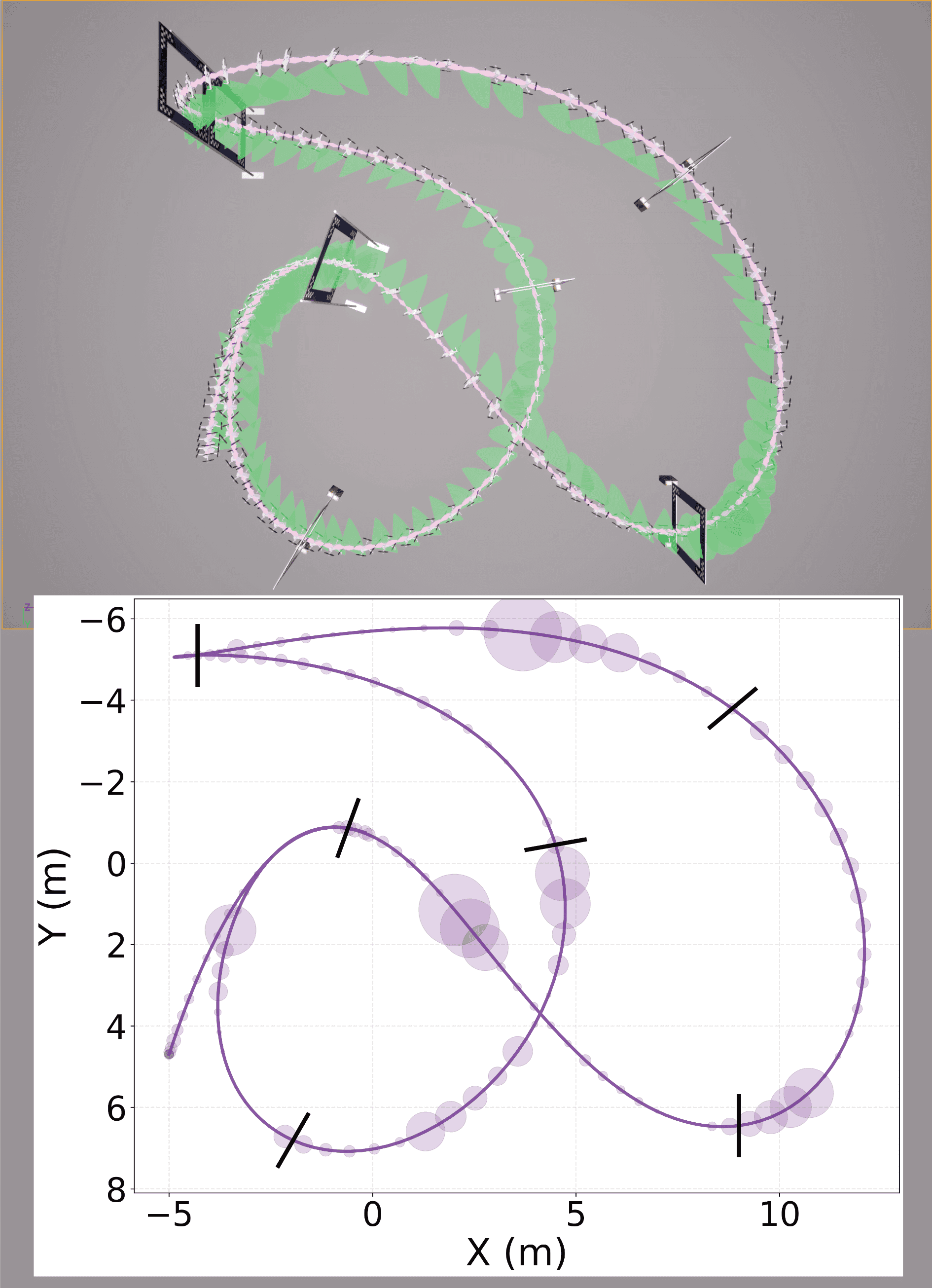}};
\node [inner sep=0pt, outer sep=0pt, below=1mm of text2] (img5) {\includegraphics[width=5.5cm]{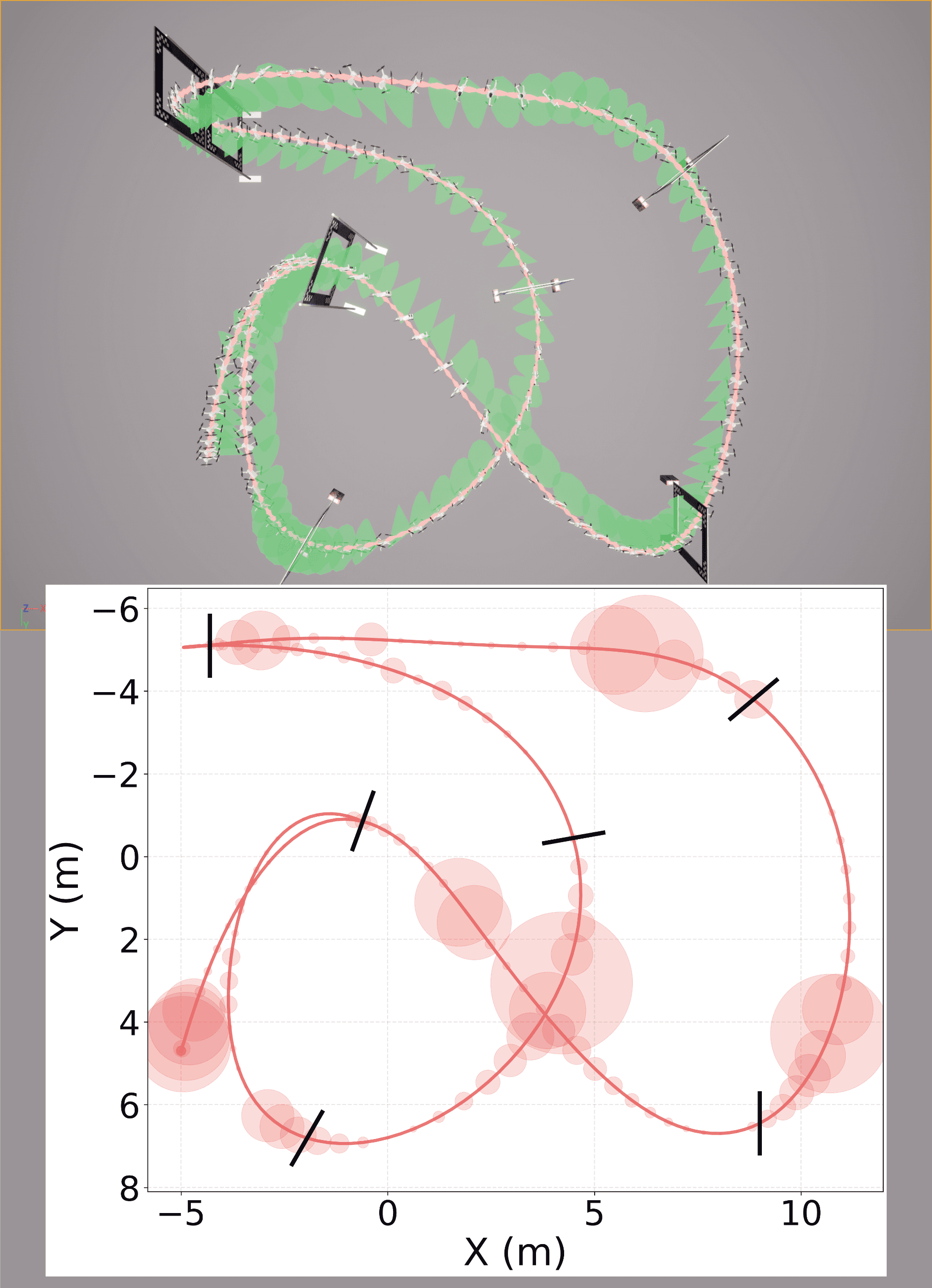}};
\node [inner sep=0pt, outer sep=0pt, below=1mm of text3] (img6) {\includegraphics[width=5.5cm]{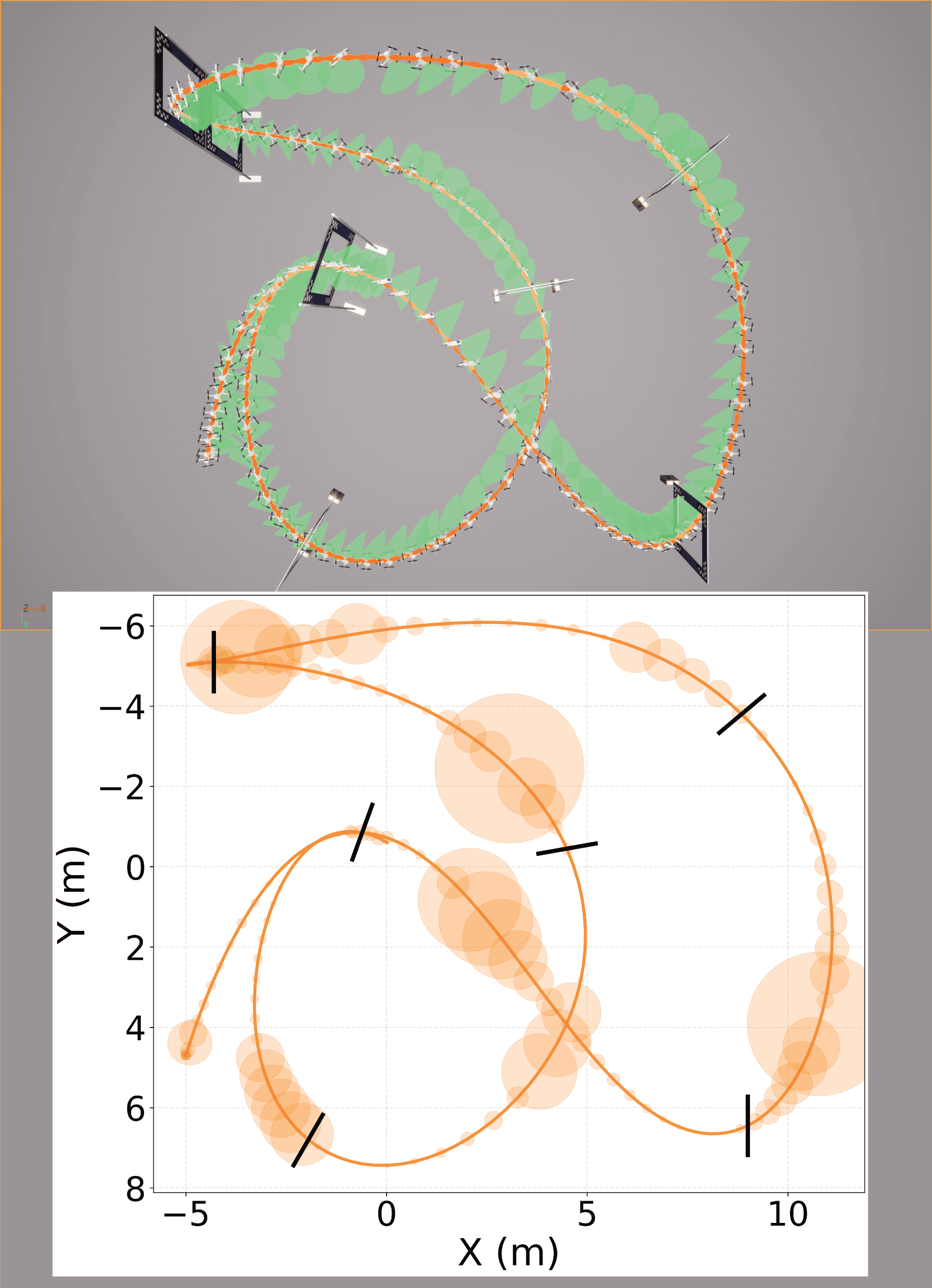}};

\node[below=1mm of img4](text4){(d) \ac{PUM} \tikz[baseline=-1.8ex] \node[regular polygon, regular polygon sides=5, rotate=0, draw={rgb,1:red,0.247; green,0.369; blue,0.710}, fill={rgb,1:red,0.247; green,0.369; blue,0.710}, line width=0.8pt, inner sep=0pt, minimum size=6pt] {};};
\node[below=1mm of img5](text5){(e) FOV-PUM   \tikz[baseline=-1.5ex] \node[regular polygon, regular polygon sides=6, rotate=0, draw={rgb,1:red,0.247; green,0.369; blue,0.710}, fill={rgb,1:red,0.247; green,0.369; blue,0.710}, line width=0.8pt, inner sep=0pt, minimum size=6pt] {};};
\node[below=1mm of img6](text6){(f) LA-PUM   \tikz[baseline=-1.5ex] \node[regular polygon, regular polygon sides=8, rotate=0, draw={rgb,1:red,0.247; green,0.369; blue,0.710}, fill={rgb,1:red,0.247; green,0.369; blue,0.710}, line width=0.8pt, inner sep=0pt, minimum size=6pt] {};};

\end{tikzpicture}
\caption{Impact of various perception objectives on time-optimal trajectories and the evolution of position uncertainty. The circles represent the uncertainty area at discrete timestamps. Results demonstrate that trajectories optimized with \ac{PUM} exhibit substantially lower position uncertainty than other methods.}
\label{fig:splits_with_perception_awareness}
\end{figure*}


\begin{figure}[!htbp]
    \centering
\tikzstyle{every node}=[font=\footnotesize]
\begin{tikzpicture}[>=stealth]
\node [inner sep=0pt, outer sep=0pt] (img1) at (0,0)
{\includegraphics[width=8.0cm]{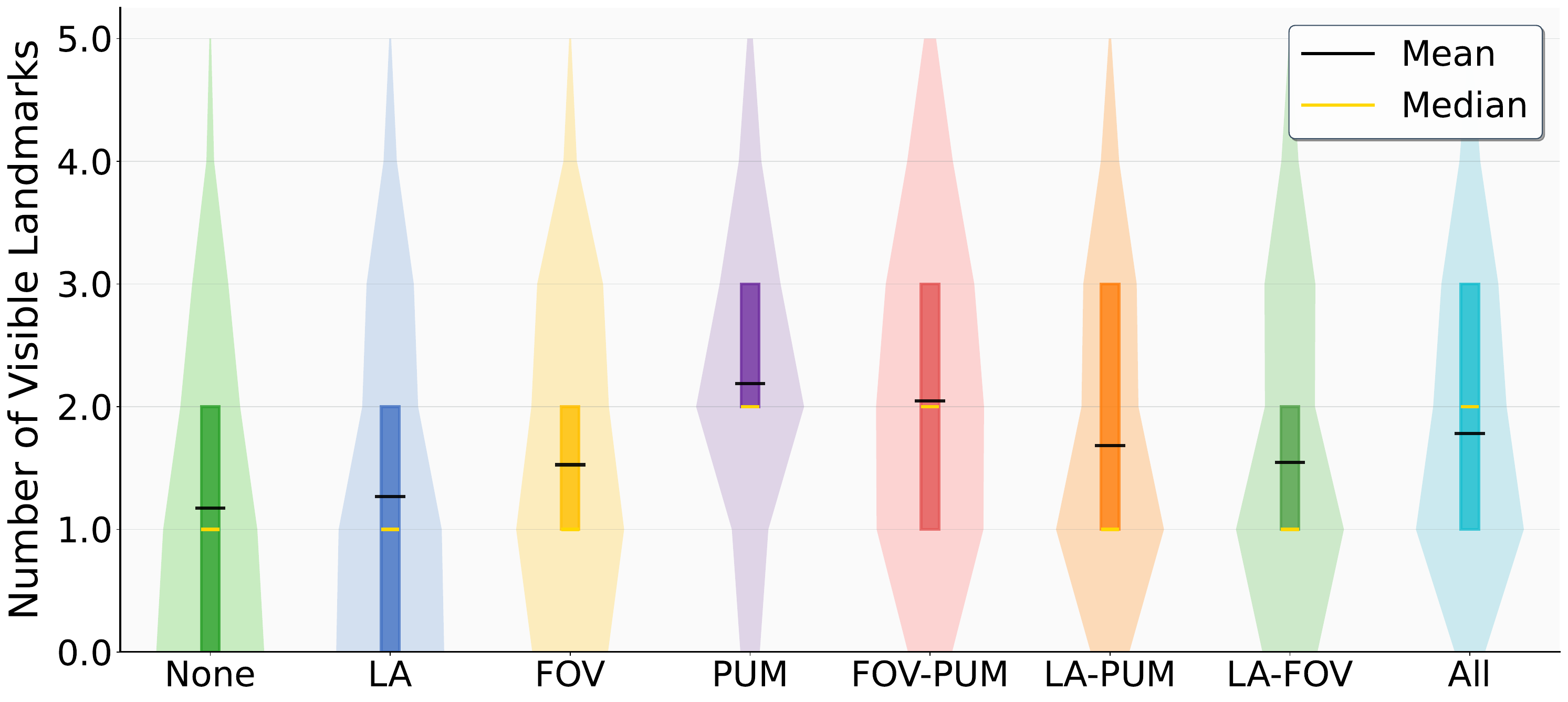}};
\end{tikzpicture}
\caption{Distributions of visible landmarks/gates on Split-S track, which reveals that perception-aware flights boost the visibility substantially.}
\label{fig:fig_visible_landmarks_distribution}
\end{figure}



\begin{figure}[!htbp]
    \centering
\tikzstyle{every node}=[font=\footnotesize]
\begin{tikzpicture}[>=stealth]
\node [inner sep=0pt, outer sep=0pt] (img1) at (0,0)
{\includegraphics[width=8.0cm]{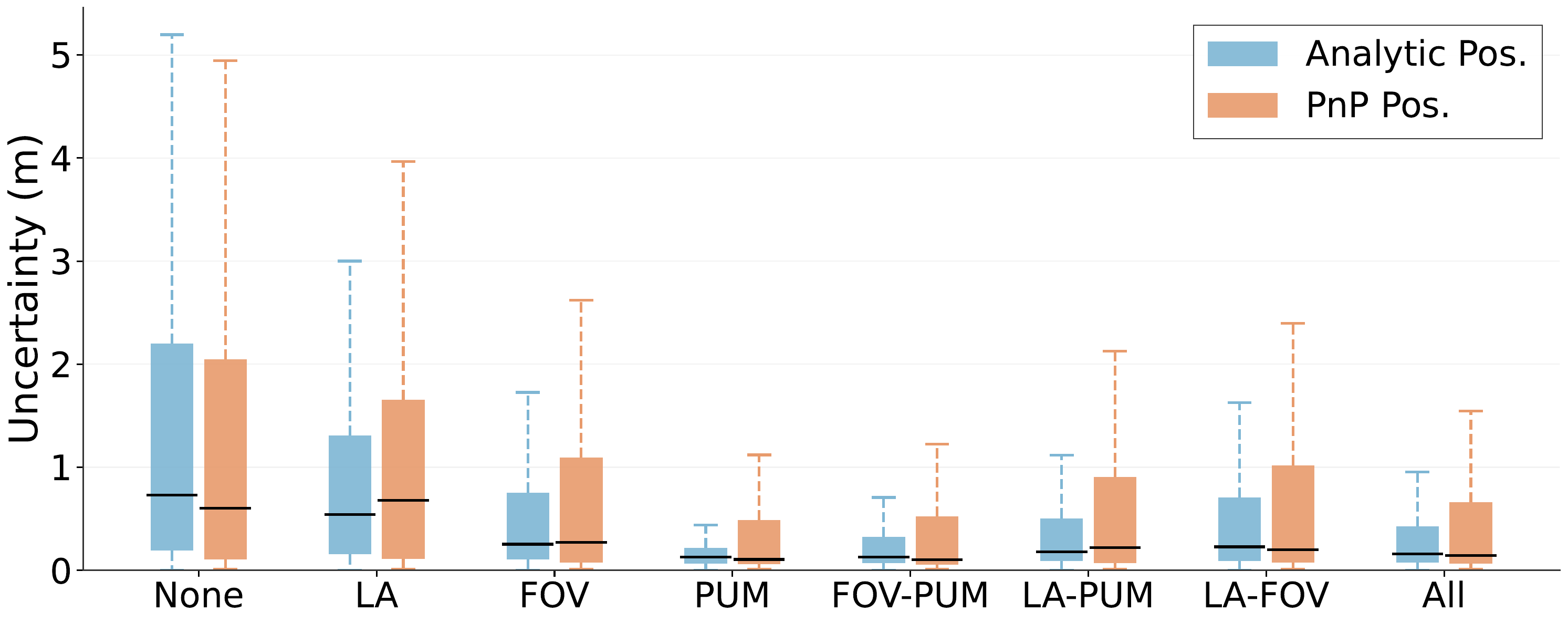}};
\end{tikzpicture}
\caption{Distributions of the derived analytical position uncertainty and the actual position uncertainty of the \ac{PnP} method. To get the \ac{PnP} uncertainty, we perturb the observed feature coordinates, re-evaluate the \ac{PnP} problem 60 times, and compute the mean and covariance of the resulting pose estimates.}
\label{fig:fig_analytic_numerical_uncertainty_distribution}
\end{figure}


Quantitative visual quality comparisons are presented in Fig.~\ref{fig:fig_visible_landmarks_distribution} and Fig.~\ref{fig:fig_analytic_numerical_uncertainty_distribution}. The results suggest that the \ac{PUM} objective expands the distribution of trajectory segments where more than two gates are visible. Specifically, it achieves a median of two visible gates, whereas both \ac{LA} and \ac{FOV} have only one. Due to this enhanced landmark visibility, \ac{PUM} achieves the lowest uncertainty. As shown in Fig.~\ref{fig:fig_analytic_numerical_uncertainty_distribution}, the pure \ac{PUM} case reaches an average analytical position uncertainty of 0.2418 m, while the FOV-PUM case achieves 0.2758 m. The corresponding \ac{PnP} position uncertainty decreases accordingly, reaching a minimum average value of 0.4271 m and the smallest standard deviation of 0.6520 m. These results confirm that optimizing the derived position uncertainty is effective in reducing the actual numerical instability of the position estimates solved by the \ac{PnP} method. This justifies the proposed metric as a valid measure to the true position uncertainty.

Remarkably, using the \ac{PUM} objective also proves to be the most effective strategy for preserving optimal timing. As shown in Tab.~\ref{tab:benchmarking}, employing \ac{PUM} yields the smallest increase in trajectory duration compared to the pure time-optimal baselines. Specifically, in the \ac{TOWP} case, it results in a time increase of only 0.72 s, while for \ac{TOGT}, the increase is a mere 0.87 s. This indicates that \ac{PUM} avoids excessive deviations from the time-optimal maneuver, executing only the necessary orientation changes to enhance visual quality while preserving overall agility. In detail, one distinct feature of the \ac{PUM} strategy compared to both \ac{LA} and \ac{FOV} is that it allows the quadrotor to fly backward, matching the behavior of the pure time-optimal trajectory, provided it can observe more gates by looking behind itself. While this strategy is counterintuitive for humans, it is logically sound from a robotic vision perspective; the performance of state estimation is governed by the number and geometry of all observed landmarks rather than a specific one enforced by the task itself.

\subsection{Effectiveness of FOV Constraints}

\begin{figure}[!htbp]
    \centering
\tikzstyle{every node}=[font=\footnotesize]
\begin{tikzpicture}[>=stealth]
\node [inner sep=0pt, outer sep=0pt, xshift=4.2cm] (img1) at (0,0)
{\includegraphics[width=8.45cm]{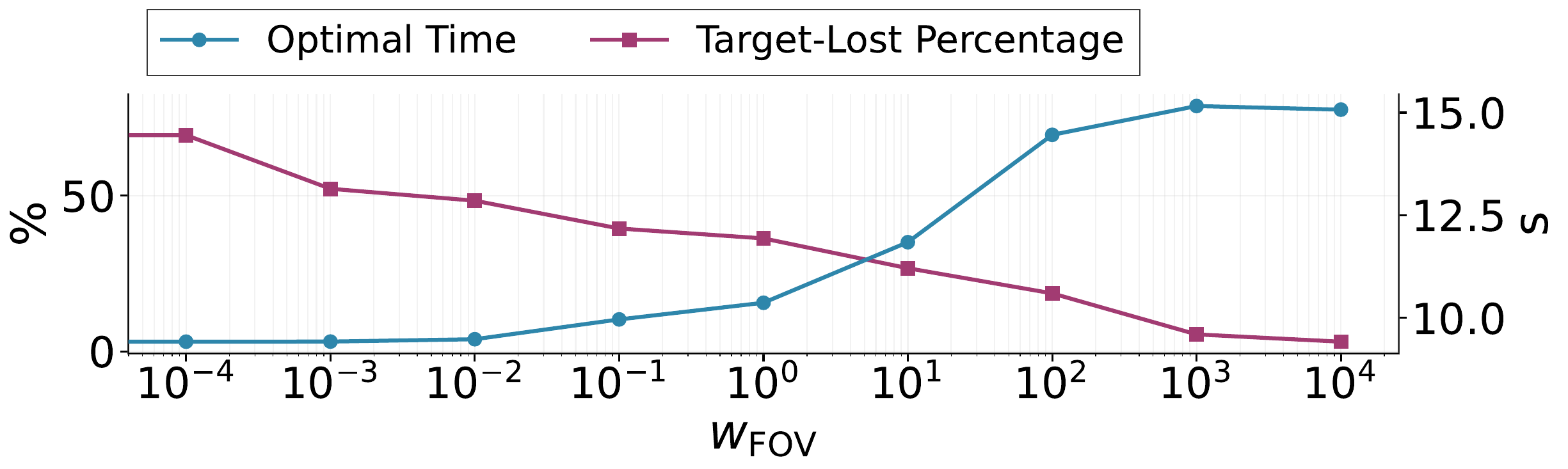}};

\node [inner sep=0pt, outer sep=0pt, below=0mm of img1.south west, anchor=north west] (img2)
{\includegraphics[width=2.75cm]{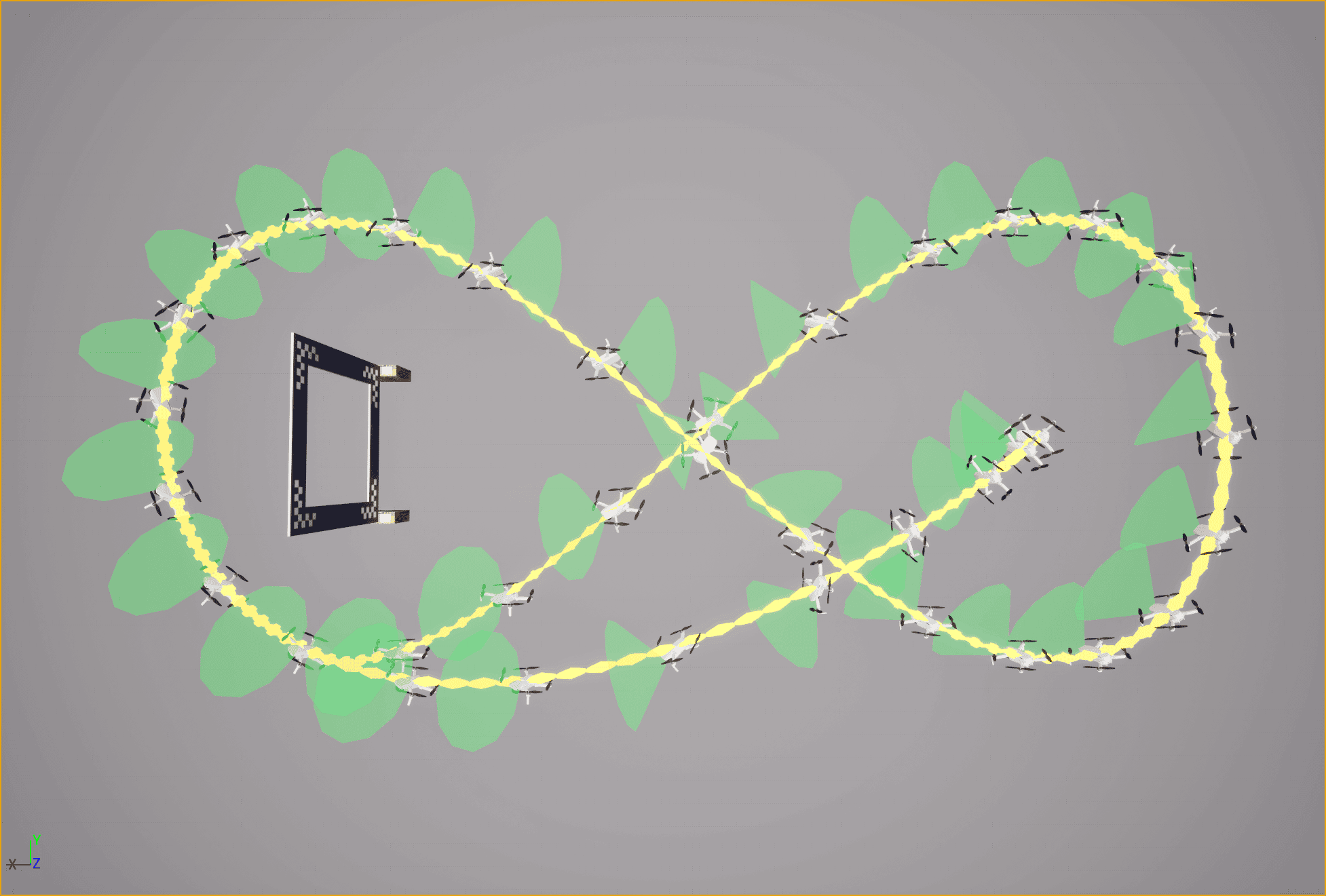}};
\node [inner sep=0pt, outer sep=0pt, right=1mm of img2] (img3) {\includegraphics[width=2.75cm]{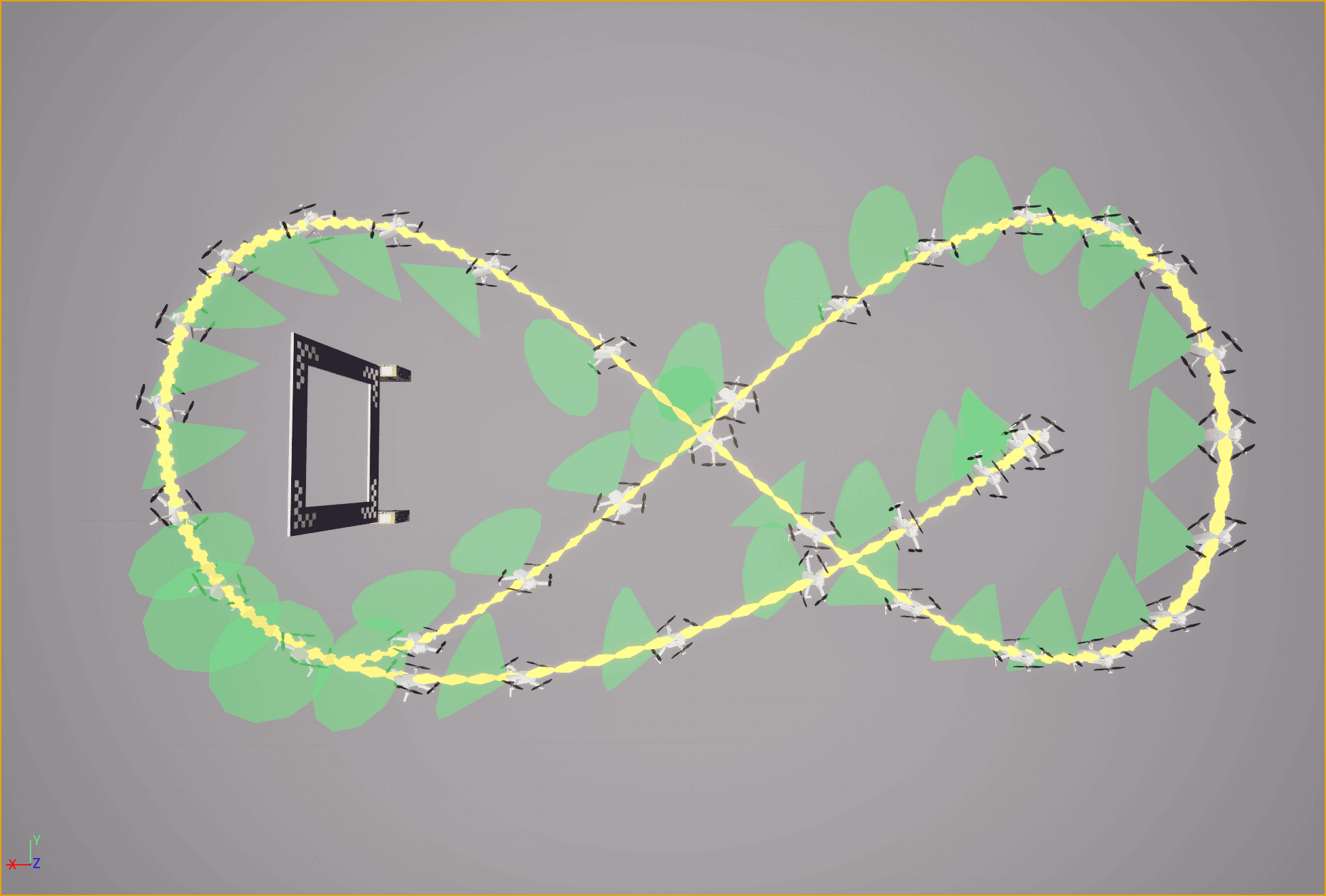}};
\node [inner sep=0pt, outer sep=0pt, right=1mm of img3] (img4) {\includegraphics[width=2.75cm]{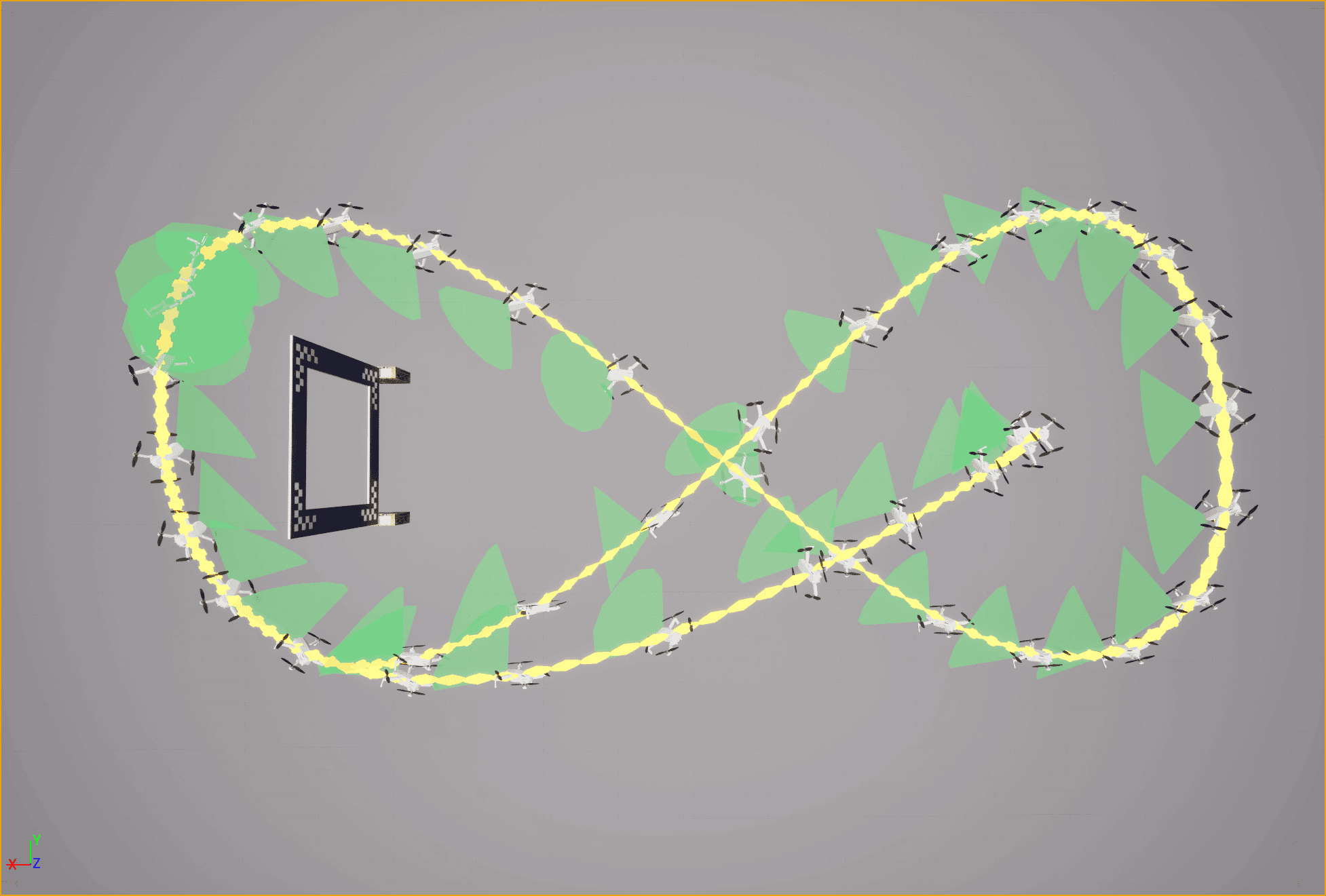}};

\node[below=1mm of img2](text1){(a) $w_{\text{FOV}}=0.0$};
\node[below=1mm of img3](text2){(b) $w_{\text{FOV}}=0.01$};
\node[below=1mm of img4](text3){(c) $w_{\text{FOV}}=1.0$};

\end{tikzpicture}
\caption{Impact of the weight for the FOV constraints on the resulting flight behavior. As the penalty weight increases, the drone prioritizes visibility to the target gate more and eventually reaches a near-zero target-loss percentage. But it causes a longer time to complete the task.}
\label{fig:figure8_fov}
\end{figure}

We are interested in how the quadrotor behavior changes as the strictness of the \ac{FOV} constraints increases. In this experiment, the quadrotor is tasked with navigating a figure-8 track defined by six waypoints (see Fig.~\ref{fig:figure8_tii_dive}(a) but with all gates removed). One gate is positioned at $(8, 2)$ m to serve as the target object, and the quadrotor will attempt to maintain visibility of it throughout the flight.

Fig. \ref{fig:figure8_fov} demonstrates that increasing the weight $w_{\text{FOV}}$ leads to a monotonic decrease in the percentage of time the target is outside the \ac{FOV}. However, this comes with a consequence of a growing trajectory duration. The duration stops growing when $w_{\text{FOV}}$ reaching a point at which the \ac{FOV} constraints are always satisfied. We find that a small value of $w_{\text{FOV}}$, such as 0.01, can cause huge differences in the quadrotor's orienting behavior. We see the quadrotor proactively force its camera toward the gate to keep it visible. As the weighting factor increases to 1, the trajectory undergoes a salient structural change. This shift indicates that modulating the quadrotor's orientation alone is no longer sufficient to maintain target visibility; instead, a coordinated adjustment of both translation and rotation is required to satisfy the visibility constraint.

\subsection{Closed-Loop Evaluation}

\begin{figure}[!htbp]
    \centering
\tikzstyle{every node}=[font=\footnotesize]
\begin{tikzpicture}[>=stealth]
\node [inner sep=0pt, outer sep=0pt] (img1) at (0,0)
{\includegraphics[width=6.0cm]{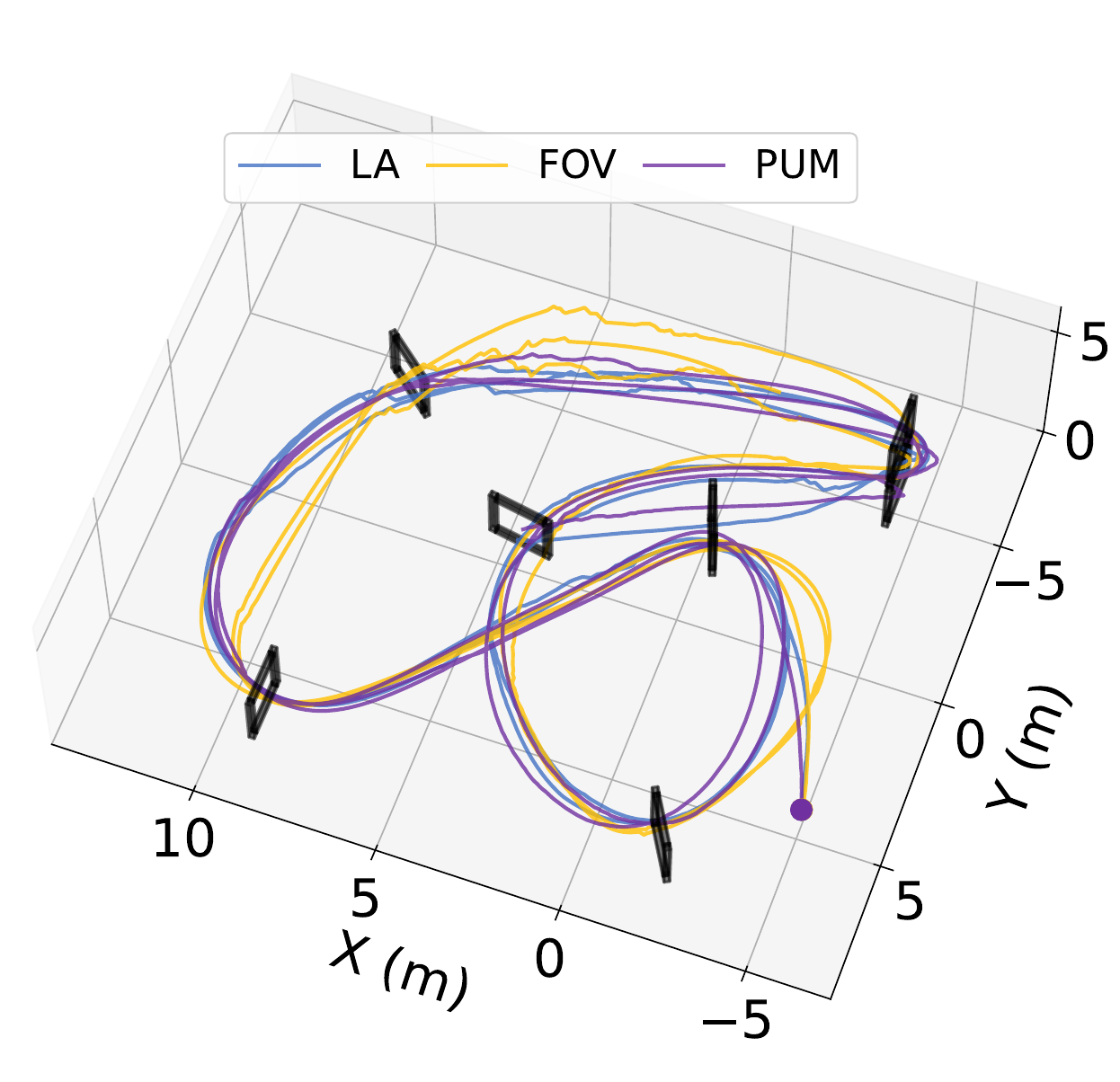}};
\end{tikzpicture}
\caption{Closed-loop execution of planned trajectories with LA, FOV, and PUM objectives, tracked using the proposed \ac{MPCTC}. We observe that the \ac{PUM} (purple) yields the smoothest tracking trajectory.}
\label{fig:closed_loop_2d}
\vspace{-0.2cm}
\end{figure}

To directly verify the effectiveness of the proposed perception objectives in fully onboard performance, we incorporate a vision-based localization module into the simulation for closed-loop evaluation. We adopt the framework from \cite{foehn2022alphapilot} for the implementation of the localization module, which utilizes an \ac{EKF} as the core fusion algorithm. One key difference is that we directly fuse simulated noisy \ac{IMU} readings with the pose estimates from the \ac{PnP} rather than using external visual-inertial odometry. This approach allows us to isolate the influence of other features in the scene, which is not the focus of this paper. We employ RotorS for the quadrotor dynamics simulation \cite{Furrer2016}, and the \ac{MPCTC} is the default tracking controller.

\begin{table}[!htbp]
\centering
\caption{Close-loop execution benchmarking.}\label{tab:closed_loop_benchmarking}
\resizebox{\linewidth}{!}{
    \begin{tabular}{lccc}
        \toprule
        Methods & Pos. Uncertainty (m) & Track. Error (m) & Success Rate \\
        \midrule
        \backgroundcolor
        \tikz[baseline=-0.6ex] \node[circle, draw={rgb,1:red,0.247; green,0.369; blue,0.710}, fill={rgb,1:red,0.247; green,0.369; blue,0.710}, line width=0.8pt, inner sep=0pt, minimum size=6pt] {}; TOWP-Ours & 0.3590 & 0.4192 & 55\% \\
        \tikz[baseline=-0.6ex] \node[regular polygon, regular polygon sides=3, rotate=0, draw={rgb,1:red,0.247; green,0.369; blue,0.710}, fill={rgb,1:red,0.247; green,0.369; blue,0.710}, line width=0.8pt, inner sep=0pt, minimum size=6pt] {}; LA  & 0.3541 & 0.4612 & 67\% \\
        \backgroundcolor
        \tikz[baseline=-0.6ex] \node[regular polygon, regular polygon sides=4, rotate=0, draw={rgb,1:red,0.247; green,0.369; blue,0.710}, fill={rgb,1:red,0.247; green,0.369; blue,0.710}, line width=0.8pt, inner sep=0pt, minimum size=6pt] {}; FOV & 0.4634 & 0.3577 & 71\%\\
        \tikz[baseline=-0.6ex] \node[regular polygon, regular polygon sides=5, rotate=0, draw={rgb,1:red,0.247; green,0.369; blue,0.710}, fill={rgb,1:red,0.247; green,0.369; blue,0.710}, line width=0.8pt, inner sep=0pt, minimum size=6pt] {}; PUM  & 0.2228 & 0.3239 & 100\%\\
        \bottomrule
    \end{tabular}
}
\end{table}

The resulting trajectories are displayed in Fig.~\ref{fig:closed_loop_2d}. Qualitatively, we observe that the PUM objective yields a significantly smoother tracking trajectory compared to those of LA and FOV. Tab.~\ref{tab:closed_loop_benchmarking} provides quantitative results regarding the position uncertainty of the planned trajectories, the tracking error in RMSE, and the success rate across 20 closed-loop flight tests. We notice that the trajectory optimized with PUM yields the smallest position uncertainty of 0.2228 m, resulting in the lowest tracking error of 0.3239 m. Furthermore, it is the only perception objective that achieves a 100\% success rate. These results demonstrate that incorporating PUM into trajectory optimization can significantly boost closed-loop flight performance and robustness. 


\subsection{Real-World Validation}

\begin{figure}[!htbp]
    \centering
\tikzstyle{every node}=[font=\footnotesize]
\begin{tikzpicture}[>=stealth]
\node [inner sep=0pt, outer sep=0pt] (img1) at (0,0)
{\includegraphics[width=8.0cm]{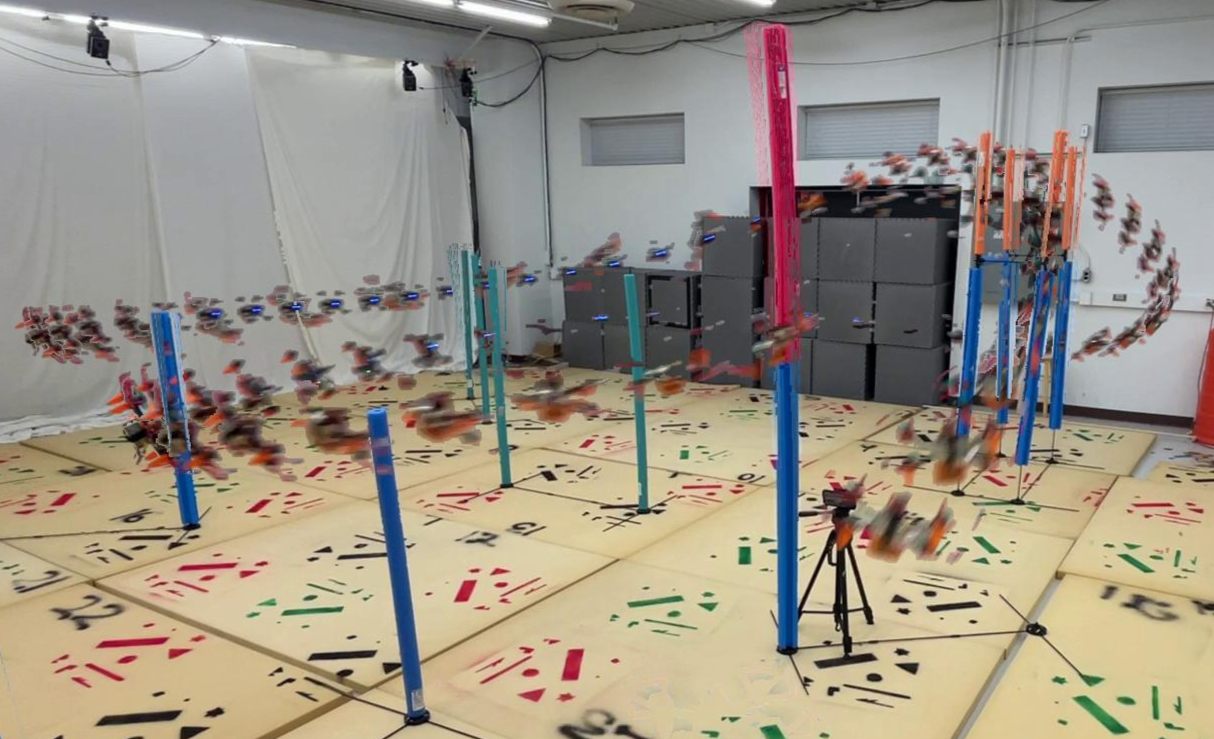}};
\node [inner sep=0pt, outer sep=0pt, below=1mm of img1] (img2)
{\includegraphics[width=8.0cm]{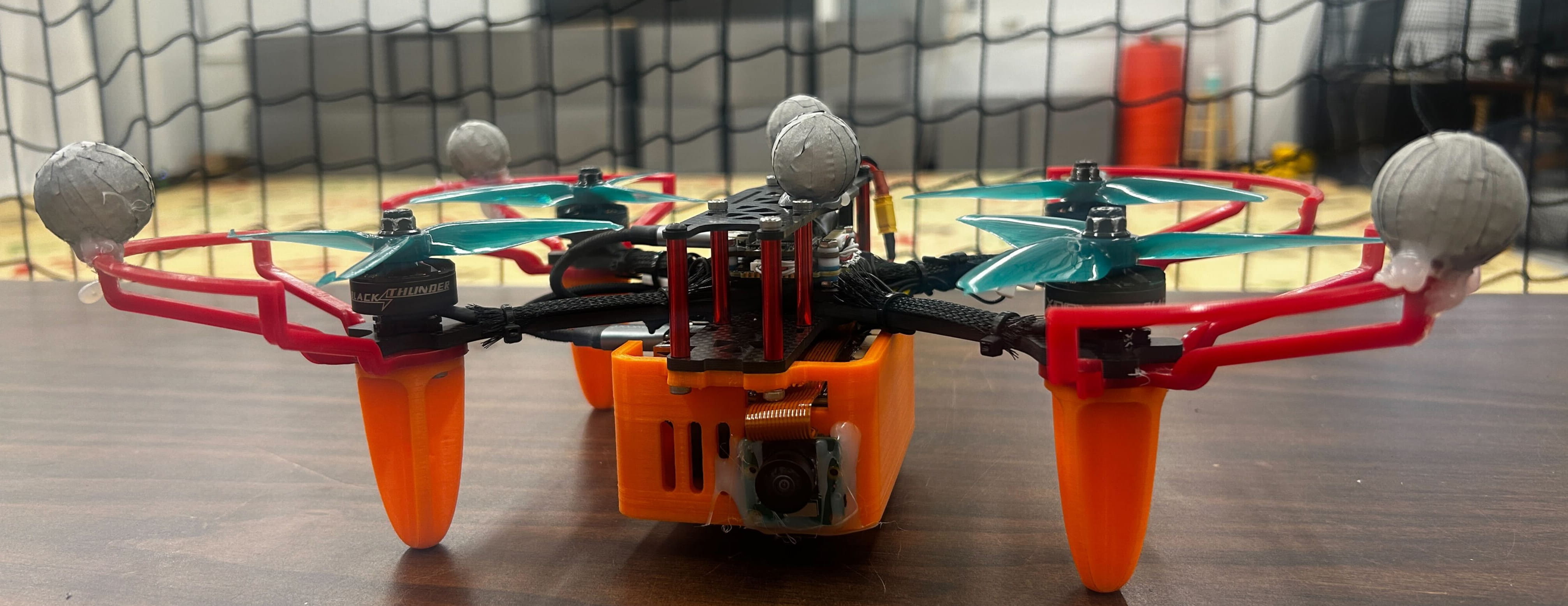}};
\end{tikzpicture}
\caption{Indoor Split-S track for experiment validation. The quadrotor platform is equipped with an NVIDIA Jetson Orin NX 16GB onboard computer and an IMX219 fisheye camera for image capture. Betaflight firmware is employed for low-level body-rate control. }
\label{fig:fig_drone}
\vspace{-0.1cm}
\end{figure}

We aim to validate the proposed planning and control system in real-world time-critical tasks. Please find our hardware specifications in Fig. \ref{fig:fig_drone} and the associated quadrotor parameters in Tab.~\ref{tab:quad_config}. Note that in the following experiments, we obtain the online state estimation from the motion capture system and perform the uncertainty evaluation offline based on the executed trajectory. This choice does not weaken our conclusion because the motion capture system provides a high-fidelity ground truth that is necessary to validate our theoretical uncertainty models. By using this setup, we decouple the performance of the planner from the potential failures of a specific online vision pipeline. This allows us to demonstrate that the planned trajectories inherently lead to better visual quality and lower position uncertainty when they are tracked with high accuracy.

\subsubsection{Controller Evaluation: \ac{MPC} vs. \ac{MPCTC}}

We begin by comparing the performance of \ac{MPC} and the proposed \ac{MPCTC} in tracking purely time-optimal trajectories shown in Fig~\ref{fig:fig_tracking}. Note that in this classical cascaded planning and control framework, it is common practice to limit the maximum thrust to between 70\% and 90\% of its actual capacity during the planning stage. This ensures sufficient actuation margin in the tracking stage to account for external disturbances, unexpected tracking errors, and model mismatch. 

\begin{figure}[!htbp]
    \centering
\tikzstyle{every node}=[font=\footnotesize]
\begin{tikzpicture}[>=stealth]
\node [inner sep=0pt, outer sep=0pt] (img1) at (0,0)
{\includegraphics[width=8.0cm]{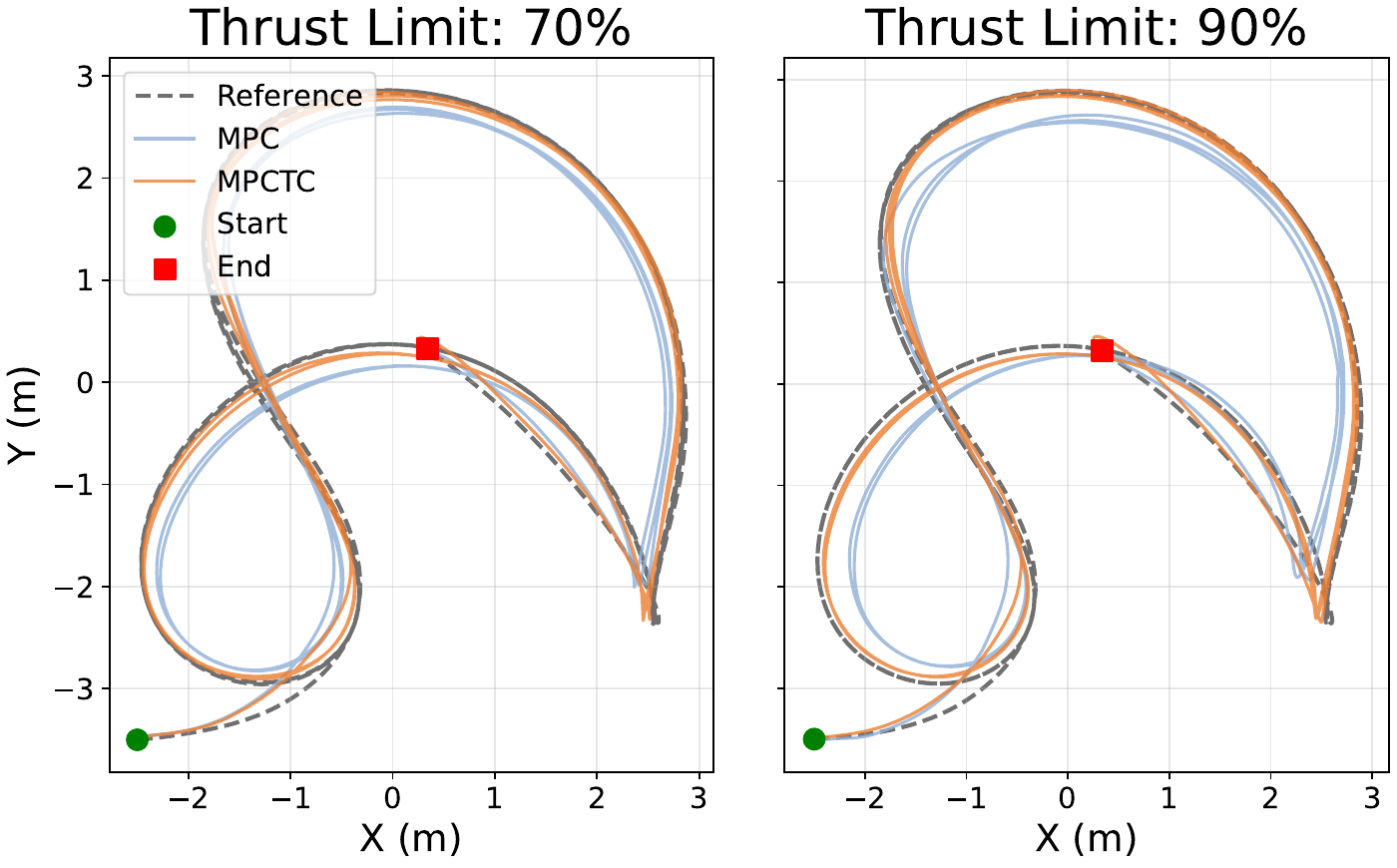}};
\node [inner sep=0pt, outer sep=0pt, below=1mm of img1] (img2)
{\includegraphics[width=8.0cm]{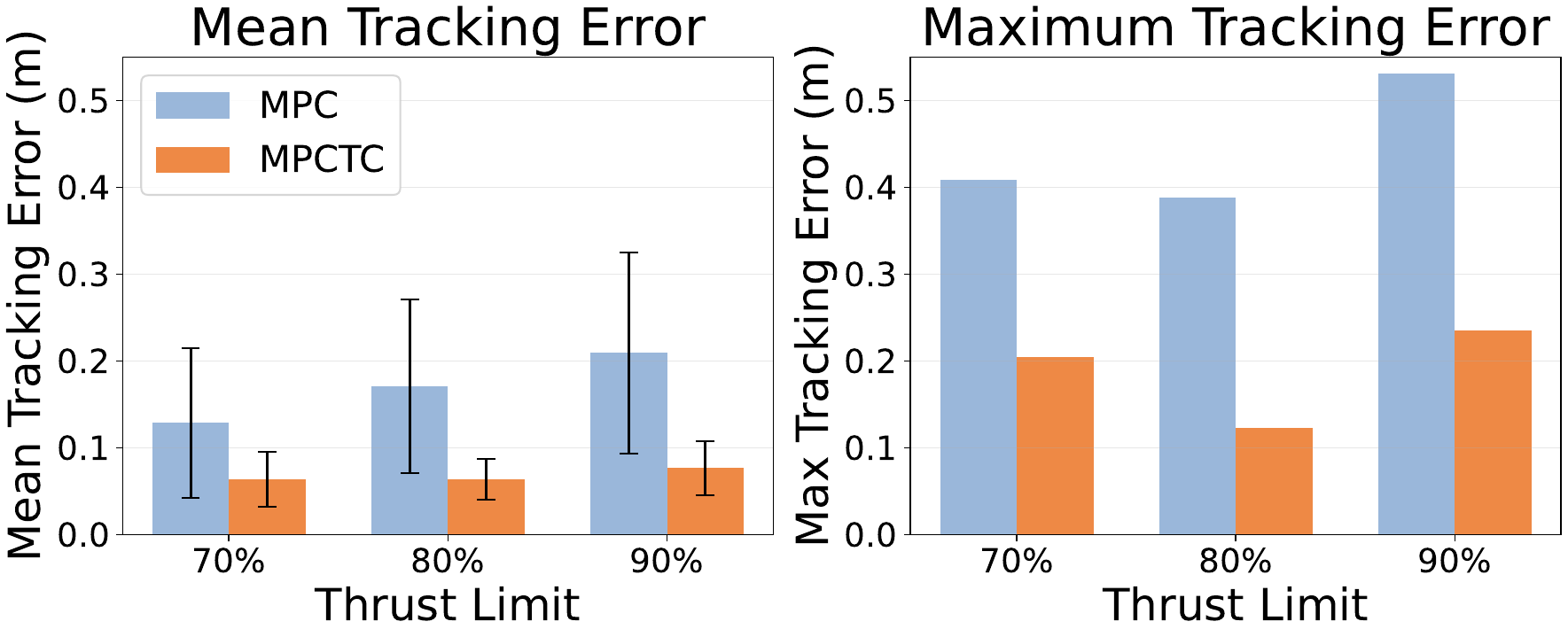}};
\end{tikzpicture}
\caption{Tracking trajectories of standard MPC versus our MPCTC against reference time-optimal trajectories planned with (a) 70\% and (b) 100\% thrust limit. The standard MPC performance degrades significantly when insufficient margin is reserved to compensate for tracking errors or disturbances}
\label{fig:fig_tracking}
\end{figure}

From the results, it is evident that \ac{MPCTC} outperforms \ac{MPC} in terms of tracking accuracy under all thrust limit configurations. The executed trajectory aligns closely with the reference, regardless of the thrust limit setups, maintaining a tracking error of 0.07 m on average with a maximum below 0.23 m even at flight speeds up to 9.8 m/s. In contrast, \ac{MPC} behaves more inconsistently, exhibiting significantly higher tracking error as the thrust limit increases.

Furthermore, we found that the ``corner-cutting'' behavior that frequently affects \ac{MPC} in high-speed flight is overcome by this new formulation. This is an important feature particularly in perception-aware flight, as you want to precisely execute the planned trajectory instead of some improvisation on the fly that breaks the coordination between the translation and rotation to meet certain perception needs. Also, this feature is critical for the \ac{TOGT} flight. When the planned trajectory passes near the edges of gates, the behavior of corner cutting will put the quadrotor in a severe risk of missing the gate or even colliding with the gate. To conclude, \ac{MPCTC} is a desirable tracking controller for perception-aware, time-critical missions due to its high adherence to the given reference trajectory.

\begin{figure}[!htbp]
    \centering
\tikzstyle{every node}=[font=\footnotesize]
\begin{tikzpicture}[>=stealth]
\node [inner sep=0pt, outer sep=0pt] (img1) at (0,0)
{\includegraphics[width=8.0cm]{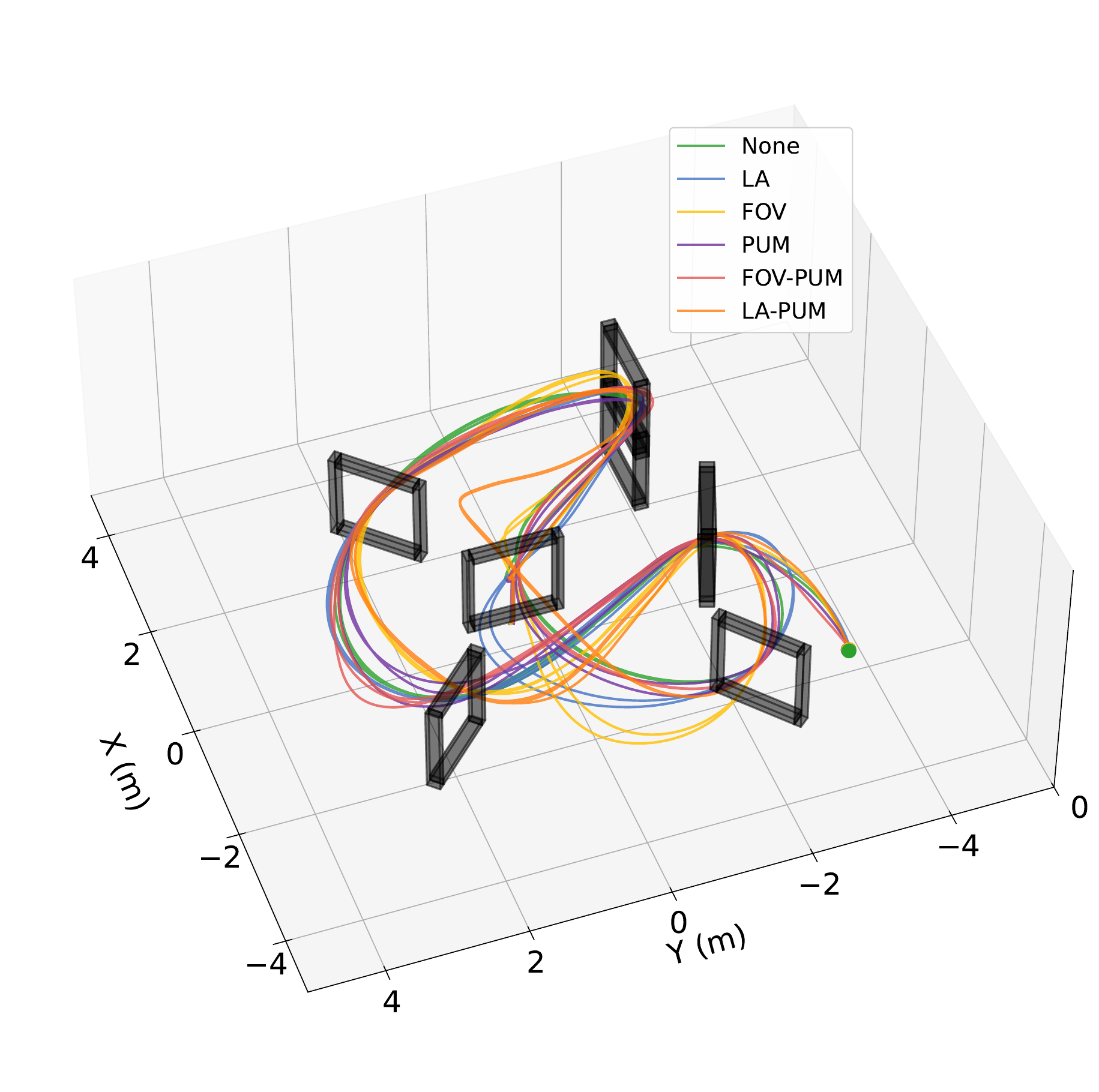}};
\end{tikzpicture}
\caption{Executed trajectories with various perception objectives on a real-world Split-S track. }
\label{fig:exp_trajs}
\vspace{-0.2cm}
\end{figure}


\begin{figure*}[!htbp]
    \centering
\tikzstyle{every node}=[font=\footnotesize]
\begin{tikzpicture}[>=stealth]

\node [inner sep=0pt, outer sep=0pt] (img0) at (0,0)
{\includegraphics[width=5.2cm]{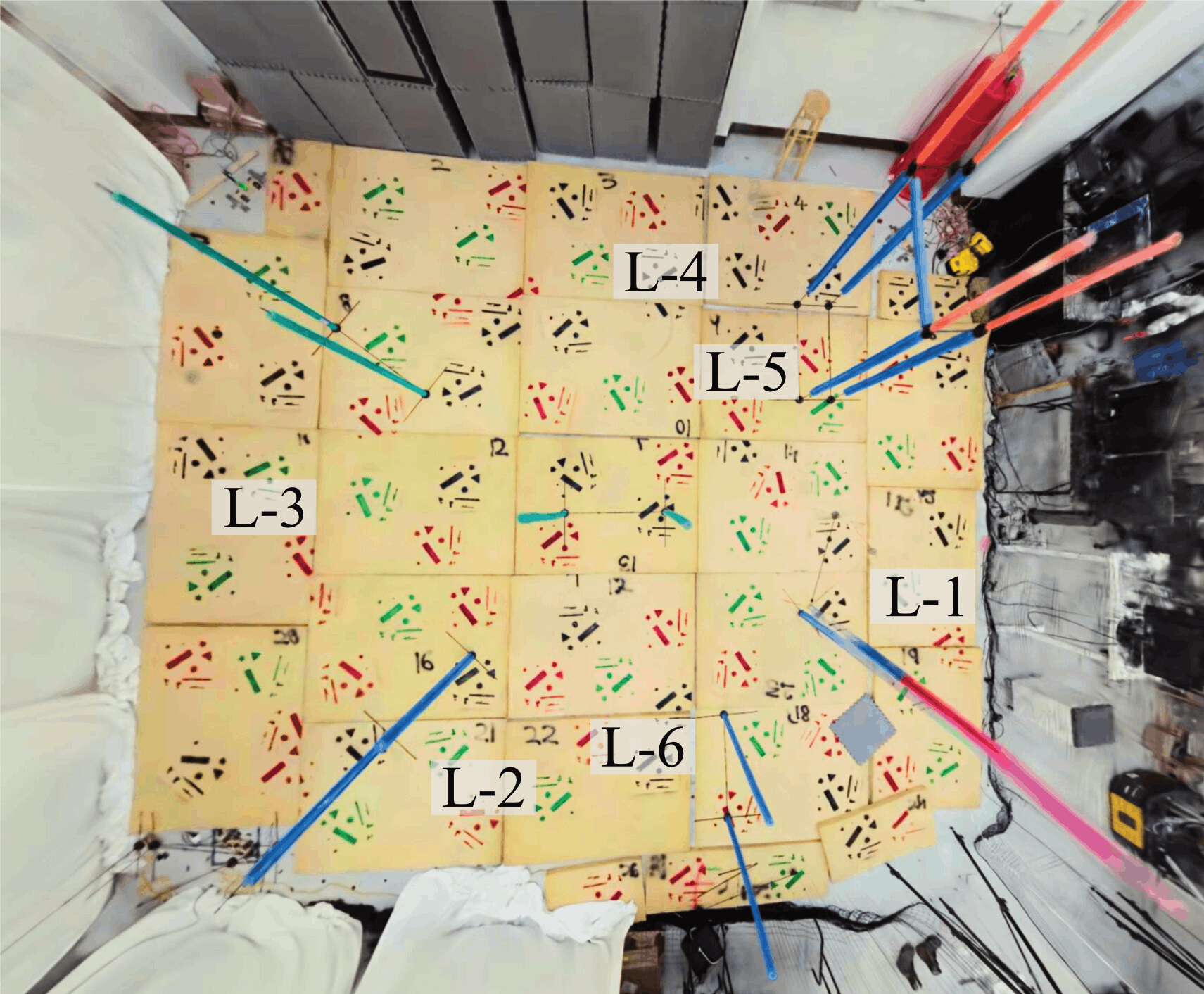}};

\node [inner sep=0pt, outer sep=0pt, right=0.5mm of img0.north east, anchor=north west] (img1)
{\includegraphics[width=2.05cm]{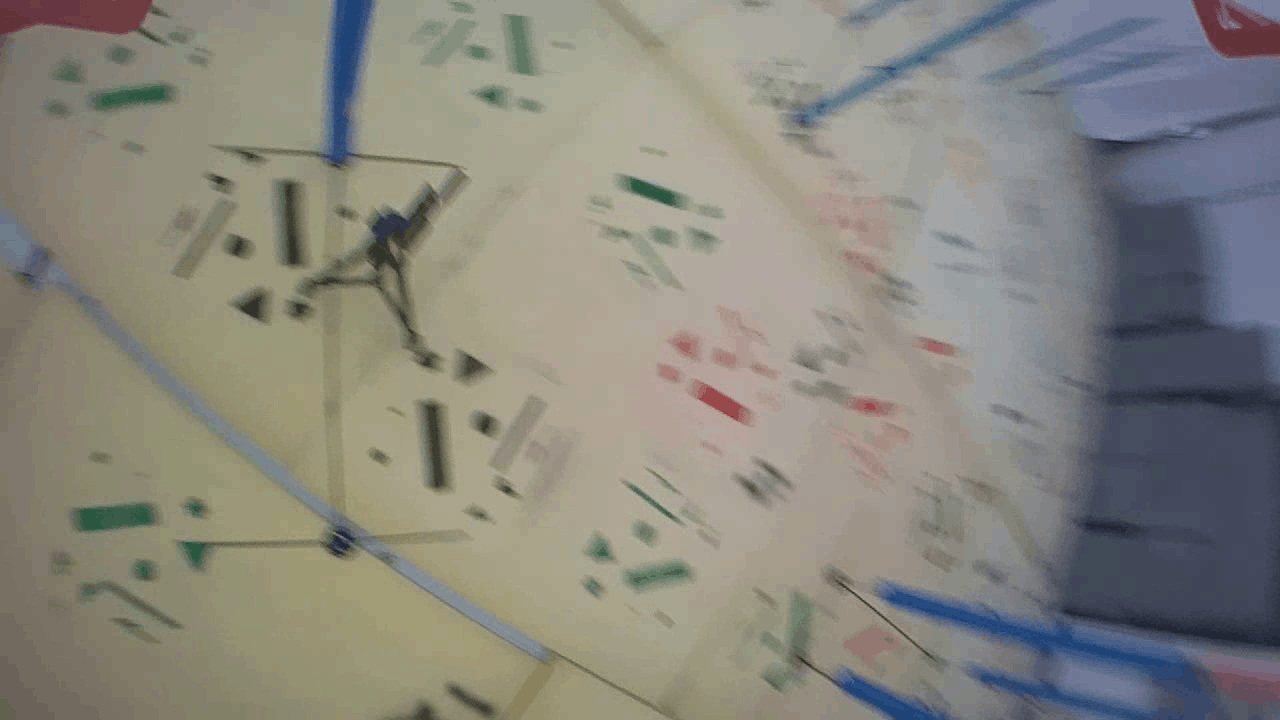}};
\node [inner sep=0pt, outer sep=0pt, right=0.0mm of img1] (img2) {\includegraphics[width=2.05cm]{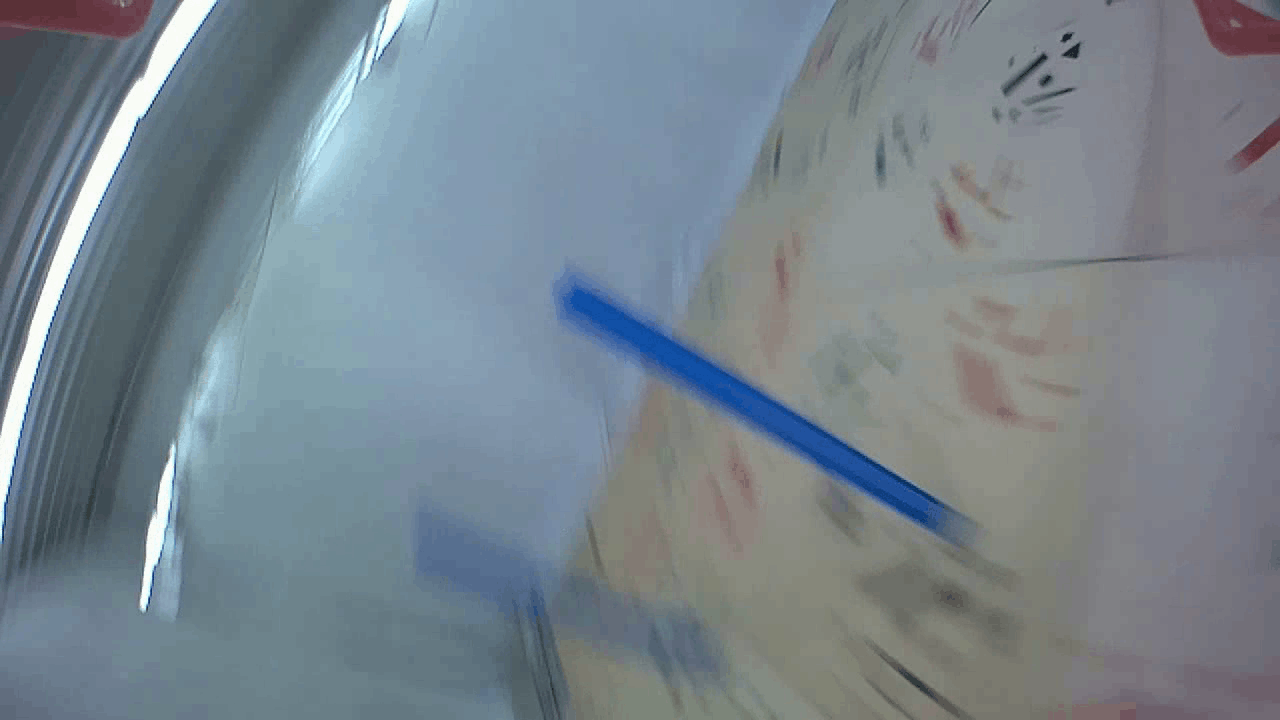}};
\node [inner sep=0pt, outer sep=0pt, right=0.0mm of img2] (img3) {\includegraphics[width=2.05cm]{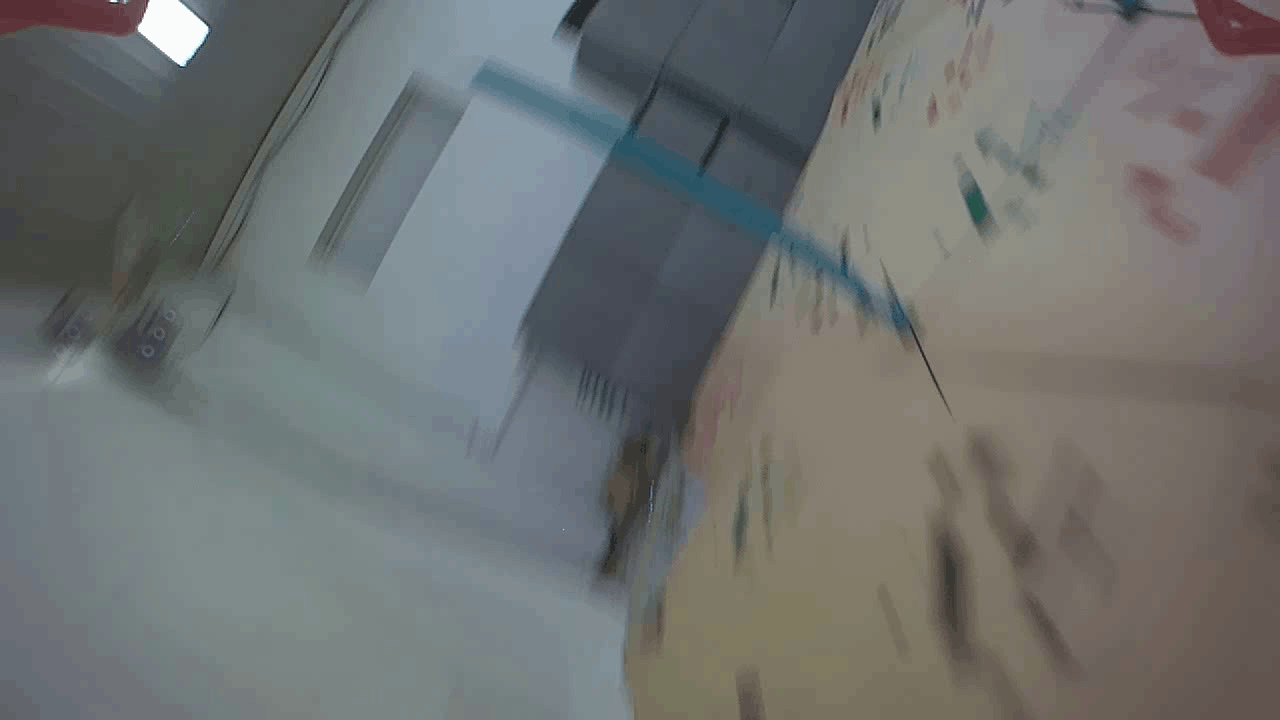}};
\node [inner sep=0pt, outer sep=0pt, right=0.0mm of img3] (img4) {\includegraphics[width=2.05cm]{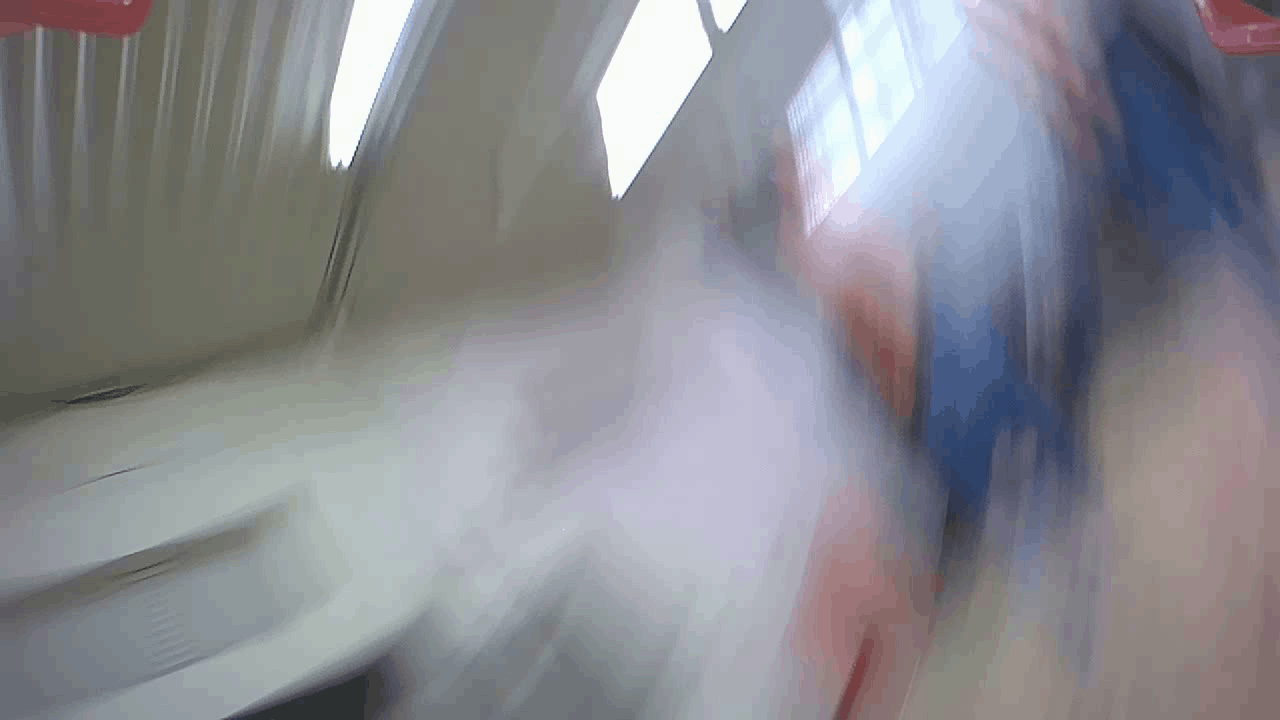}};
\node [inner sep=0pt, outer sep=0pt, right=0.0mm of img4] (img5) {\includegraphics[width=2.05cm]{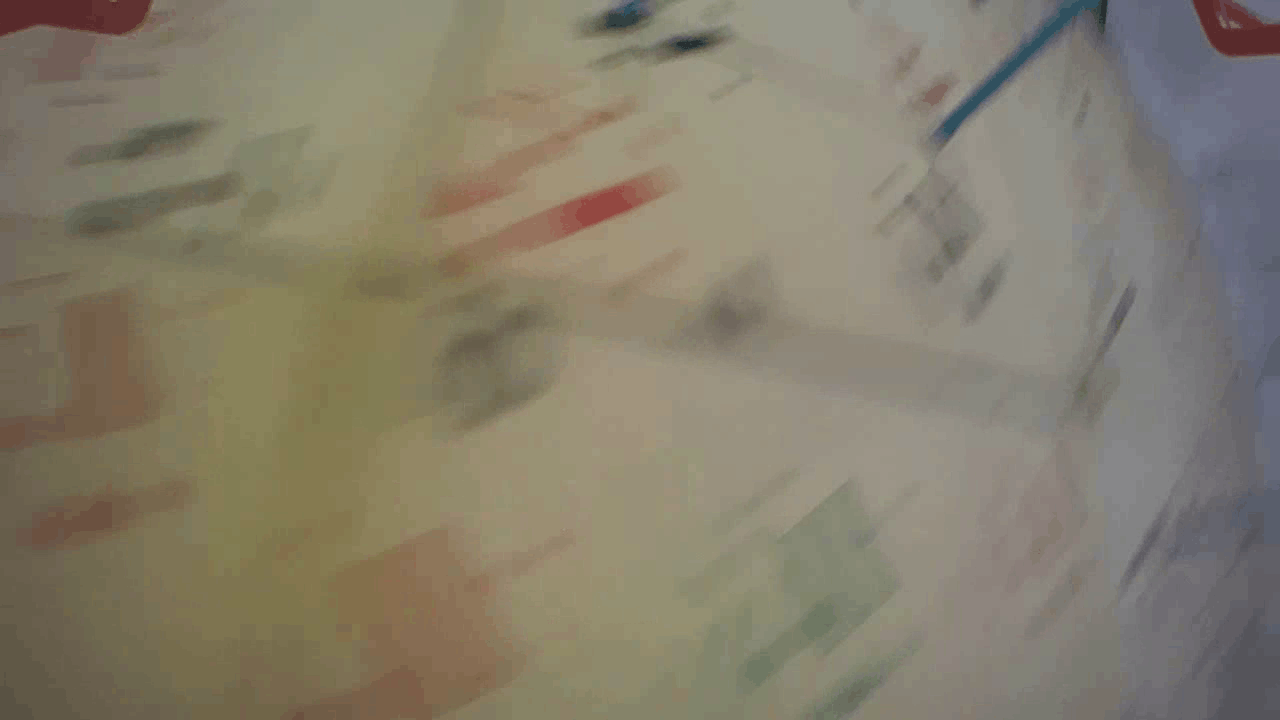}};
\node [inner sep=0pt, outer sep=0pt, right=0.0mm of img5] (img6) {\includegraphics[width=2.05cm]{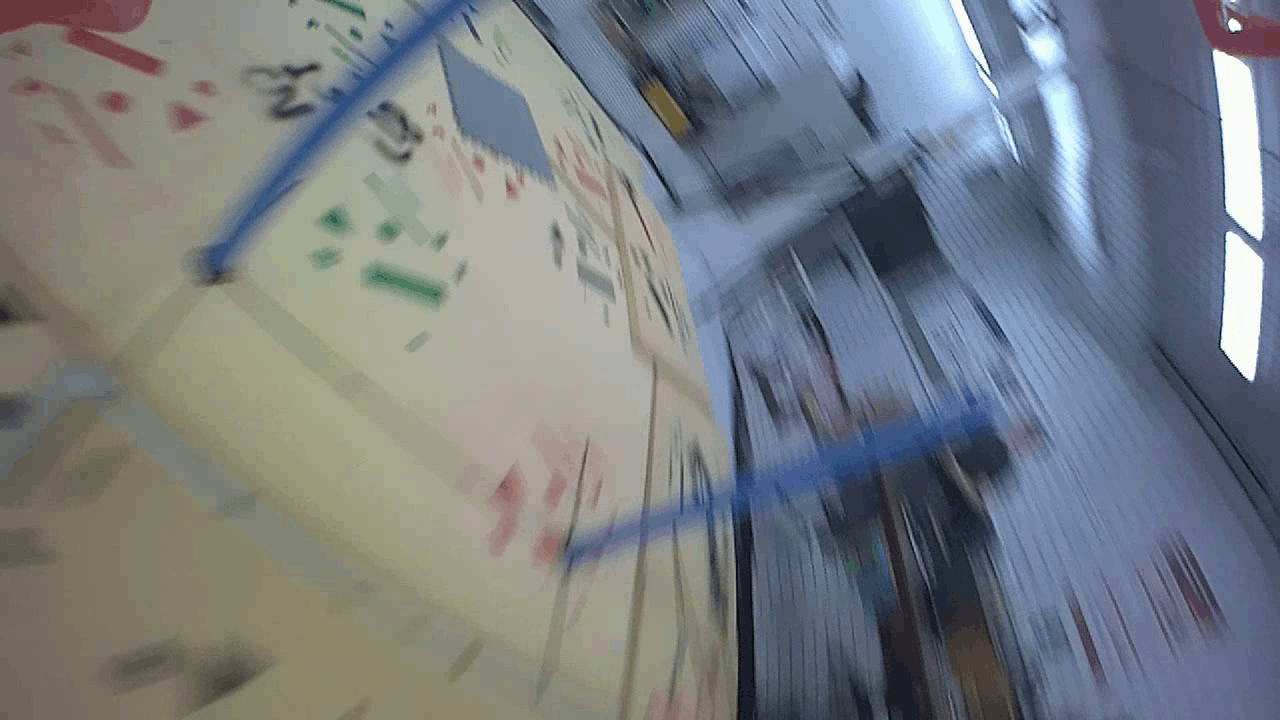}};

\node [inner sep=0pt, outer sep=0pt, below=1mm of img1] (img7) {\includegraphics[width=2.05cm]{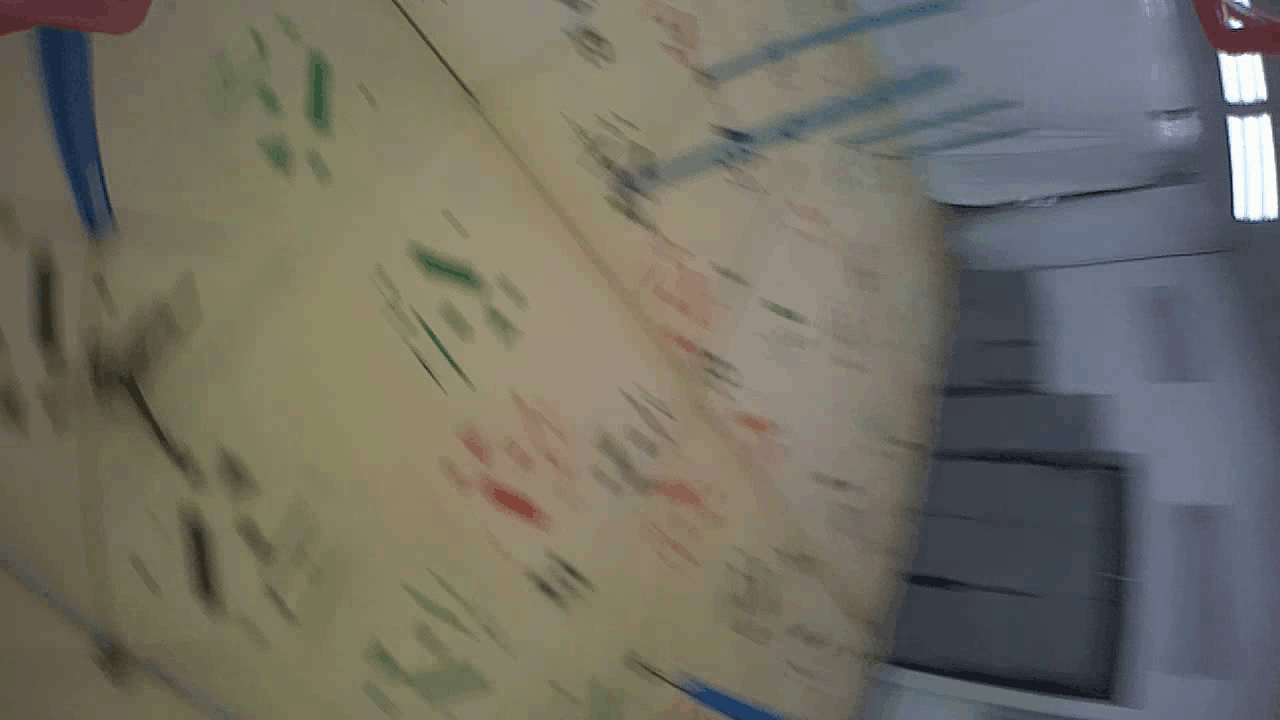}};
\node [inner sep=0pt, outer sep=0pt, right=0.0mm of img7] (img8) {\includegraphics[width=2.05cm]{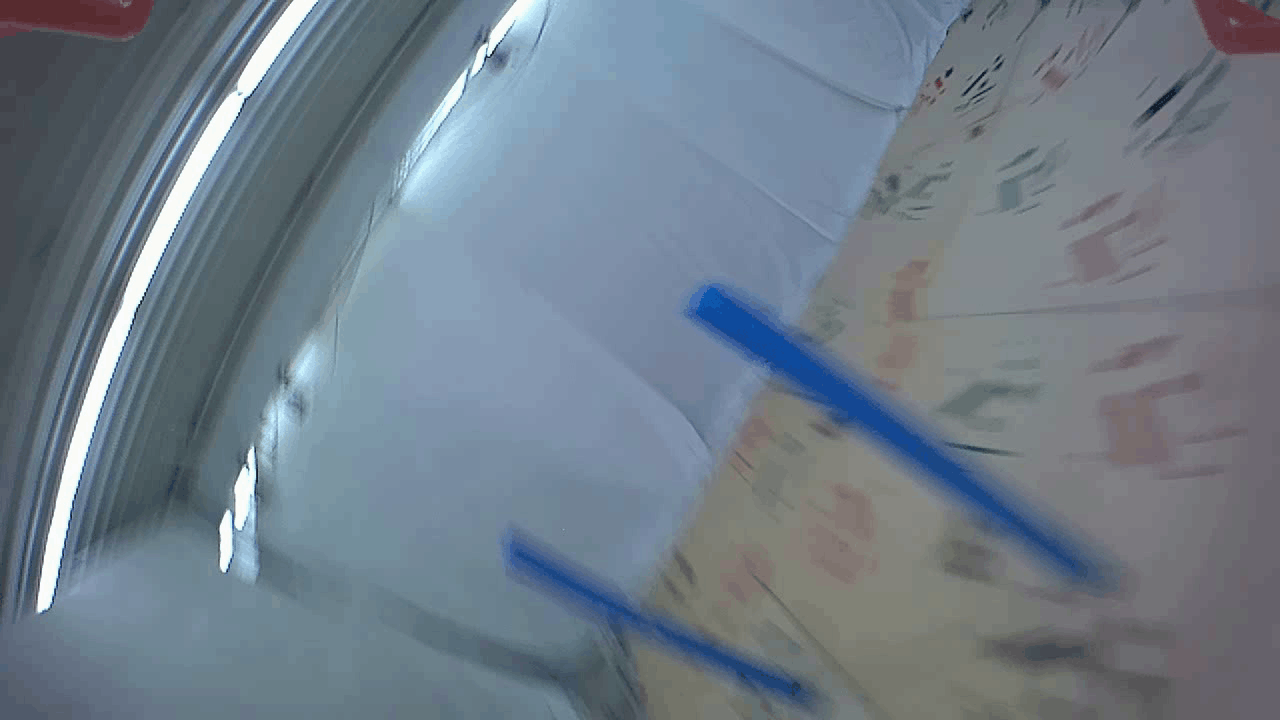}};
\node [inner sep=0pt, outer sep=0pt, right=0.0mm of img8] (img9) {\includegraphics[width=2.05cm]{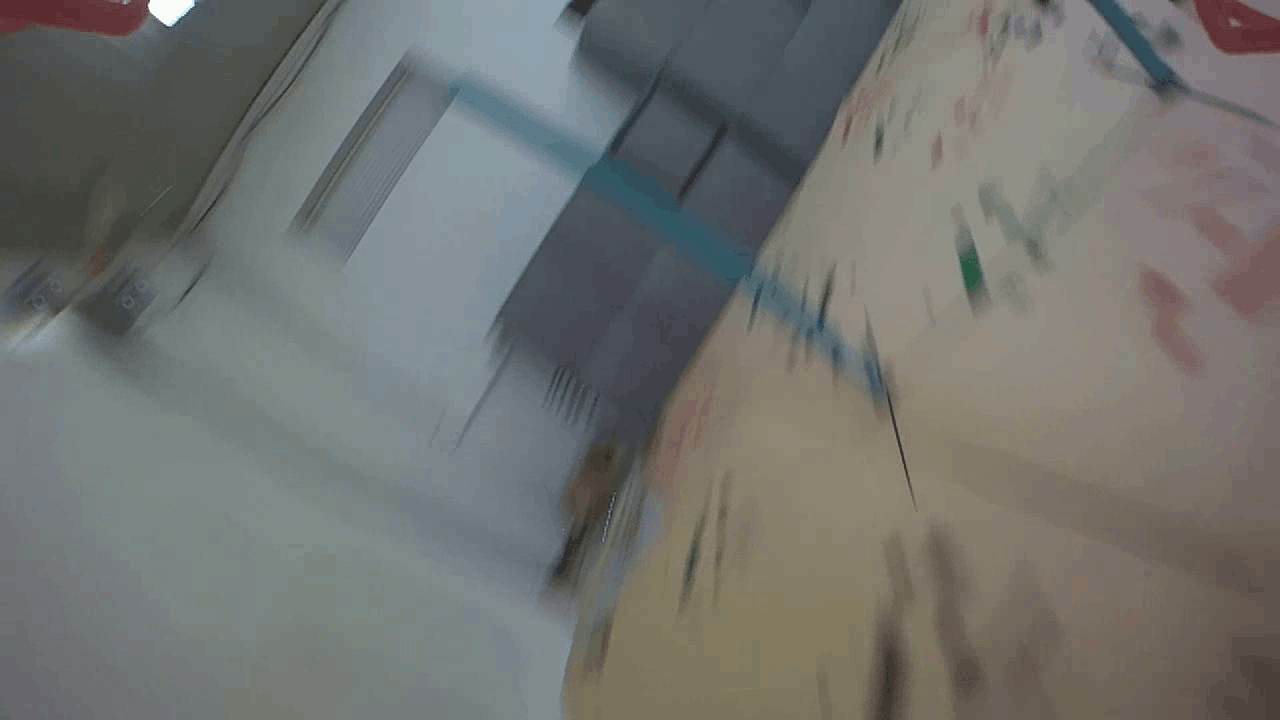}};
\node [inner sep=0pt, outer sep=0pt, right=0.0mm of img9] (img10) {\includegraphics[width=2.05cm]{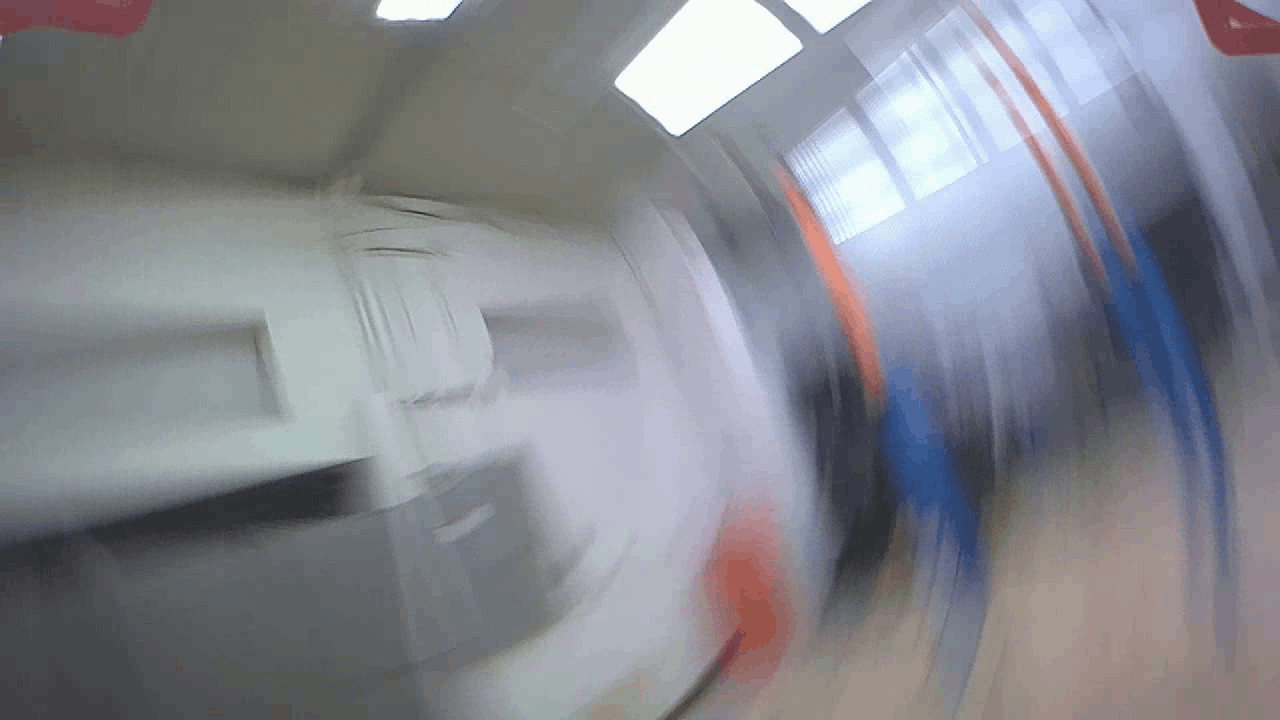}};
\node [inner sep=0pt, outer sep=0pt, right=0.0mm of img10] (img11) {\includegraphics[width=2.05cm]{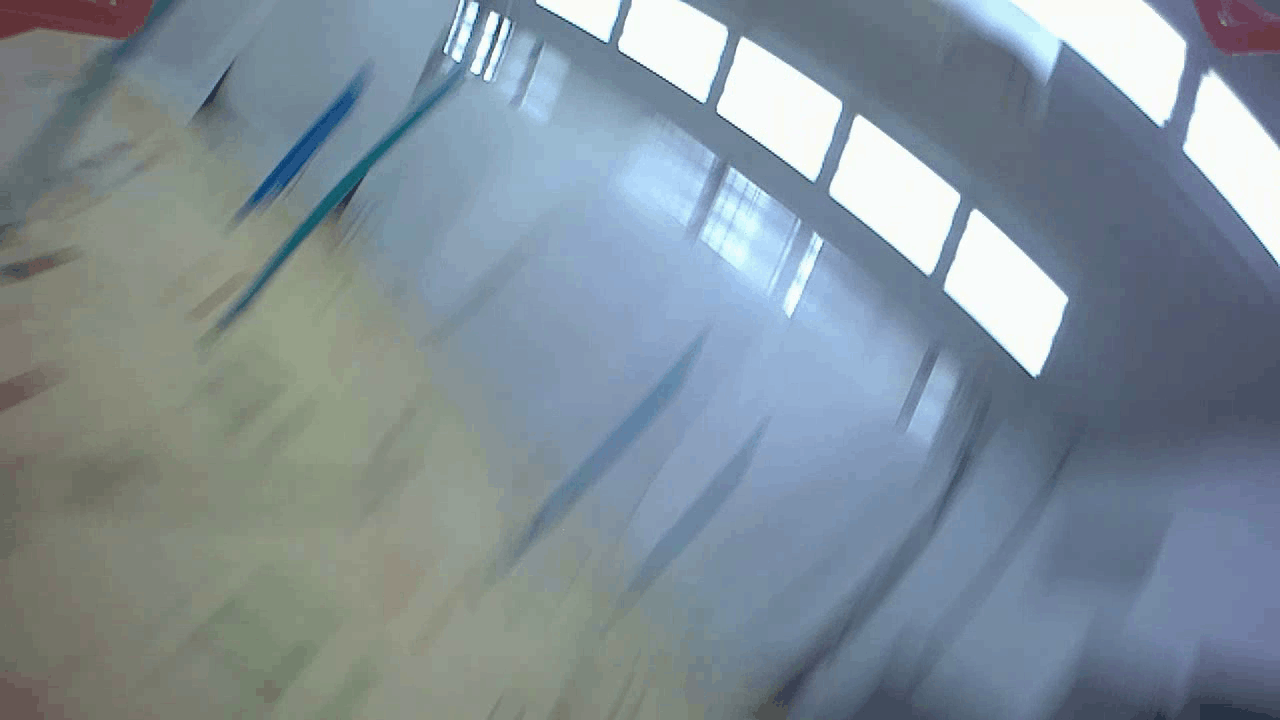}};
\node [inner sep=0pt, outer sep=0pt, right=0.0mm of img11] (img12) {\includegraphics[width=2.05cm]{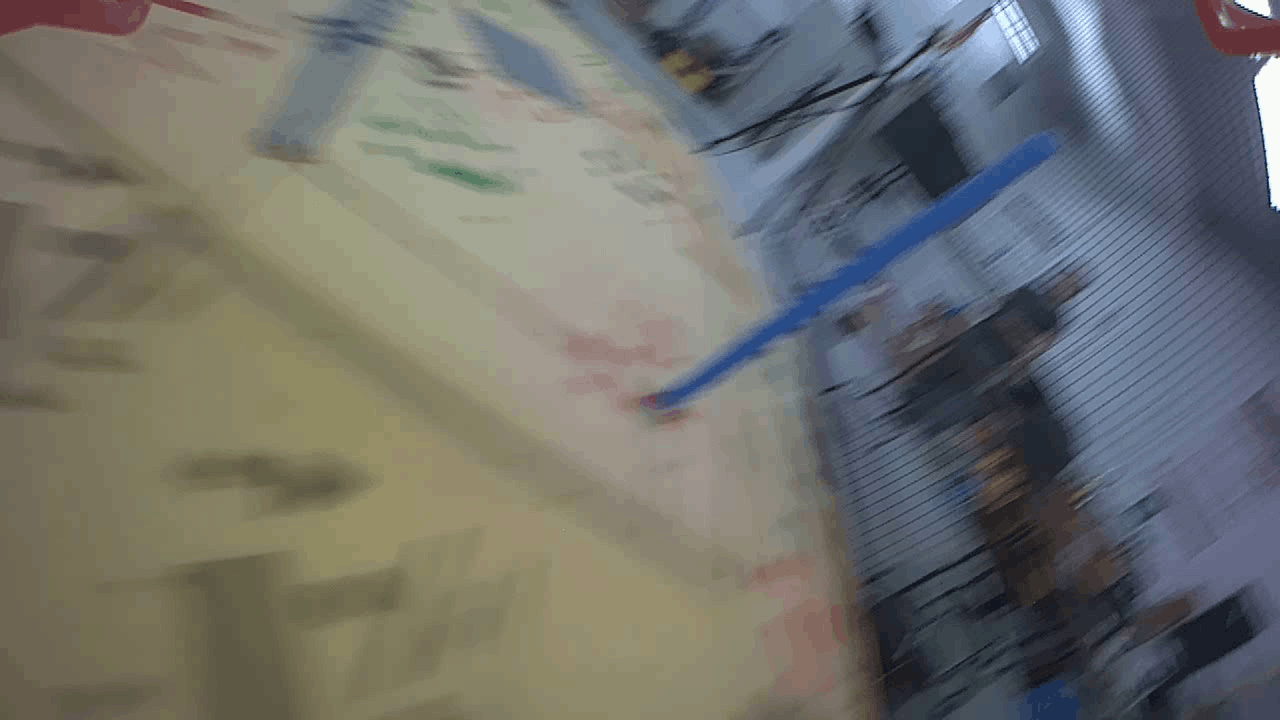}};

\node [inner sep=0pt, outer sep=0pt, below=1mm of img7] (img13) {\includegraphics[width=2.05cm]{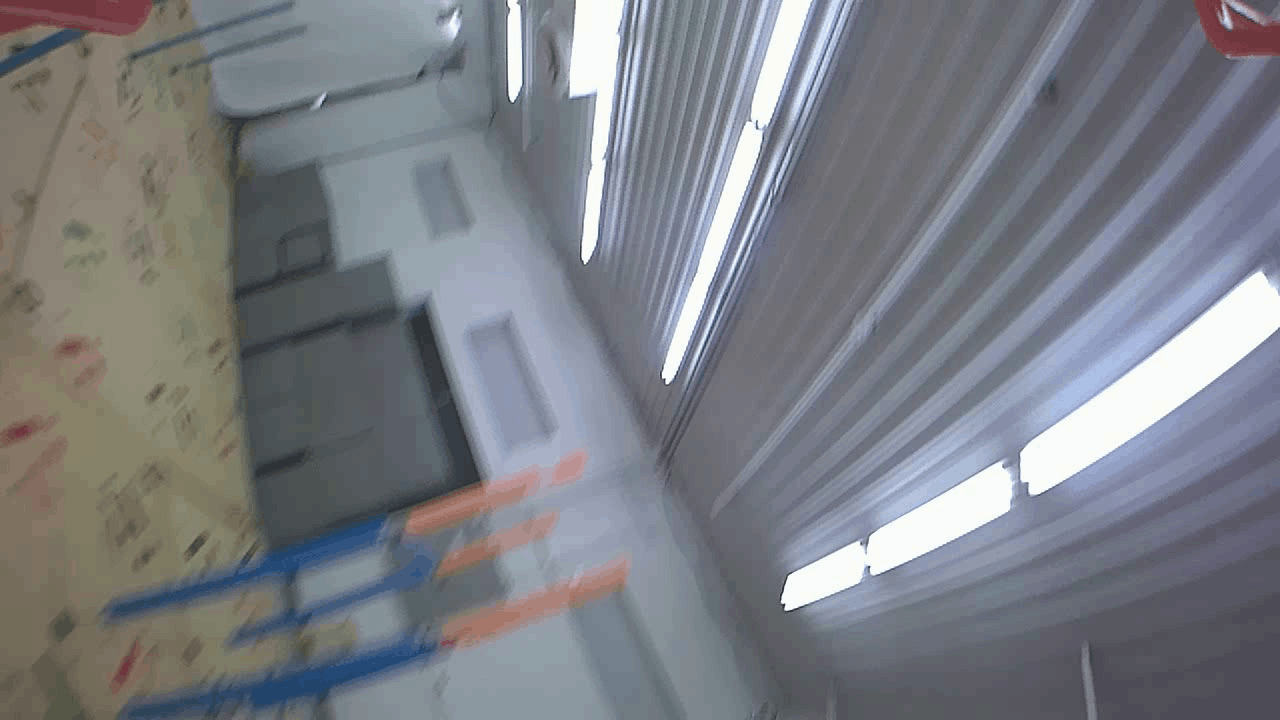}};
\node [inner sep=0pt, outer sep=0pt, right=0.0mm of img13] (img14) {\includegraphics[width=2.05cm]{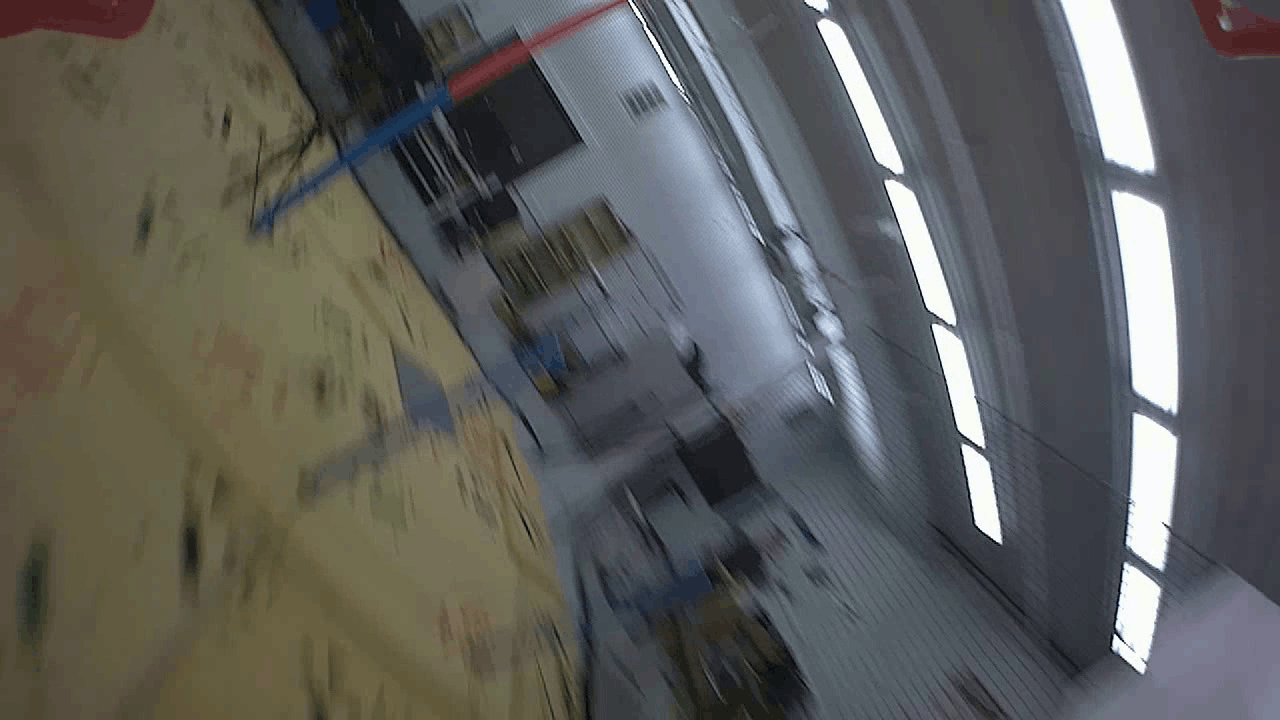}};
\node [inner sep=0pt, outer sep=0pt, right=0.0mm of img14] (img15) {\includegraphics[width=2.05cm]{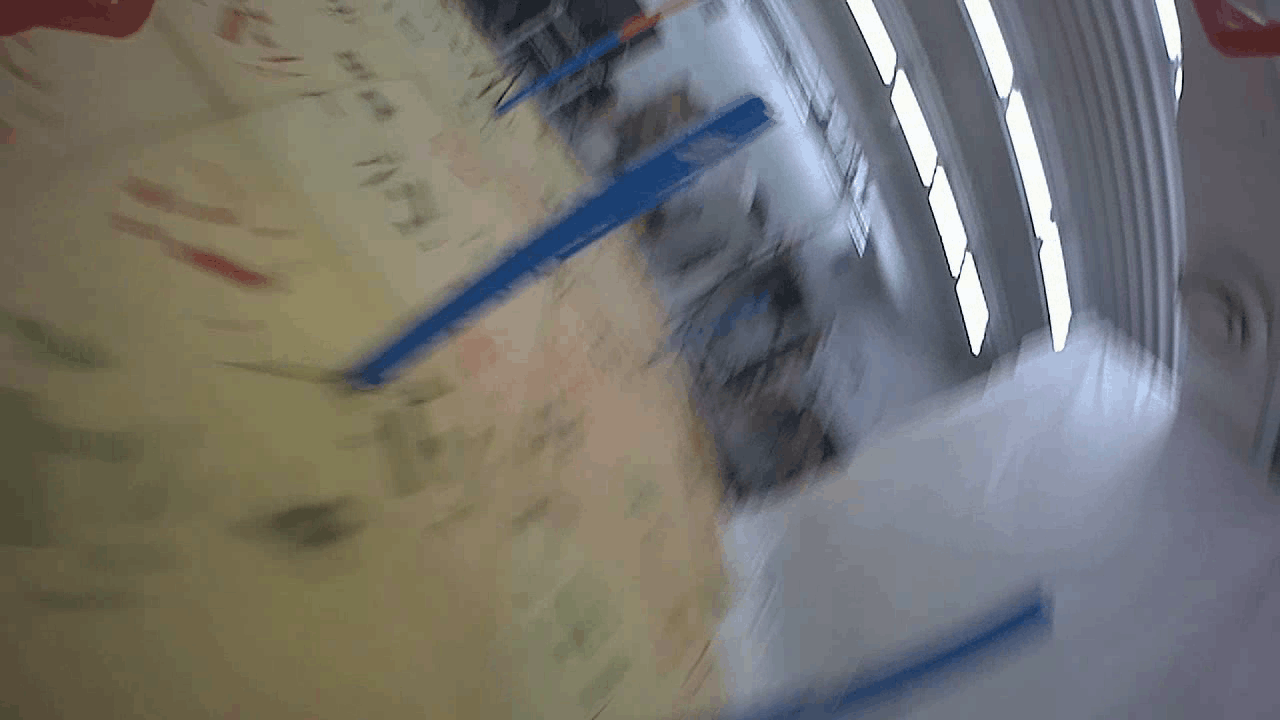}};
\node [inner sep=0pt, outer sep=0pt, right=0.0mm of img15] (img16) {\includegraphics[width=2.05cm]{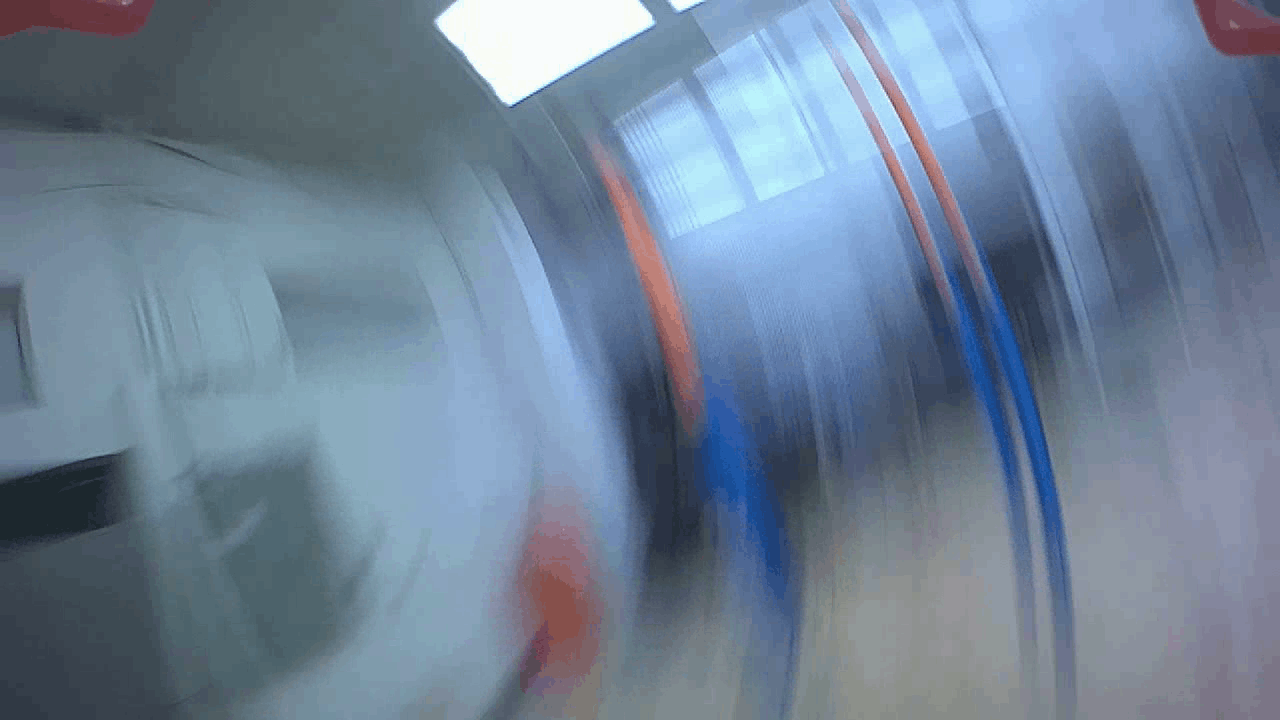}};
\node [inner sep=0pt, outer sep=0pt, right=0.0mm of img16] (img17) {\includegraphics[width=2.05cm]{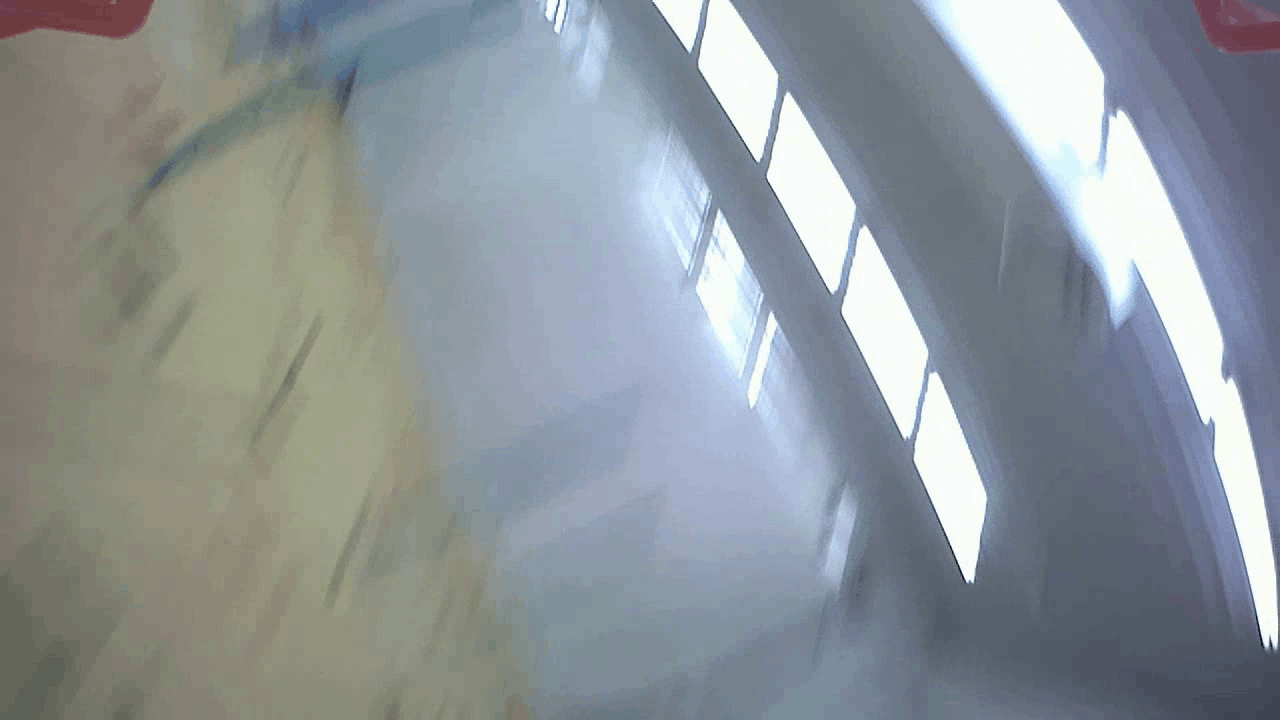}};
\node [inner sep=0pt, outer sep=0pt, right=0.0mm of img17] (img18) {\includegraphics[width=2.05cm]{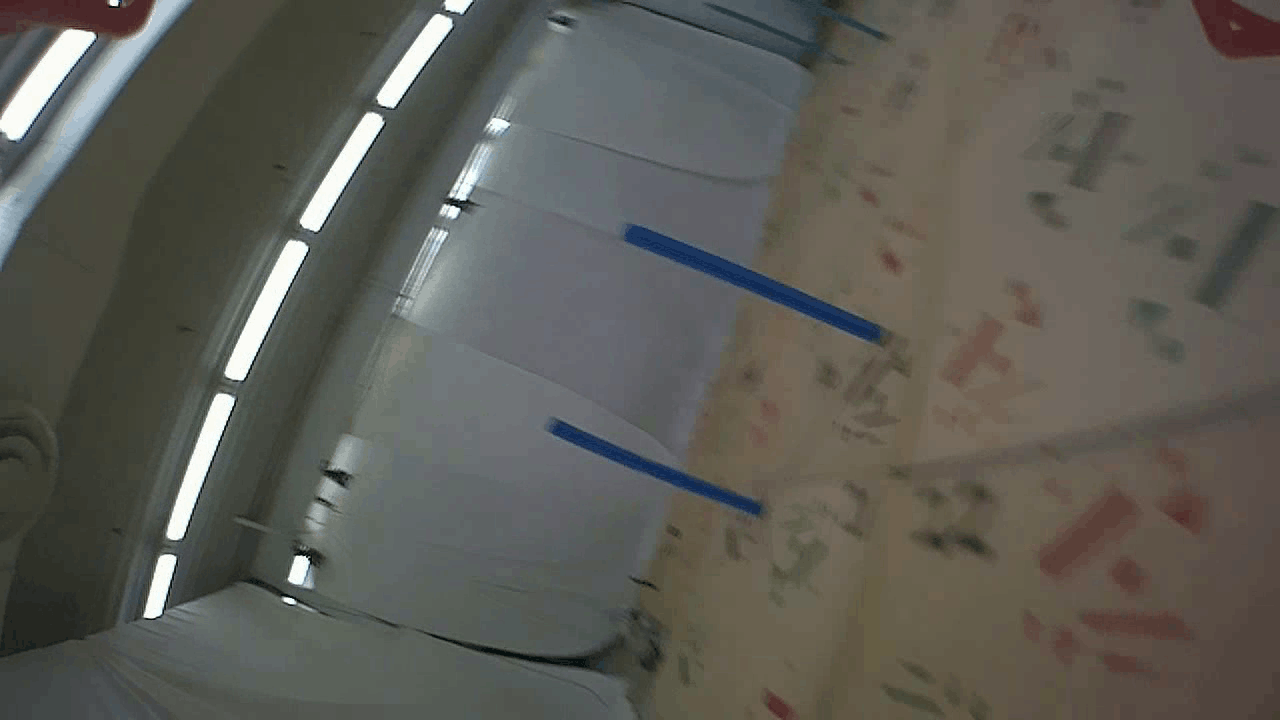}};

\node[below=1mm of img13](text1){L-1};
\node[below=1mm of img14](text2){L-2};
\node[below=1mm of img15](text3){L-3};
\node[below=1mm of img16](text1){L-4};
\node[below=1mm of img17](text2){L-5};
\node[below=1mm of img18](text3){L-6};

\node [rotate=-90, right=0.5cm of img6.north east, anchor=north west, xshift=-0.0cm](rowlabel1){(a) LA};
\node [rotate=-90, right=0.5cm of img12.north east, anchor=north west, xshift=0.0cm](rowlabel1){(b) FOV};
\node [rotate=-90, right=0.5cm of img18.north east, anchor=north west, xshift=0.0cm](rowlabel1){(c) PUM};

\end{tikzpicture}
\caption{\ac{FPV} images of perception-aware time-optimal flights at six different locations. While \ac{LA} and \ac{FOV} exhibit similar views, \ac{PUM} differs from them by looking toward directions where multiple gates are visible instead of only the upcoming one, such as locations L-1 and L-6.}
\label{fig:exp_fpv_footages}
\end{figure*}

\subsubsection{Position Uncertainty Evaluation}

In this experiment, we execute the trajectories with the \ac{LA} \tikz[baseline=-0.6ex] \node[regular polygon, regular polygon sides=3, rotate=0, draw={rgb,1:red,0.247; green,0.369; blue,0.710}, fill={rgb,1:red,0.247; green,0.369; blue,0.710}, line width=0.8pt, inner sep=0pt, minimum size=6pt] {};, \ac{FOV} \tikz[baseline=-0.6ex] \node[regular polygon, regular polygon sides=4, rotate=0, draw={rgb,1:red,0.247; green,0.369; blue,0.710}, fill={rgb,1:red,0.247; green,0.369; blue,0.710}, line width=0.8pt, inner sep=0pt, minimum size=6pt] {};, \ac{PUM} \tikz[baseline=-0.6ex] \node[regular polygon, regular polygon sides=5, rotate=0, draw={rgb,1:red,0.247; green,0.369; blue,0.710}, fill={rgb,1:red,0.247; green,0.369; blue,0.710}, line width=0.8pt, inner sep=0pt, minimum size=6pt] {};, FOV-PUM \tikz[baseline=-0.6ex] \node[regular polygon, regular polygon sides=6, rotate=0, draw={rgb,1:red,0.247; green,0.369; blue,0.710}, fill={rgb,1:red,0.247; green,0.369; blue,0.710}, line width=0.8pt, inner sep=0pt, minimum size=6pt] {};, and LA-PUM \tikz[baseline=-0.6ex] \node[regular polygon, regular polygon sides=8, rotate=0, draw={rgb,1:red,0.247; green,0.369; blue,0.710}, fill={rgb,1:red,0.247; green,0.369; blue,0.710}, line width=0.8pt, inner sep=0pt, minimum size=6pt] {}; objectives, respectively, using the \ac{MPCTC}, followed by comparing the corresponding position uncertainty along the trajectory. As shown in Fig.~\ref{fig:exp_trajs}, all trajectories are executed with a high precision, passing through exactly the center of the gate.

\begin{figure}[!htbp]
    \centering
\tikzstyle{every node}=[font=\footnotesize]
\begin{tikzpicture}[>=stealth]
\node [inner sep=0pt, outer sep=0pt] (img1) at (0,0)
{\includegraphics[width=8.0cm]{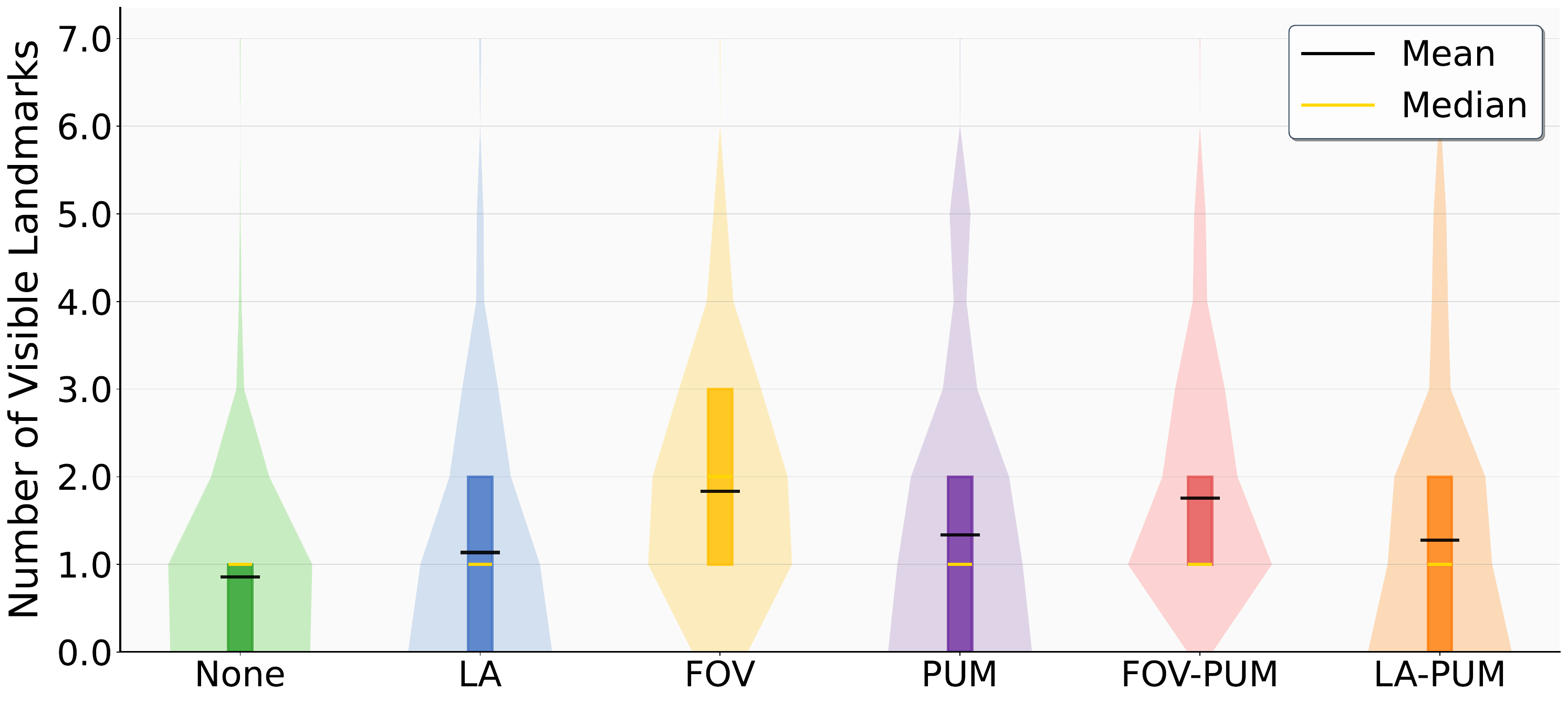}};
\node[below=1mm of img1](text1){(a) Landmark Visibility};
\node [inner sep=0pt, outer sep=0pt, below=1mm of text1] (img2)
{\includegraphics[width=8.0cm]{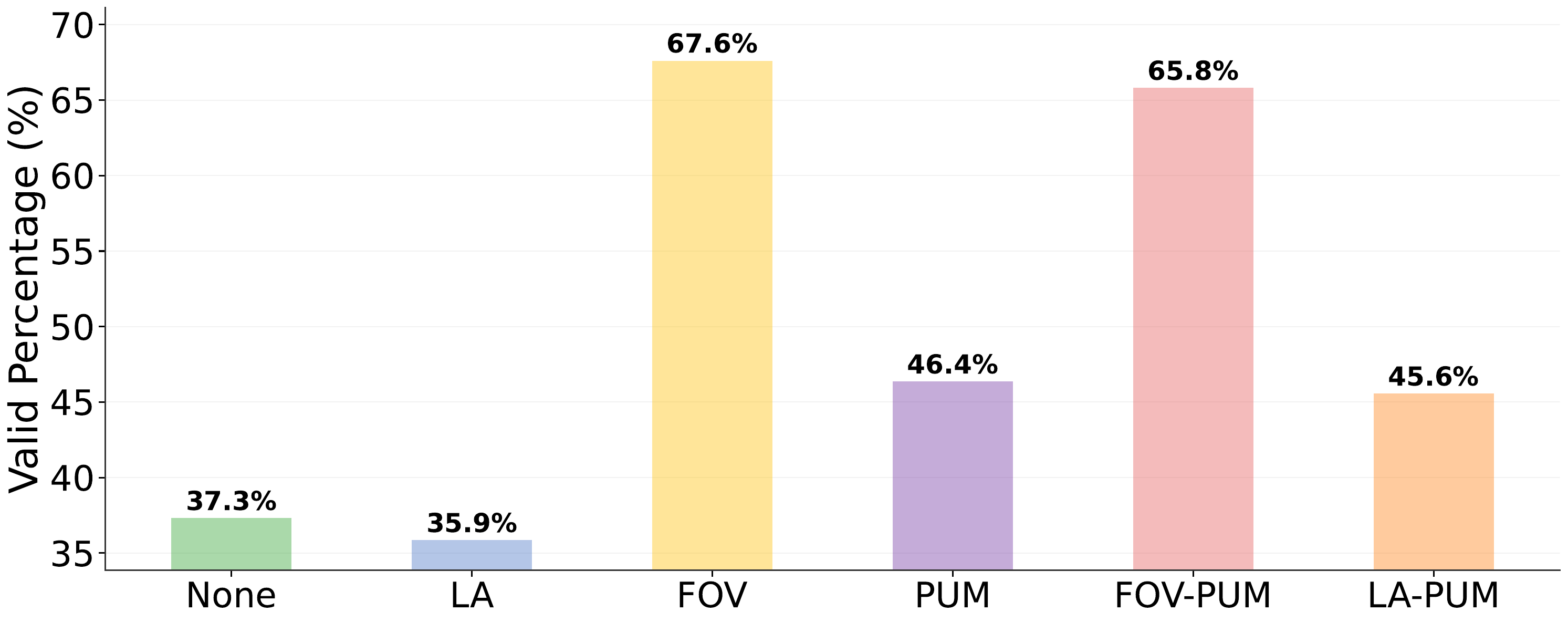}};
\node[below=1mm of img2](text2){(b) \ac{PnP} Pos. Availability};
\end{tikzpicture}
\caption{Landmark visibility and position estimate availability in the real-world experiments.}
\label{fig:exp_uncertainty}
\vspace{-0.2cm}
\end{figure}

We see from Fig.~\ref{fig:exp_fpv_footages} that since it is a complex Split-S track squeezed into a relatively small indoor environment, the quadrotor has to perform aggressive rotations to finish the track in the shortest possible time, thus creating severe motion blur in the captured images and making keeping the gate visible very challenging. In such a challenging scenario, the \ac{LA} and \ac{FOV} still manage to maintain good visibility of the upcoming gate, albeit with different objective formulations. As expected, the \ac{PUM} demonstrates a different flight strategy, looking sideways at L-1 and backward at L-2, L-3, and L-6 to see more gates. Please visit our website for detailed footages in this experiments.

Fig.~\ref{fig:exp_uncertainty} provides a quantitative evaluation of perception quality. It can be observed that in such extremely aggressive flights, \ac{FOV} turns out to be the strategy that provides the best visibility of gates, although it only considers the upcoming gate in the optimization. It reaches the lowest interval where no gates are visible, which is 12.1\%. In the FOV-PUM case, this number further reduces to 7.7\%. As a result, the percentage of sampled frames where the \ac{PnP} solver provides a reasonable output reaches a high of 67.6\% with \ac{PUM} and 65.8\% with FOV-PUM. One plausible cause is that in such a small race track where the gate distribution is dense and each trajectory segment between gates is short, the quadrotor must produce much more intense aggressive rotations than usual to keep the lap time low, which comes at the cost of undermined perception quality. In this scenario, while the \ac{FOV} constraint formulation remains effective due to its constraint nature, those cost-function-based formulations, such as \ac{LA} and \ac{PUM}, end up losing authority in determining the final maneuver, leading to unsatisfactory visual quality.

It is worth mentioning that \ac{PUM} still provides a salient improvement in perception quality over \ac{LA}. Interestingly, \ac{LA} reaches an even lower valid percentage than the baseline with no perception awareness. A direct reason for this is that \ac{LA} results in slightly longer periods with no gates visible. This highlights a fundamental limitation of \ac{LA}: the heuristic of orienting toward the future path does not guarantee sufficient gate visibility across all track geometries. In tracks involving intense and tight turns, \ac{LA} may not be a suitable strategy for guiding trajectory optimization. Meantime, the result also suggests that while \ac{PUM} may be theoretically optimal for uncertainty minimization, the final position estimate may not reach the theoretical optimum. Several factors contribute to this discrepancy, including missing key feature observations during actual trajectory execution or noise in gate corner detection that causes unanticipated large biases. Therefore, in practice, it is essential to intelligently combine these perception objectives to achieve desirable outcomes.

\section{Discussion And Conclusions}\label{sec:conclusion}

\subsection{Limitations}

This work presents two primary limitations. First, the framework currently considers only position uncertainty, assuming that orientation measurements remain accurate. While roll and pitch angles can be estimated from IMU measurements without significant bias, the yaw angle is susceptible to accumulated drift, particularly when the landmark layout is ill-conditioned or only a single, distant landmark is visible. In future work, we plan to incorporate yaw uncertainty metrics into the perception objective design.

The second limitation stems from the cascaded control structure, which cannot perfectly reproduce the planned trajectory and necessitates a trade-off between completion time and tracking accuracy. Increasing the progress weight reduces the time gap relative to the planned duration but inevitably increases tracking error. Consequently, this requires careful parameter tuning, particularly when traversing small gates or when the intended path grazes the edges of the gate frame.

\subsection{Our Approach vs. \ac{RL}}
While both this work and various \ac{RL} approaches \cite{romero2025actor, verraest2025skydreamer, song2023reaching, kaufmann2023champion} target the same task of autonomous drone racing, they address fundamentally different research problems.

\textbf{Theoretical Limits and Benchmarking:}
In essence, this paper focuses on identifying the theoretical control limits of a quadrotor using a nominal model in minimum-time tasks. This limit applies to all control strategies, including RL, provided they operate on the same model. Consequently, one of our primary contributions is a benchmarking methodology for evaluating controller performance rather than a system intended to outfly RL in the real world. 

\textbf{Generalizability and Explainability:}
RL often requires extensive retraining and hyperparameter fine-tuning when switching to a new race track. In contrast, our approach generalizes by nature and can provide a solution for any track configuration almost immediately. Furthermore, our method is explainable, while \ac{RL} operates in a black-box manner. Explainability allows for the direct inspection of the underlying optimization landscape. This transparency is crucial for identifying the specific causes of failure and for providing formal guarantees of performance

\textbf{Multi-Objective Flexibility:}
Our approach offers greater flexibility in handling diverse objectives, such as the perception-aware metrics introduced in this paper. While \ac{RL} can also incorporate a variety of reward functions, doing so increases the risk of trapping the policy in poor local minima and inevitably results in slower convergence. In comparison, our formulation inherits the advantages of traditional optimization-based planning methods, maintaining a clear path to convergence even when subjected to complex perception requirements.

In conclusion, our approach and \ac{RL} excel in distinct contexts and are not mutually exclusive. If a researcher needs to rapidly determine the theoretical minimum lap time for a track with known gate geometries and layouts, our method provides an immediate solution with an optimality guarantee. Furthermore, it can provide the expert demonstrations required to train imitation learning models, which subsequently facilitate the convergence of \ac{RL} policies.

\subsection{Conclusions}

In this paper, we presented a unified perception-aware time-optimal planning and control framework for vision-based \ac{UAV}s. By integrating three distinct perception objectives, look-ahead, sequential field-of-view constraints, and position uncertainty minimization, we demonstrated that explicitly accounting for perceptual quality significantly reduces state estimation uncertainty and enhances closed-loop robustness. Our findings highlight that FOV and PUM are particularly effective in racing scenarios, as it enables the drone to maximize the utility of environmental visual features to improve localization performance. Extensive numerical and real-world experiments are performed to validate the proposed autonomous system. These results provide a robust foundation for a wide range of time-critical applications, from high-speed drone racing to autonomous infrastructure inspection.

\section*{ACKNOWLEDGMENT}

The authors would like to acknowledge the Learning Systems \& Robotics Lab (formerly the Dynamic Systems Lab) at UTIAS, led by Dr. Angela Schoellig, for providing facility support for the laboratory flight experiments.
This work was supported by the European Union’s Horizon Europe Research and Innovation Programme under grant
agreement No. 101120732 (AUTOASSESS) and the European
Research Council (ERC) under grant agreement No. 864042
(AGILEFLIGHT).


%
\bibliographystyle{IEEEtran}
\bibliography{IEEEabrv,main}

\appendices

\section{Uncertainty in the Bearing Vector Measurements}
\label{sec:appendix_a}

The standard deviation of the bearing vector has the following relationship to that of the image coordinates:
\begin{equation}
\boldsymbol{\sigma}_{\rho}=|\frac{\partial\boldsymbol{\rho}^{c}}{\partial[u,v]^{\top}}|\left[\begin{array}{c}
  \sigma_{u}\\
  \sigma_{v}
  \end{array}\right].
\end{equation}
Expanding the Jacobian matrix by using the chain rule yields:
\begin{equation}
\frac{\partial\boldsymbol{\rho}^{c}}{\partial[u,v]^{\top}}=\frac{\partial\boldsymbol{\rho}^{c}}{\partial[x_{n},y_{n}]^{\top}}\frac{\partial[x_{n},y_{n}]^{\top}}{\partial[u,v]^{\top}}.
\end{equation}
We calculate their expressions separately:
\begin{equation}
\begin{aligned}
  \frac{\partial\boldsymbol{\rho}^{c}}{\partial x_{n}}&=\left[\begin{array}{c}
    \cos(\theta)\cdot(\frac{x_{n}}{\theta})^{2}+\sin(\theta)\cdot\left(\frac{1}{\theta}-\frac{x_{n}^{2}}{\theta^{3}}\right)\\
    \cos(\theta)\cdot\frac{x_{n}y_{n}}{\theta^{2}}-\sin(\theta)\cdot\frac{x_{n}y_{n}}{\theta^{3}}\\
    -\sin(\theta)\cdot\frac{x_{n}}{\theta}
    \end{array}\right],\\
  \frac{\partial\boldsymbol{\rho}^{c}}{\partial y_{n}}&=\left[\begin{array}{c}
    \cos(\theta)\cdot\frac{x_{n}y_{n}}{\theta^{2}}-\sin(\theta)\cdot\frac{x_{n}y_{n}}{\theta^{3}}\\
    \cos(\theta)\cdot(\frac{y_{n}}{\theta})^{2}+\sin(\theta)\cdot\left(\frac{1}{\theta}-\frac{y_{n}^{2}}{\theta^{3}}\right)\\
    -\sin(\theta)\cdot\frac{y_{n}}{\theta}
    \end{array}\right],
  \end{aligned}   
\end{equation}  
\begin{equation}\label{eq:bearing_vector_jacobian_right}
\frac{\partial[x_{n},y_{n}]^{\top}}{\partial[u,v]^{\top}}=
\begin{bmatrix}
  \frac{1}{f_{x}} & 0 \\
  0 & \frac{1}{f_{y}}
\end{bmatrix}.
\end{equation} 
When $\theta$ is small, the left matrix can be approximated as:
\begin{equation}\label{eq:approx_bearing_vector_jacobian}
\frac{\partial\boldsymbol{\rho}^{c}}{\partial[x_{n},y_{n}]^{\top}}\approx
\begin{bmatrix}
  1 & 0 \\
  0 & 1 \\
  -x_{n} & -y_{n}
\end{bmatrix}.
\end{equation}
It is noticed that (\ref{eq:approx_bearing_vector_jacobian}) indeed corresponds to the maximum entry values for $|\frac{\partial\boldsymbol{\rho}^{c}}{\partial[x_{n},y_{n}]^{\top}}|$; as $\theta$ increases from 0, the absolute values will decrease rapidly, approaching zero as $\theta$ approaches infinity. As a result, multiplying the above result with (\ref{eq:bearing_vector_jacobian_right}) gives an upper bound of the uncertainty in the bearing vector:
\begin{equation}\label{eq:bearing_jacobian}
\boldsymbol{\sigma}_{\rho} \leq [|\frac{\sigma_{u}}{f_{x}}|,|\frac{\sigma_{v}}{f_{y}}|,|\frac{\sigma_{u}x_{n}}{f_{x}}+\frac{\sigma_{v}y_{n}}{f_{y}}|]^{\top}. 
\end{equation}
Since $(c_x,c_y)$ generally represents the coordinate of the image center and hence $u_{\text{max}}\approx2c_{x}$ and $v_{\text{max}}\approx2c_{y}$, the following inequality holds:
\begin{equation}
|\frac{\sigma_{u}x_{n}}{f_{x}}+\frac{\sigma_{v}y_{n}}{f_{y}}| \leq |\frac{\sigma_{u}c_{x}}{f_{x}^{2}}+\frac{\sigma_{v}c_{y}}{f_{y}^{2}}|.
\end{equation}
Substituting this into (\ref{eq:bearing_jacobian}) yields:
\begin{equation}
\boldsymbol{\sigma}_{\rho}\leq[|\frac{\sigma_{u}}{f_{x}}|,|\frac{\sigma_{v}}{f_{y}}|,|\frac{\sigma_{u}c_{x}}{f_{x}}+\frac{\sigma_{v}c_{y}}{f_{y}}|]^{\top},
\end{equation}
which completes the proof.

\section{Proof of Proposition 1}
\label{sec:appendix_b}
Assume that $\boldsymbol{\Sigma}_\rho$ is a isotropic matrix. Inserting $\boldsymbol{\rho}^{c}=\mathbf{R}_{cw}\boldsymbol{\rho}^{w}$ into (\ref{eq:jacobian_bearing_vector}) gives:
\begin{equation}
\begin{aligned}
  \mathbf{J}&=\frac{1}{d}(\mathbf{I}_{3}-(\mathbf{R}_{cw}\boldsymbol{\rho}^{w})(\mathbf{R}_{cw}\boldsymbol{\rho}^{w})^{\top})(-\mathbf{R}_{cw})\\ 
  &=\mathbf{R}_{cw}\underset{\mathbf{A}}{\underbrace{(\mathbf{I}_{3}-\boldsymbol{\rho}^{w}(\boldsymbol{\rho}^{w})^{\top})/d}}\cdot \underset{\mathbf{I}_{3}}{\underbrace{\mathbf{R}_{cw}^{\top}\mathbf{R}_{cw}}}\\ 
  &=\mathbf{R}_{cw}\mathbf{A}.
  \end{aligned}
\end{equation}
Therefore, we can rewrite the FIM as:
\begin{equation}
\mathbf{I} = \mathbf{J}^{\top} \boldsymbol{\Sigma}_{\rho} \mathbf{J} = \mathbf{A}^{\top} (\mathbf{R}_{cw}^{\top} \boldsymbol{\Sigma}_{\rho} \mathbf{R}_{cw}) \mathbf{A} = \mathbf{A}^{\top} \boldsymbol{\Sigma}_{\rho} \mathbf{A},
\end{equation}
where the identity $\mathbf{R}_{cw}^{\top} \boldsymbol{\Sigma}_{\rho} \mathbf{R}_{cw} = \boldsymbol{\Sigma}_{\rho}$ holds true because $\boldsymbol{\Sigma}_{\rho}$ is a diagonal matrix with all diagonal entries being the same value, and $\mathbf{R}_{cw}$ is an orthogonal matrix.

\section{Proof of Proposition 2}
\label{sec:appendix_c}

We begin by considering the following geometric relationship from the camera projection model:
\begin{equation}
\frac{L}{(l/\cos\phi)}=\frac{d}{f_{x}} \Rightarrow d=\frac{L\cos\phi\cdot f_{x}}{l}.
\end{equation}
To analyze the uncertainty in depth estimation, we consider the standard error propagation formula:
\begin{equation}\label{eq:depth_uncertainty}
\sigma_{Z}=|\frac{dZ}{dl}|\cdot\sigma_{l},
\end{equation}
where $\sigma_{l}$ is the standard deviation of the measurement noise in the observed landmark size. Taking the derivative of $Z$ with respect to $l$, we obtain:
\begin{equation}
\frac{dZ}{dl}=\frac{-L\cos\phi\cdot f_{x}}{l^{2}}=\frac{-Z^{2}}{L\cos\phi\cdot f_{x}}.
\end{equation}
Now let us examine the uncertainty $\sigma_{l}$. The landmark size in the image is typically computed from the difference between two detected image features:
\begin{equation}
l=|(u_{1}+n_{u_{1}})-(u_{2}+n_{u_{2}})|
\end{equation}
where each feature location $u_{i}\sim\mathcal{N}(0,\sigma_{u})$, and $n_{u_{i}}$ denotes the associated noise. Assuming independent Gaussian noise, the standard deviation of $l$ becomes:
\begin{equation}
\sigma_{l}=\sqrt{\sigma_{u}^{2}+\sigma_{u}^{2}}=\sqrt{2}\sigma_{u}.
\end{equation}
Putting it into (\ref{eq:depth_uncertainty}), we obtain the final expression for the depth uncertainty:
\begin{equation}
\sigma_{Z}=|\frac{\sqrt{2}Z^{2}}{L\cos\phi\cdot f_{x}}|\cdot \sigma_{u}.
\end{equation}

\end{document}